\newtheorem{lemma}{Lemma}[section]
\newtheorem{theorem}{Theorem}[section]
\title{Noise Augmented Fine Tuning for Mitigating Hallucinations in Large Language Models}
\author[1]{Afshin Khadangi\orcidlink{0000-0002-0496-5219}}
\author[1]{Amir Sartipi\orcidlink{0000-0002-0124-9823}}
\author[1]{Igor Tchappi\orcidlink{0000-0001-5437-1817}}
\author[1]{Ramin Bahmani\orcidlink{0000-0002-1427-4358}}
\affil[1]{SnT, University of Luxembourg}
\affil[1]{\{afshin.khadanki, amir.sartipi, igor.tchappi, ramin.bahmani\}@uni.lu}
\date{}
\begin{document}
\maketitle

\begin{abstract}
  Large language models (LLMs) often produce inaccurate or misleading content—\emph{hallucinations}. To address this challenge, we introduce \textbf{Noise-Augmented Fine-Tuning (NoiseFiT)}, a novel framework that leverages \emph{adaptive noise injection} based on the signal-to-noise ratio (SNR) to enhance model robustness. Our contribution is threefold. First,  NoiseFiT selectively perturbs layers identified as either \emph{high-SNR} (more robust) or \emph{low-SNR} (potentially under-regularized) using a dynamically scaled Gaussian noise. Second, we further propose a \emph{hybrid loss} that combines standard cross-entropy, soft cross-entropy, and consistency regularization to ensure stable and accurate outputs under noisy training conditions. Third, a theoretical analysis proposed shows that adaptive noise injection is both \emph{unbiased} and \emph{variance-preserving}, providing strong guarantees for convergence in expectation. Moreover, empirical results on multiple test and benchmark datasets demonstrate that NoiseFiT significantly reduces hallucination rates, often \emph{improving} or \emph{matching} baseline performance in key tasks. These findings highlight the promise of noise-driven strategies for achieving robust, trustworthy language modeling without incurring prohibitive computational overhead. We have publicly released the fine-tuning logs, benchmark evaluation artifacts, and source code online at \href{https://api.wandb.ai/links/afshin-khadangi-university-of-luxembourg/kzxs9pkx}{W\&B}, \href{https://huggingface.co/akhadangi}{Hugging Face}, and \href{https://github.com/akhadangi/NoiseFiT}{GitHub}, respectively, to foster further research, accessibility and reproducibility.
\end{abstract}

\section{Introduction}\label{sec:introduction}

LLMs such as GPT-3~\cite{brown2020language} and GPT-4~\cite{openai2023gpt4}, built upon transformer architectures~\cite{vaswani2017attention}, have revolutionized the field of natural language processing by achieving state-of-the-art performance on a diverse range of tasks. Despite their impressive capabilities, these models are known to generate content that is often inaccurate or misleading—phenomena broadly referred to as \textit{hallucinations}~\cite{ji2023survey,bang2023multitask,niu2024mitigatinghallucinationslargelanguage}. The risk of such hallucinations not only arises in specialized domains such as healthcare~\cite{moor2023foundation} and finance~\cite{wu2023bloomberggpt}, where reliability is paramount, but also extends to a variety of more general-purpose benchmarks and tasks such as question answering, underscoring the urgency of developing robust mitigation strategies. Consequently, ensuring the trustworthiness of LLM outputs is critical, making the reduction of hallucinations both a technical and practical imperative for real-world adoption.

Existing strategies to reduce hallucinations include reinforcement learning from human feedback ~\cite{ouyang2022training}, self-consistency training~\cite{wang2022self}, contrastive decoding~\cite{li2022contrastive}, and adversarial training~\cite{zhu2023robust}. While these methods have achieved some success, they often come with high computational costs, increased sensitivity to hyperparameter tuning, and limited generalization across varying domains~\cite{zhao2023survey,zellers2019defending}. These limitations underscore the need for alternative approaches that can robustly improve model performance without incurring prohibitive resource demands.

Recent research has increasingly focused on noise injection as a means to enhance model robustness. Early work in image restoration demonstrated the efficacy of learning from noisy data~\cite{lehtinen2018noise2noise}, and this idea has since been adapted for natural language processing. In the context of LLM fine-tuning, noise injection techniques have shown promising results. For instance, methods such as noise perturbation fine-tuning for robust quantization~\cite{wang2024taming} and the use of noisy embeddings to improve instruction fine-tuning~\cite{jain2023neftune} illustrate that controlled noise can help models generalize better under diverse and challenging conditions. Additional studies have revealed hidden capabilities of noise injection in reducing overconfidence and mitigating model biases~\cite{tice2024noise,yadav2023symnoise}, and even enhancing hallucination detection~\cite{liu2025enhancing}. Complementary evaluation frameworks, including large-scale benchmarks for hallucination evaluation~\cite{li2023halueval} and rigorous instruction-following assessments~\cite{zhou2023instruction}, further motivate the development of noise-based approaches in addressing LLM shortcomings.

In light of these advancements, we propose \textit{Noise-Augmented Fine-Tuning (NoiseFiT)}, a novel framework that integrates adaptive noise injection into the fine-tuning process of LLMs. The key innovation of NoiseFiT lies in its use of adaptive Gaussian noise injection, which is guided by the SNR of internal representations, i.e., the ratio of meaningful information to background noise. By selectively perturbing transformer layers based on the behavior of internal hidden states~\cite{sriramanan2024llmcheck}, our approach aims to directly enhance robustness against hallucinations while minimizing disruption to overall performance. This strategy is further supported by recent studies demonstrating the benefits of noise regularization and stochastic perturbations in fine-tuning settings~\cite{wang2023noise,wu2020boosting,hua2023improving}.

The central research questions motivating this work are:
\begin{itemize}
    \item \textbf{RQ1:} \textit{How does adaptive noise injection during fine-tuning help mitigate hallucinations in the outputs of LLMs?}
    \item \textbf{RQ2:} \textit{What is the relationship between the intensity of noise injection and the trade-off between robustness and task performance across diverse applications?}
    \item \textbf{RQ3:} \textit{How does the proposed NoiseFiT framework affect the computational efficiency and scalability of hallucination mitigation?}
\end{itemize}

To address these questions, our methodology introduces a hybrid loss objective that combines standard cross-entropy with soft target regularization and consistency-based penalties computed over multiple noisy forward passes. This formulation not only ensures that the model retains its predictive capabilities under perturbation but also enforces consistency in its outputs—a key requirement for robust performance in safety-critical applications. Furthermore, our mathematical analysis establishes theoretical properties of the adaptive noise injection mechanism, including its unbiasedness and variance-preserving characteristics.

Empirical evaluations conducted on multiple benchmarks, such as the GPQA~\cite{rein2024gpqa}, MUSR~\cite{sprague2023musr}, IFEval~\cite{zhou2023instruction}, BBH~\cite{suzgun2022challenging}, MATH~\cite{hendrycks2024measuring}, and MMLU-Pro~\cite{wang2024mmlu} datasets, validate the effectiveness of NoiseFiT. Our experiments demonstrate a noticeable reduction in hallucinations compared to conventional fine-tuning methods, while maintaining or even improving performance on standard language understanding tasks. These results are consistent with prior findings on the benefits of noise injection in both vision~\cite{lehtinen2018noise2noise} and language domains~\cite{jain2023neftune,tice2024noise,yadav2023symnoise}.

In summary, this paper makes the following contributions:
\begin{enumerate}
    \item We introduce the NoiseFiT framework that leverages adaptive noise injection based on internal SNR, addressing the critical issue of hallucinations in LLMs \textbf{(RQ1)}.
    \item We propose a novel hybrid loss function that integrates multiple regularization techniques to ensure robust and consistent model behavior \textbf{(RQ2)}.
    \item We provide both theoretical analysis and empirical evidence demonstrating that our approach reduces hallucinations while preserving competitive performance across various tasks, while also maintaining computational efficiency and scalability \textbf{(RQ1, RQ2, RQ3)}.
\end{enumerate}

The remainder of this paper is organized as follows. Section~\ref{sec:methods} details the NoiseFiT methodology, including the adaptive noise injection strategy. Section~\ref{sec:experiments} reports the experimental setup and results, benchmarking NoiseFiT against state-of-the-art methods. 
Section \ref{sec:limitation} presents limitations and discusses the broader implications of our findings. Finally, Section~\ref{sec:dicussion} concludes this paper  and outlines future research directions.

\section{Methods}\label{sec:methods}
Our approach, \emph{Noise-Augmented Fine-Tuning}, is designed to reduce hallucinations in LLMs by integrating adaptive noise injection directly into the fine-tuning process and leveraging a hybrid loss function. Additionally, NoiseFiT selectively injects noise into specific layers based on a SNR criterion, thereby adapting perturbations to target layers with both stable (robust) and unstable (loose) activations, depending on their contribution to model behavior.
\subsection{Dataset}\label{subsec:dataset}
NoiseFiT was fine-tuned and evaluated using two distinct datasets, each comprising prompt-response pairs. The first of these, which serves as the fine-tuning dataset, includes 832 samples in its training split (with 208 samples in test split, structured similarly to the training), where each sample contains a prompt and its corresponding response. We intentionally designed this dataset to be simple and straightforward, with queries that, in many cases, large language models could be expected to answer easily. 
To achieve this, we generated the data synthetically using the GROK 3.0 Think, ensuring a broad but basic coverage of topics---from literature and history to geography and science---while maintaining concise prompts and direct responses. Our main motivation for constructing the dataset in this manner was twofold: (1) to demonstrate that our fine-tuning strategy remains effective even when starting from a minimal, uncomplicated dataset; and (2) to show that incorporating additional complexity is not strictly necessary to validate the viability of our approach. By emphasizing simplicity in content and structure, we also reduce potential confounding factors, allowing us to isolate and examine the impact of the fine-tuning methodology itself.

\subsection{Layer Selection via SNR}
To determine which layers are suited for noise injection, we perform the following steps:

\begin{enumerate}[label=(\alph*)]
    \item \textbf{Clean and Noisy Forward Passes:}  
    A clean forward pass through the model produces hidden states \(\mathbf{h}_\ell^{\text{clean}} \in \mathbb{R}^{B \times L_\ell \times H}\), where \(B\) is the batch size, \(L_\ell\) the sequence length at layer \(\ell\), and \(H\) the hidden dimensionality. In parallel, multiple noisy forward passes using adaptive noise injection yield \(\mathbf{h}_\ell^{\text{noisy}} \in \mathbb{R}^{B \times L_\ell \times H}\).

    \item \textbf{SNR Computation:}  
    The SNR helps identify transformer layers with robust or loose activations. We define:

    \begin{itemize}
        \item \textbf{Signal Metric:}  
        The signal \(S_\ell\), computed as the mean absolute activation of the clean hidden states, is given by:
        \begin{equation}
        S_\ell = \frac{1}{B \cdot L_\ell \cdot H} \sum_{b=1}^{B} \sum_{t=1}^{L_\ell} \sum_{i=1}^{H} \left| \left[\mathbf{h}_\ell^{\text{clean}}\right]_{b,t,i} \right|.
        \label{eq:signal_metric}
        \end{equation}
        
        \item \textbf{Noise Metric:}  
        The noise \(N_\ell\), estimated as the average absolute difference between noisy and clean activations, is:
        \begin{equation}
        N_\ell = \frac{1}{B \cdot L_\ell \cdot H} \sum_{b=1}^{B} \sum_{t=1}^{L_\ell} \sum_{i=1}^{H} \left| \left[\mathbf{h}_\ell^{\text{noisy}} - \mathbf{h}_\ell^{\text{clean}}\right]_{b,t,i} \right|.
        \label{eq:noise_metric}
        \end{equation}
        
        \item \textbf{SNR Definition:}  
        The SNR for layer \(\ell\) is then:
        \begin{equation}
        \text{SNR}_\ell = \frac{S_\ell}{N_\ell + \epsilon},
        \label{eq:snr_definition}
        \end{equation}
        where \(\epsilon > 0\) (e.g., \(10^{-6}\)) avoids division by zero.
    \end{itemize}

    A higher \(S_\ell\) indicates stronger activations in the clean pass, while a lower \(N_\ell\) implies less distortion under noise. Thus, higher \(\text{SNR}_\ell\) suggests more stable activations under perturbation.

    \item \textbf{Layer Selection:}  
    Given the SNR values \(\{\text{SNR}_\ell\}_{\ell=1}^L\) (with \(L\) the total number of layers), a fixed number \(k\) of layers with the highest or lowest SNR values are selected for noise injection during fine-tuning.
\end{enumerate}

\subsection{Adaptive Noise Injection}
Adaptive noise injection perturbs hidden states by injecting zero-mean Gaussian noise, scaled using robust statistics and model uncertainty. For a hidden state vector \(\mathbf{h} \in \mathbb{R}^{H}\), noise injection is defined as:
\begin{equation}
\tilde{\mathbf{h}} = \mathbf{h} + \bm{\xi}, \quad \bm{\xi} \sim \mathcal{N}(\mathbf{0}, \mathbf{I}_H),
\label{eq:basic_noise_injection}
\end{equation}
where \(\mathbf{I}_H\) is the \(H \times H\) identity matrix.

\subsubsection{Adaptive Scaling via Robust Statistics and Uncertainty}
Noise is adaptively scaled using the median \(\mu_{\text{med}} = \operatorname{median}(\mathbf{h})\) and the Median Absolute Deviation (MAD):
\begin{equation}
\operatorname{MAD}(\mathbf{h}) = \operatorname{median}\left(\left|\mathbf{h} - \mu_{\text{med}}\right|\right).
\end{equation}

\paragraph{Exponential Weighting:}  
Define a weighting function that emphasizes deviations from the median:
\begin{equation}
w(\mathbf{h}) = \exp\left(-\beta \frac{\left|\mathbf{h} - \mu_{\text{med}}\right|}{\operatorname{MAD}(\mathbf{h}) + \epsilon}\right),
\label{eq:weighting_function}
\end{equation}
where \(\beta > 0\) is a hyperparameter controlling the sensitivity to deviations, and \(\epsilon\) is a small constant to ensure numerical stability.

\paragraph{Uncertainty-Based Noise Factor:}  
To capture model uncertainty, we define a noise factor \(\eta\) that dynamically scales the noise magnitude. Two strategies are used to compute the noise factor ($\eta$):
\begin{itemize}
    \item \textbf{Using Logits:}  
If logits \(\mathbf{z} \in \mathbb{R}^{B \times L \times V}\), where \(B\) is the batch size, \(L\) is the sequence length, and \(V\) is the vocabulary size (the number of unique tokens), are available, compute the softmax probabilities. The logits \(\mathbf{z}\) are the raw, unnormalized scores output by the model, representing the likelihood of each token in the vocabulary at each sequence position. For each token position \(t \in \{1, 2, \ldots, L\}\) and vocabulary token \(k \in \{1, 2, \ldots, V\}\), the logit \(z_{t,k}\) is the score for token \(k\) at position \(t\). The softmax probabilities are:
    \begin{equation}
    p_{t,k} = \frac{\exp(z_{t,k})}{\sum_{j=1}^{V}\exp(z_{t,j})}.
    \end{equation}
    Then, for each token position \(t\), the entropy is:
    \begin{equation}
    H_t = -\sum_{k=1}^{V} p_{t,k} \log\bigl(p_{t,k} + \epsilon\bigr).
    \label{eq:entropy_per_token}
    \end{equation}
    The average entropy over the token sequence is:
    \begin{equation}
    \bar{H} = \frac{1}{L} \sum_{t=1}^{L} H_t,
    \label{eq:avg_entropy}
    \end{equation}
    and the noise factor is defined as:
    \begin{equation}
    \eta = \exp\bigl(\bar{H}\bigr).
    \label{eq:noise_factor_logits}
    \end{equation}
    
    \item \textbf{Variance of Hidden States:}  
    In the absence of logits, a pseudo-entropy is computed from the variance of the hidden states:
    \begin{equation}
    \eta = \exp\left(\frac{\operatorname{Var}(\mathbf{h})}{\mathbb{E}\bigl[\operatorname{Var}(\mathbf{h})\bigr] + \epsilon}\right).
    \label{eq:noise_factor_variance}
    \end{equation}
\end{itemize}

\paragraph{Effective Noise Scale.}
The final noise scale is computed by integrating base standard deviation \(\sigma_{\text{base}}\), a learnable scalar \(\alpha\), an external gate \(\text{noise\_gate} \in [0,1]\), the MAD, the weighting function, and the uncertainty factor:
\begin{equation}
\sigma_{\text{eff}} = \sigma_{\text{base}} \cdot \alpha \cdot \text{noise\_gate} \cdot \operatorname{MAD}(\mathbf{h}) \cdot w(\mathbf{h}) \cdot \eta.
\label{eq:effective_noise_scale}
\end{equation}

\paragraph{Final Adaptive Noise Injection.}
The perturbed hidden state is finally:
\begin{equation}
\tilde{\mathbf{h}} = \mathbf{h} + \sigma_{\text{eff}} \cdot \bm{\xi}, \quad \bm{\xi} \sim \mathcal{N}(\mathbf{0}, \mathbf{I}_H).
\label{eq:final_noise_injection}
\end{equation}

\subsection{Hybrid Loss Objective with Consistency Regularization}
To ensure robust representation learning under noise, the training objective is augmented with multiple loss components:

\subsubsection{Cross-Entropy Loss (\texorpdfstring{\(\mathcal{L}_{\text{ce}}\)}{Lce})}

A standard cross-entropy loss is computed on the clean forward pass:
\begin{equation}
\mathcal{L}_{\text{ce}} = -\frac{1}{N} \sum_{t=1}^{N} \log \Bigl(p\bigl(y_t \mid \mathbf{h}_t^{\text{clean}}\bigr)\Bigr),
\label{eq:ce_loss}
\end{equation}
where \(N\) is the number of valid tokens and \(y_t\) is the ground-truth token at iteration step \(t\).

\subsubsection{Soft Cross-Entropy Loss (\texorpdfstring{\(\mathcal{L}_{\text{soft}}\)}{Lsoft})}

To further guide the model under noise, a soft target distribution is computed from the clean logits \(\mathbf{z}^{\text{clean}}\) using temperature scaling:
\begin{equation}
p^{\text{soft}}_t = \operatorname{softmax}\Bigl(\frac{\mathbf{z}^{\text{clean}}_t}{\tau}\Bigr),
\label{eq:soft_targets}
\end{equation}
where \(\tau > 0\) is the temperature. For a noisy forward pass producing logits \(\mathbf{z}^{\text{noisy}}\), the soft cross-entropy loss is:
\begin{equation}
\mathcal{L}_{\text{soft}} = \frac{1}{N} \sum_{t=1}^{N} \operatorname{KL}\Bigl(p^{\text{soft}}_t \,\Bigl\|\, \operatorname{softmax}(\mathbf{z}^{\text{noisy}}_t)\Bigr),
\label{eq:soft_ce_loss}
\end{equation}
with \(\operatorname{KL}(\cdot \,\|\, \cdot)\) denoting the Kullback–Leibler divergence.

\subsubsection{Consistency Loss (\texorpdfstring{\(\mathcal{L}_{\text{consistency}}\)}{Lconsistency})}

To enforce stability across noisy passes, two independent noisy forward passes are performed yielding logits \(\mathbf{z}^{\text{noisy}}_1\) and \(\mathbf{z}^{\text{noisy}}_2\). The consistency loss is then defined as:
\begin{equation}
\mathcal{L}_{\text{consistency}} = \frac{1}{N} \sum_{t=1}^{N} \operatorname{KL}\Bigl(\operatorname{softmax}(\mathbf{z}^{\text{noisy}}_{1,t}) \,\Bigl\|\, \operatorname{softmax}(\mathbf{z}^{\text{noisy}}_{2,t})\Bigr).
\label{eq:consistency_loss}
\end{equation}

\subsubsection{Hybrid Loss and Final Training Objective}

The hybrid loss combines the clean and soft cross-entropy losses:
\begin{equation}
\mathcal{L}_{\text{hybrid}} = \lambda_{\text{ce}} \cdot \mathcal{L}_{\text{ce}} + \left(1 - \lambda_{\text{ce}}\right) \cdot \mathcal{L}_{\text{soft}},
\label{eq:hybrid_loss}
\end{equation}
where \(\lambda_{\text{ce}} \in [0,1]\) balances the two objectives. The final training loss, incorporating the consistency regularization, is:
\begin{equation}
\mathcal{L}_{\text{final}} = \mathcal{L}_{\text{hybrid}} + \lambda_{\text{consistency}} \cdot \mathcal{L}_{\text{consistency}},
\label{eq:final_loss}
\end{equation}
with \(\lambda_{\text{consistency}}\) governing the weight of the consistency loss.

\section{Experiments and Results}\label{sec:experiments}
\subsection{Experimental Setup}
We conducted experiments to assess the effectiveness of \emph{NoiseFiT}, implemented using \texttt{PyTorch}'s multiprocessing on four Tesla V-100 GPUs (each equipped with 32\,GB of GPU memory)~\cite{varrette2022management}. Our method incorporates adaptive noise injection, a hybrid loss function, and parameter-efficient fine-tuning (PEFT) using Low-Rank Adaptation (LoRA). We used pre-trained causal language models as base models, varying across architectures including LLaMA ~\cite{grattafiori2024llama3herdmodels}, Qwen ~\cite{bai2023qwentechnicalreport}, Gemma ~\cite{gemma_2025}, and Mistral ~\cite{jiang2023mistral7b}. The fine-tuning dataset was structured to include user prompts and assistant responses in a conversational format.

The fine-tuning process leverages the following key steps. First, the trainer selects layers for noise injection based on their SNR. The number of layers selected—set to 3 for most architectures, and to 3, 6, 12, and all layers in the case of Mistral—corresponded to those exhibiting the highest or lowest SNR values. These were identified using forward-pass statistics computed from both clean and noise-injected hidden states.
Then, zero-mean Gaussian noise was injected into the hidden states of the selected layers. The noise was adaptively scaled based on hidden state statistics, using base standard deviations of 0.001, 0.01, and 0.1, further modulated by layer-specific scaling factors.

Finally, the training objective combines three components including a standard cross-entropy (CE) loss, a soft cross-entropy loss based on noisy forward passes, and a consistency loss between two independently noise-injected passes. We used $\lambda_{\text{ce}} = 0.5$ and $\lambda_{\text{consistency}} = 0.1$ across all of the experiments. LoRA was applied with a rank of 8, targeting \texttt{q\_proj} and \texttt{v\_proj} modules, with an alpha of 16 and dropout of 0.05. In addition to the above settings, we set batch size of 4 per device, with 4 gradient accumulation steps, learning rate to $5 \times 10^{-5}$ (with a cosine scheduler, Appendix, Figure \ref{fig:training_lr_scheduling}). We used Paged AdamW in 32-bit~\cite{loshchilov2017decoupled}, with mixed precision (FP16) and gradient clipping at 1.0 as the optimizer by setting number of epochs to 5, with a maximum of 1000 steps. Training histories were logged using Weights \& Biases (Appendix, Figures \ref{fig:training_loss} and \ref{fig:training_gradnorm})~\cite{wandb}.

\subsection{Results}
\subsubsection{LLM Leaderboard Benchmarks}
Table~\ref{tab:merged_results} summarizes the performance of various model configurations, spanning multiple model families (Llama 1B/3B, Qwen 0.5B, Gemma 3-1B, and Mistral 7B), derived from the leaderboard evaluation task benchmarks~\cite{eval-harness}. Each model family is evaluated by varying:
\begin{enumerate}[label=(\roman*)]
    \item \textbf{\#Layers:} The number of layers selected for adaptive noise injection (where applicable), with \texttt{All} denoting full-layer injection and \texttt{BaseFiT} indicating a fine-tuning with no noise setup (have used cross-entropy loss for fine-tuning).
    \item \textbf{STD:} The base standard deviation for noise was typically chosen from the set \(\{0.01,\, 0.1\}\). Increasing this value results in stronger perturbations.
    \item \textbf{SNR:} \textit{Highest} layers first (favoring more robust activations) vs. \textit{Lowest} layers first (targeting weaker activations), or \texttt{N/A} (e.g., when no targeted noise injection is performed).
\end{enumerate}
\paragraph{Impact of Noise Levels:}
As demonstrated in Table~\ref{tab:merged_results}, injecting noise at various levels (\texttt{STD}=0.01, 0.1, or 0.3) can confer notable performance advantages across multiple tasks and model families. Although higher noise levels (e.g., \texttt{STD}=0.3) are sometimes associated with greater instability, moderate levels (\texttt{STD}=0.01 or 0.1) frequently yield improvements in domains such as \texttt{Math} or \texttt{BBH}. For instance, Llama-1B exhibits enhanced \texttt{Math} accuracy (0.17) under \texttt{STD}=0.1 when targeting layers with high SNR, signifying that carefully calibrated noise can strengthen certain forms of reasoning. 
\paragraph{Layer Selection via SNR:}
Results for \emph{Highest} vs.\ \emph{Lowest} SNR noise injected layers reveal that directing noise to specific layers can accentuate its benefits. For example, Mistral-7B achieves its highest \texttt{BBH} score (45.84) when noise is restricted to three \textit{Lowest}-SNR layers, while Llama-1B attains superior \texttt{Math} performance (0.17) by injecting noise into the \textit{Highest}-SNR layers. These outcomes highlight the importance of selectively targeting layers based on SNR profiles, indicating that the optimal approach may vary according to both the model architecture and the specific task objectives.

\begin{table}[ht]
\centering
\caption{Leaderboard benchmark evaluation results of model families across different \#Layers, STD, and SNR}
\label{tab:merged_results}
\resizebox{\textwidth}{!}{
\begin{tabular}{@{}cccccccccc@{}}
\toprule
\textbf{Model} & \textbf{\#Layers} & \textbf{STD} & \textbf{SNR} & \textbf{MMLU-Pro} & \textbf{BBH} & \textbf{GPQA} & \textbf{Math} & \textbf{IFEval} & \textbf{MUSR} \\
\hline
\multirow{5}{*}{\rotatebox{90}{Llama3.2-1B}}
 & 3    & 0.01 & Highest & \textbf{12.28} & 31.48 & 24.22 & 0.00 &  7.58 & 32.46 \\
 & 3    & 0.01 & Lowest  & 12.05 & \textbf{31.91} & \textbf{24.90} & 0.07 &  7.02 & 32.07 \\
 & BaseFiT & N/A  & N/A     & 11.86 & 31.90 & 24.84 & 0.07 &  \textbf{8.69} & 32.21 \\
 & 3    & 0.1  & Highest & 11.79 & 31.72 & 23.99 & \textbf{0.17} &  7.95 & \textbf{33.54} \\
 & 3    & 0.1  & Lowest  & 11.67 & 31.05 & 24.56 & 0.05 &  6.47 & 32.75 \\
\cmidrule(r){2-10}
\multirow{5}{*}{\rotatebox{90}{Llama3.2-3B}}
 & 3    & 0.01 & Lowest  & \textbf{25.85} & 38.56 & 25.91 & 0.80 &  9.43 & 34.05 \\
 & BaseFiT & N/A  & N/A     & 25.81 & 38.27 & 25.41 & 1.31 &  9.98 & 34.59 \\
 & 3    & 0.1  & Highest & 25.52 & \textbf{39.38} & 27.83 & 1.32 &  8.87 & 33.79 \\
 & 3    & 0.01 & Highest & 25.29 & 38.64 & 26.49 & \textbf{1.52} &  9.80 & \textbf{35.65} \\
 & 3    & 0.1  & Lowest  & 25.27 & 38.39 & \textbf{28.08} & 1.04 & \textbf{10.91} & 34.06 \\
\cmidrule(r){2-10}
\multirow{5}{*}{\rotatebox{90}{Qwen2.5-.5B}}
 & 3    & 0.01 & Highest & \textbf{18.75} & 31.84 & \textbf{28.36} & 0.45 & 14.60 & 34.20 \\
 & 3    & 0.01 & Lowest  & 18.56 & 31.45 & 25.13 & 0.58 & 13.31 & \textbf{35.40} \\
 & 3    & 0.1  & Lowest  & 18.51 & 31.80 & 25.07 & 0.42 & 13.68 & 34.59 \\
 & 3    & 0.1  & Highest & 18.29 & 31.77 & 28.08 & 0.68 & 12.20 & 34.05 \\
 & BaseFiT & N/A  & N/A     & 17.43 & \textbf{32.17} & 26.89 & \textbf{0.70} & \textbf{15.16} & 34.59 \\
\cmidrule(r){2-10}
\multirow{5}{*}{\rotatebox{90}{Gemma3-1B}}
 & BaseFiT & N/A   & N/A     & \textbf{14.92} & 35.11 & 28.00 & 4.74 & 37.52 & 32.75 \\
 & 3    & 0.001 & Lowest  & 14.85 & 34.25 & 27.88 & 4.40 & 34.57 & \textbf{33.95} \\
  & 3    & 0.1   & Highest & 13.63 & \textbf{35.14} & 27.51 & 2.15 & 29.21 & 31.01 \\
 & 3    & 0.01  & Lowest  & 14.59 & 34.54 & 26.94 & 4.45 & 38.08 & 33.02 \\
 & 3    & 0.01  & Highest & 14.37 & 34.84 & \textbf{28.39} & \textbf{5.08} & \textbf{39.37} & 33.41 \\
\cmidrule(r){2-10}
\multirow{21}{*}{\rotatebox{90}{Mistral7B-v0.1}}
 & 6    & 0.1   & Highest & \textbf{30.28} & 44.63 & 29.46 & 2.69 & 13.31 & 39.51 \\ 
 & 12   & 0.001 & Highest & 30.14 & 44.32 & 29.01 & 2.49 & 13.68 & 39.37 \\
 & All  & 0.01  & N/A     & 30.14 & 45.12 & 29.17 & 2.70 & 13.86 & 40.98 \\
 & BaseFiT & N/A   & N/A     & 30.01 & 44.34 & 29.29 & 2.97 & 11.46 & 38.84 \\
 & 12   & 0.1   & Lowest  & 30.01 & 43.59 & 28.86 & 2.52 & 11.65 & \textbf{41.49} \\
 & 12   & 0.01  & Lowest  & 30.00 & 43.74 & 28.83 & 3.05 & 13.68 & 39.50 \\
 & 3    & 0.1   & Lowest  & 29.97 & \textbf{45.84} & 28.07 & 3.01 & 13.49 & 39.89 \\
 & 3    & 0.01  & Lowest  & 29.95 & 45.47 & 25.16 & 3.35 & 13.68 & 39.48 \\
 & 3    & 0.01  & Highest & 29.95 & 45.56 & 26.09 & 2.63 & 10.91 & 39.11 \\
 & 12   & 0.01  & Highest & 29.93 & 44.05 & 28.86 & 3.30 & 14.05 & 39.77 \\
 & 12   & 0.1   & Highest & 29.93 & 44.37 & 28.90 & 2.71 & 13.31 & 40.30 \\
 & 3    & 0.1   & Highest & 29.75 & 44.62 & 28.36 & 3.10 & 10.17 & 37.65 \\
 & 6    & 0.01  & Lowest  & 29.74 & 45.08 & 29.66 & \textbf{3.43} & 11.83 & 40.18 \\
 & 6    & 0.01  & Highest & 29.72 & 44.67 & 28.74 & 2.78 & 13.68 & 38.97 \\
 & 6    & 0.1   & Lowest  & 29.67 & 44.53 & 30.04 & 3.19 & 12.20 & 38.71 \\
 & 6    & 0.001 & Lowest  & 29.59 & 45.61 & 30.18 & 2.26 & 12.75 & 39.64 \\
 & All  & 0.001 & N/A     & 29.57 & 44.51 & 28.74 & 2.26 & 11.65 & 38.58 \\
 & 3    & 0.3   & Highest & 29.56 & 44.65 & 29.67 & 3.22 & 12.38 & 39.38 \\
 & 3    & 0.3   & Lowest  & 29.53 & 44.53 & \textbf{30.54} & 2.75 & 10.91 & 39.51 \\
 & 12   & 0.001 & Lowest  & 29.24 & 45.16 & 28.48 & 1.69 & \textbf{14.42} & 41.23 \\
 & All  & 0.1   & N/A     & 29.00 & 44.81 & 29.69 & 2.45 & 11.65 & 40.31 \\
\hline
\end{tabular}
}
\end{table}

\paragraph{BaseFiT vs Noise-Injected Runs:}
Comparisons with the \texttt{BaseFiT} baseline (fine-tuning without noise) underscore the capacity of noise injection to surpass baseline results in multiple settings. For instance, Qwen-0.5B with \texttt{STD}=0.01 in the high-SNR configuration outperforms \texttt{BaseFiT} on \texttt{MMLU-Pro} (18.75 vs.\ 17.43) and \texttt{GPQA} (28.36 vs.\ 26.89). Similarly, Gemma-1B realizes substantial gains in majority of the tasks under targeted noise conditions. These findings demonstrate that noise-injected configurations can frequently exceed baseline performance when the noise parameters and layer selections are carefully optimized.

\paragraph{Task-Specific Observations:}
Across a diverse set of evaluation benchmarks, the impact of noise injection varies by task type and model architecture, but notable patterns emerge. In several cases, injecting moderate levels of noise appears to improve performance, suggesting it may act as a form of regularization or stimulus for deeper reasoning:

\begin{itemize}
    \item \textbf{Math:}
    Table~\ref{tab:merged_results} shows that moderate noise (\texttt{STD}=0.01 or 0.1) can substantially improve Math accuracy. For example, Llama-1B’s score rises from 0.05 to 0.17 under \texttt{STD}=0.1 (highest-SNR layers), and Gemma-1B reaches 5.08 (vs.\ 4.74) under \texttt{STD}=0.01 (highest SNR). These improvements suggest that carefully tuned noise benefits numerical reasoning.

    \item \textbf{BBH and MMLU-Pro:} These broader language understanding benchmarks often show moderate fluctuation with noise, yet select configurations demonstrate that noise can push performance above the baseline. In Llama-3B, for example, \texttt{BBH} rises to 39.38 under \texttt{STD}=0.1 (highest SNR), exceeding the BaseFiT score of 38.27. Meanwhile, Mistral-7B attains 45.84 on \texttt{BBH} when injecting noise into three lowest-SNR layers at \texttt{STD}=0.1, reflecting a noticeable jump compared to the baseline (44.34). On MMLU-Pro, for example, Qwen-0.5B at \texttt{STD}=0.01 (highest SNR) rises to 18.75 from a baseline of 17.43. These results confirm that noise, particularly at moderate levels, can be harnessed to refine performance in language understanding tasks.

    \item \textbf{GPQA:}
    Detailed inspection of the GPQA results shows consistent gains under targeted noise strategies. Llama-3B moves from 25.41 (BaseFiT) to 27.83 (\texttt{STD}=0.1, Highest SNR), while Qwen-0.5B increases from 26.89 (BaseFiT) to 28.36 (\texttt{STD}=0.01, Highest SNR). Gemma-1B also achieves its top GPQA score (28.39) with \texttt{STD}=0.01 in the Highest-SNR layers, surpassing the base 28.00. Notably, Mistral-7B records 30.54 (\texttt{STD}=0.3, three Lowest-SNR layers), indicating that, with the right noise level and layer selection, graduate-level question-answering performance can be enhanced across various model families.

    \item \textbf{IFEval and MUSR:}
    Noise can also yield performance improvements in following instructions (IFEval) and multistep soft reasoning (MUSR). In Gemma-1B, IFEval rises from 37.52 (BaseFiT) to 39.37 (\texttt{STD}=0.01, Highest SNR), and MUSR increases from 32.75 to 33.41 under the same setting. Likewise, Mistral-7B achieves up to 14.42 on IFEval (\texttt{STD}=0.001, 12 Lowest-SNR layers) compared to 11.46 at BaseFiT, and elevates MUSR from 38.84 (BaseFiT) to 41.49 (\texttt{STD}=0.1, 12 Lowest-SNR layers). While not all configurations consistently outperform the baseline, these cases confirm that strategic noise injection can benefit tasks requiring interpreting instructions and multistep reasoning.
\end{itemize}

\paragraph{Consistency Across Model Families:}
Despite variability in architecture and scale (Llama, Qwen, Gemma, Mistral), several consistent observations emerge:

\begin{itemize}
    \item \textbf{Efficacy of moderate noise across model-task configurations:} While high noise magnitudes (e.g., \texttt{STD}=0.3) can induce instability in performance, moderate perturbation levels (\texttt{STD}=0.01 or 0.1) frequently yield consistent gains across a range of tasks and architectures.
    \item \textbf{Targeted perturbation via layer-wise selection:} Constraining noise application to specific layer subsets—such as those with the \textit{highest} or \textit{lowest} signal-to-noise ratios—enables more precise control over performance modulation, highlighting the utility of structurally selective noise injection.
    \item \textbf{Augmenting \texttt{BaseFiT} with stochastic refinement:} Although \texttt{BaseFiT} establishes a robust baseline, many noise-augmented configurations achieve comparable or superior results on specific benchmarks, suggesting that noise injection can function as an effective complement or enhancement to traditional fine-tuning methodologies.
\end{itemize}

\subsubsection{Test Dataset}
In order to evaluate our fine-tuning approach, we used a test dataset consisting of 208 unique prompts. We employed the same models as the base, incorporating a PEFT adapter. The entire generation procedure was accelerated across multiple GPUs. We employed GROK 3.0 Think~\cite{grok3think2025} to assess the hallucination performance in the generated responses (\href{https://noiseft.github.io}{online supplementary material}). 

The results illustrate the effects of noise injection on the performance of various models across multiple categories (Appendix, Tables \ref{tab:appendix_llama3.2.1b}-\ref{tab:appendix_mistral_7b_v0.1}). A consistent trend observed across models is that noise-injection under various scenarios, often outperform their respective base models, suggesting that controlled noise can improve the models' ability to produce less hallucinated responses and handle diverse inputs, potentially by mimicking real-world data variability.


\section{Discussion and limitations}\label{sec:limitation}
Our experiments demonstrate that NoiseFiT significantly enhances the performance of LLMs across various leaderboard benchmarks, showcasing its ability to improve model robustness while reducing hallucinations. Across diverse model architectures, including Llama-1B, Llama-3B, and Mistral-7B, NoiseFiT consistently delivered competitive or superior results on key metrics such as MMLU-Pro, BBH, GPQA, and MUSR compared to baseline fine-tuning methods. 
For example, in the Llama-3B model, NoiseFiT improved MMLU-Pro from 25.81 to 25.85 and BBH from 38.27 to 38.56 with noise injection. Similarly, for Mistral-7B, MMLU-Pro increased from 30.01 to 30.28, and MUSR rose from 38.84 to 39.51. 
Even in cases where results were more variable such as a slight dip in BBH from 31.90 to 31.48 in Llama-1B, the overall performance remained competitive, with gains in other metrics like MUSR underscoring NoiseFiT’s effectiveness.

The leaderboard results reveal nuanced trends across tasks and configurations. For example, in Mistral-7B, injecting noise into three layers with the lowest SNR and with a standard deviation of 0.1 yielded a MUSR score of 41.49. In contrast, tasks like Math and IFEval displayed mixed outcomes across different models and configurations, such as a slight decline in GPQA from 24.84 to 24.22 in Llama-1B. The previous suggests that NoiseFiT’s impact is task-dependent and that careful tuning of parameters such as the number of perturbed layers, base standard deviation, and SNR profile is necessary for optimal performance. Nevertheless, the overarching trend across \textbf{MMLU-Pro}, \textbf{BBH}, and \textbf{MUSR} indicates that NoiseFiT strengthens LLM reliability without sacrificing capability.

Evaluations on a custom test dataset of 208 prompts across 17 categories further confirm NoiseFiT’s effectiveness in hallucination reduction, especially in knowledge-intensive tasks. For instance, in the Mistral-7B model, NoiseFiT improved performance in categories like ``Medical (Disease Causes)'' from 93.3\% to 100.0\% and ``Scientific Discoveries'' from 81.2\% to 88.2\%, demonstrating its ability to enhance factual accuracy.  However, other categories such as ``Animals'' and ``Art (Painting Subjects)'',  showed less improvement (with scores below the base model), reflecting the approach's task-specific strengths and weaknesses. This mirrors the variability observed in leaderboard metrics like Math and IFEval, reinforcing that while NoiseFiT excels in certain domains, its effectiveness can vary, necessitating tailored approaches for optimal results.

NoiseFiT’s success is supported by its adaptive noise injection mechanism, which uses layer-wise SNR profiling for selective perturbation. Its hybrid loss objective—comprising cross-entropy, soft cross-entropy, and consistency regularization—enables effective training under noisy conditions. Theoretical analyses further confirm the unbiased and variance-preserving nature of the noise injection.

Despite these advantages, NoiseFiT requires careful task-specific tuning of its noise parameters, which can be resource-intensive. This dependency on manual configuration presents a notable limitation, particularly when scaling to broader applications or adapting across domains.

\section{Conclusion and Future Work}\label{sec:dicussion}
This paper presented NoiseFiT, a framework for enhancing LLM robustness through adaptive noise injection and hybrid loss optimization. NoiseFiT proves to be an effective method for enhancing LLM robustness and mitigating hallucinations, consistently outperforming or matching baseline fine-tuning methods across a range of tasks and model architectures. Empirical results from both leaderboard benchmarks and a custom evaluation dataset support its efficacy, while theoretical  provide additional confidence in its design.

Future work will focus on automating the tuning of noise parameters such as through hyperparameter optimization or meta-learning to reduce the overhead associated with adapting NoiseFiT to new tasks. Moreover, applying NoiseFiT in high-stakes environments like healthcare and finance could further validate its utility in real-world applications where reliability is essential. These directions, combined with demonstrated improvements in hallucination reduction, establish NoiseFiT as a promising path forward for scalable and trustworthy language model fine-tuning.

\section{Acknowledgment}
This research was funded by the Luxembourg National Research Fund (FNR) and PayPal, PEARL grant reference 13342933/Gilbert Fridgen, and grant reference NCER22/IS/16570468/NCER-FT, and the Ministry of Finance of Luxembourg through the FutureFinTech National Centre of Excellence in Research and Innovation. For the purpose of open access, and in fulfillment of the obligations arising from the grant agreement, the author has applied a Creative Commons Attribution 4.0 International (CC BY 4.0) license to any Author Accepted Manuscript version arising from this submission.

\clearpage


\clearpage
\appendix
\counterwithin{table}{section}
\counterwithin{figure}{section}
\counterwithin{algorithm}{section}
\section{NoiseFiT Algorithm}\label{sec:noisefit_algorithm}
Algorithm~\ref{alg:noiseFit} summarizes the operational mechanics of the NoiseFiT framework. It outlines the key steps involved in our approach—from data and model preparation, performing clean and noisy forward passes, and computing the SNR for each transformer layer, to select layers for adaptive noise injection and finally calculating the hybrid loss components for backpropagation. This high-level structure serves as a blueprint for implementing our NoiseFiT algorithm.

\begin{algorithm}[h!]
\caption{NoiseFiT Algorithm}
\label{alg:noiseFit}
\begin{algorithmic}[1]
\State \textbf{Input:} Training data $\mathcal{D}$, pretrained model $\mathcal{M}$, number of layers $k$, hyperparameters $\lambda_{\text{ce}}$, $\lambda_{\text{consistency}}$, $\tau$, etc.
\State \textbf{Output:} Fine-tuned model $\mathcal{M}^*$
\State \textbf{Step 1: Data and Model Preparation}
\State \quad Load dataset and format each sample (prompt, response).
\State \quad Initialize tokenizer and model $\mathcal{M}$.
\State \textbf{Step 2: Clean Forward Pass}
\State \quad For a batch, run a clean forward pass to obtain hidden states $\mathbf{h}_\ell^{\text{clean}} \in \mathbb{R}^{B \times L_\ell \times H}$.
\State \textbf{Step 3: Noisy Forward Passes}
\State \quad Enable noise hooks for the forward passes.
\State \quad Run multiple forward passes with adaptive noise injection to obtain $\mathbf{h}_\ell^{\text{noisy}} \in \mathbb{R}^{B \times L_\ell \times H}$.
\State \textbf{Step 4: SNR Computation (for all of the layers $\ell$)}
\State \quad Compute signal: \( S_\ell = \frac{1}{B\cdot L_\ell \cdot H}\sum_{b,t,i}\left|\left[\mathbf{h}_\ell^{\text{clean}}\right]_{b,t,i}\right| \).
\State \quad Compute noise: \( N_\ell = \frac{1}{B\cdot L_\ell \cdot H}\sum_{b,t,i}\left|\left[\mathbf{h}_\ell^{\text{noisy}} - \mathbf{h}_\ell^{\text{clean}}\right]_{b,t,i}\right| \).
\State \quad Compute SNR: \( \text{SNR}_\ell = \frac{S_\ell}{N_\ell + \epsilon} \).
\State \textbf{Step 5: Layer Selection}
\State \quad Select $k$ layers with the highest (or lowest) $\text{SNR}_\ell$ values for noise injection.
\State \textbf{Step 6: Loss Computation}
\State \quad \textbf{(a) Cross-Entropy Loss:} \( \mathcal{L}_{\text{ce}} = -\frac{1}{N}\sum_{t=1}^{N}\log\bigl(p(y_t\mid\mathbf{h}_t^{\text{clean}})\bigr) \).
\State \quad \textbf{(b) Soft Cross-Entropy Loss:} \( \mathcal{L}_{\text{soft}} = \frac{1}{N}\sum_{t=1}^{N}\operatorname{KL}\Bigl(p^{\text{soft}}_t \,\Bigl\|\, \operatorname{softmax}(\mathbf{z}^{\text{noisy}}_t)\Bigr) \), where \( p^{\text{soft}}_t = \operatorname{softmax}\Bigl(\frac{\mathbf{z}^{\text{clean}}_t}{\tau}\Bigr) \).
\State \quad \textbf{(c) Consistency Loss:} \( \mathcal{L}_{\text{consistency}} = \frac{1}{N}\sum_{t=1}^{N}\operatorname{KL}\Bigl(\operatorname{softmax}(\mathbf{z}^{\text{noisy}}_{1,t}) \,\Bigl\|\, \operatorname{softmax}(\mathbf{z}^{\text{noisy}}_{2,t})\Bigr) \).
\State \textbf{Step 7: Final Loss and Backpropagation}
\State \quad Compute hybrid loss: \( \mathcal{L}_{\text{hybrid}} = \lambda_{\text{ce}}\,\mathcal{L}_{\text{ce}} + (1-\lambda_{\text{ce}})\,\mathcal{L}_{\text{soft}} \).
\State \quad Compute overall loss: \( \mathcal{L}_{\text{final}} = \mathcal{L}_{\text{hybrid}} + \lambda_{\text{consistency}}\,\mathcal{L}_{\text{consistency}} \).
\State \quad Backpropagate \( \mathcal{L}_{\text{final}} \) and update model parameters.
\end{algorithmic}
\end{algorithm}

\begin{figure}[htbp]
  \centering
  \includegraphics[
    width=\textwidth,      
    keepaspectratio        
  ]{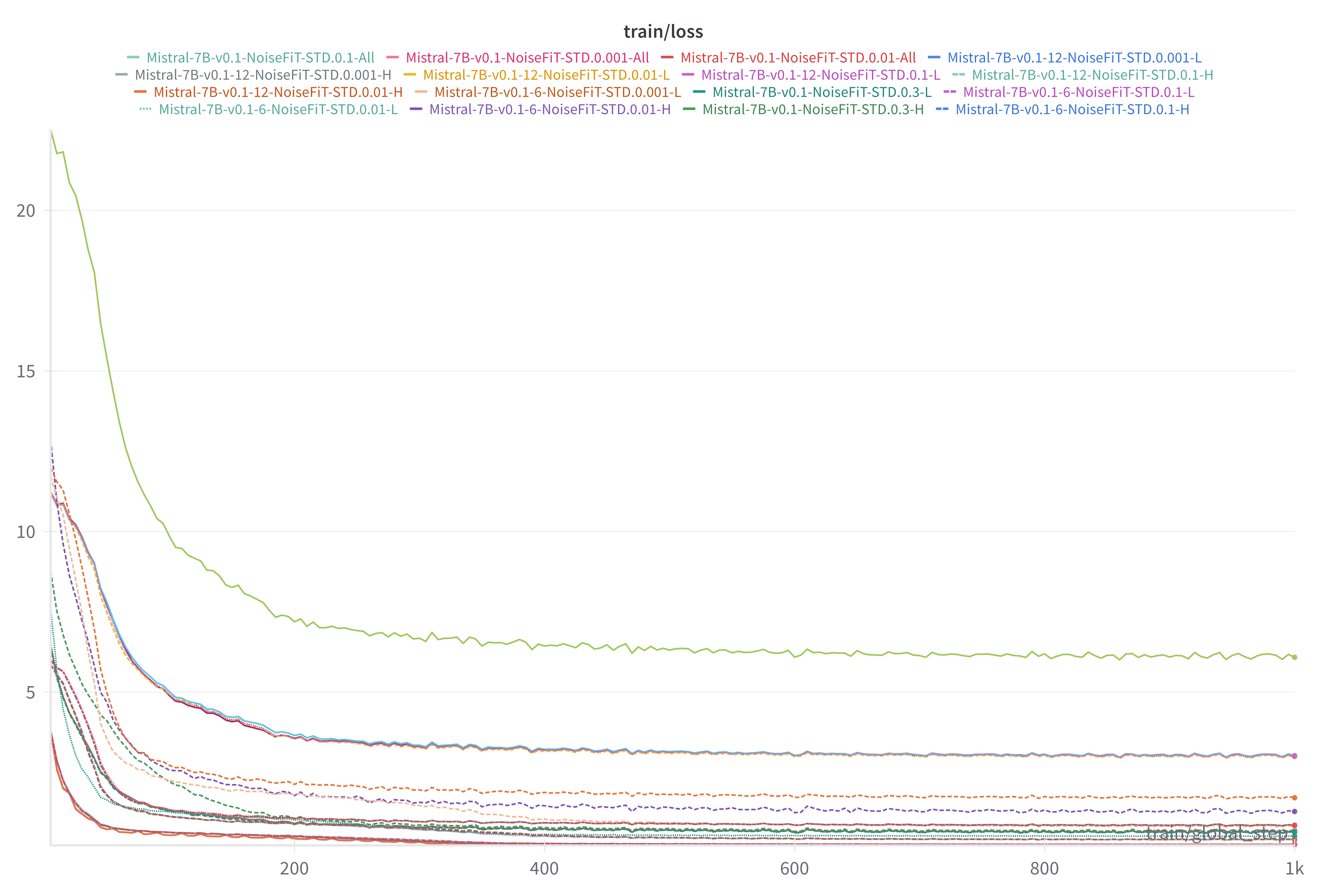}
  \caption{NoiseFiT training loss history per model, noise injection STD and layer selection strategy. Available in interactive mode online at \href{https://api.wandb.ai/links/afshin-khadangi-university-of-luxembourg/kzxs9pkx}{W\&B}.}
  \label{fig:training_loss}
\end{figure}

\begin{figure}[htbp]
  \centering
  \includegraphics[
    width=\textwidth,      
    keepaspectratio        
  ]{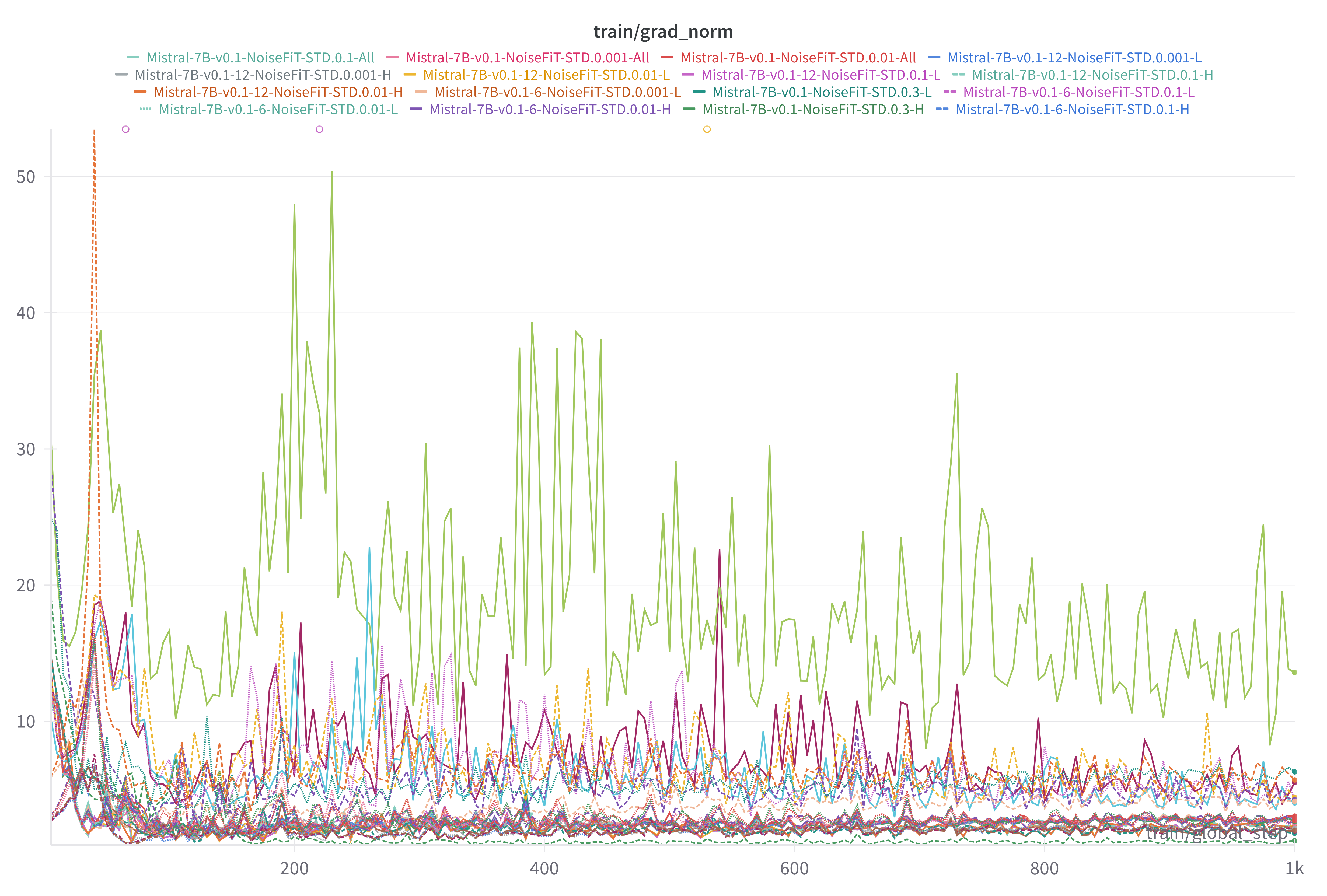}
  \caption{NoiseFiT training gradients norm history per fine-tuning step for different models, noise injection STD and layer selection strategies. Available in interactive mode online at \href{https://api.wandb.ai/links/afshin-khadangi-university-of-luxembourg/kzxs9pkx}{W\&B}.}
  \label{fig:training_gradnorm}
\end{figure}

\begin{figure}[htbp]
  \centering
  \includegraphics[
    width=\textwidth,      
    keepaspectratio        
  ]{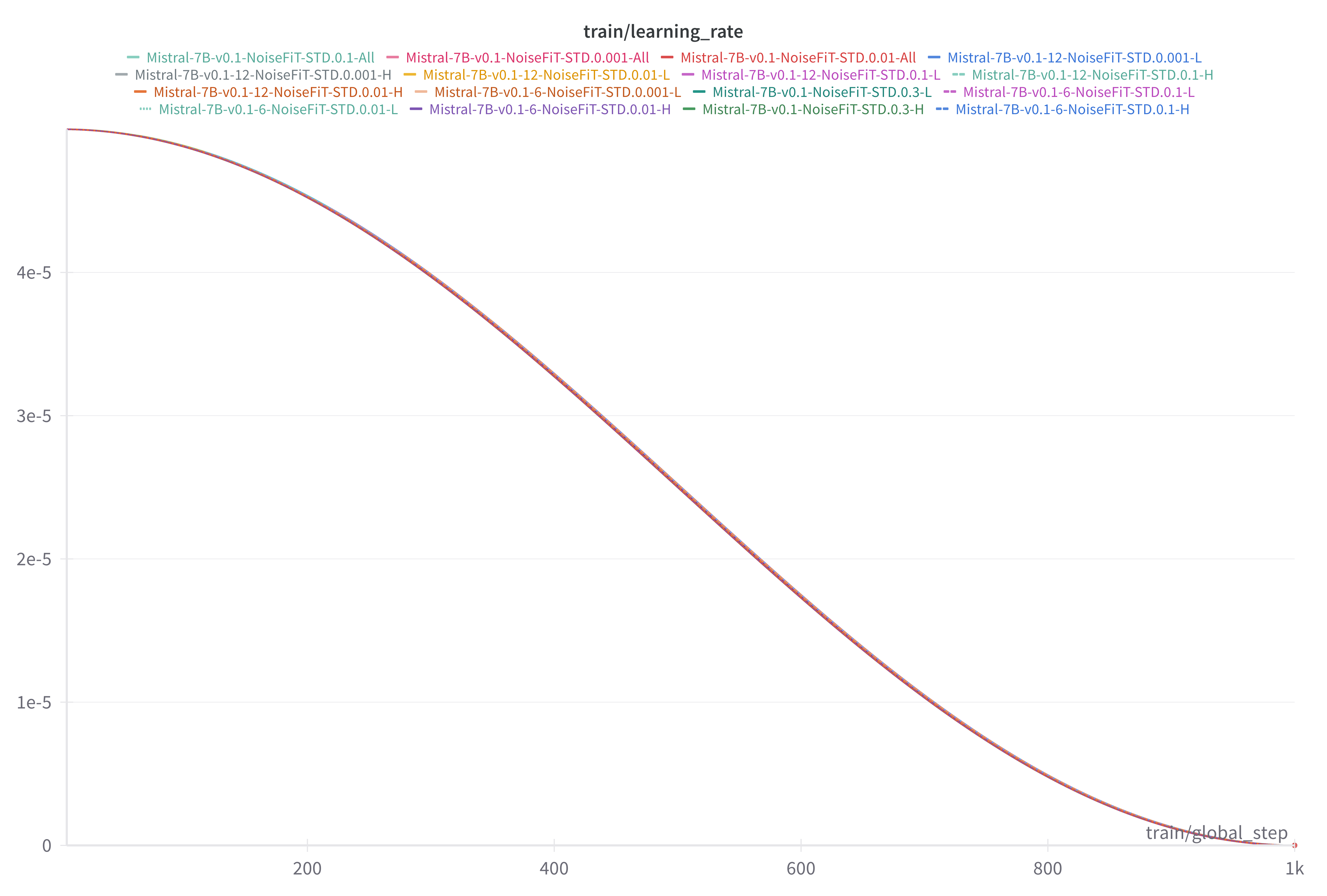}
  \caption{NoiseFiT training learning rate scheduling strategy across all experiments. Available in interactive mode online at \href{https://api.wandb.ai/links/afshin-khadangi-university-of-luxembourg/kzxs9pkx}{W\&B}.}
  \label{fig:training_lr_scheduling}
\end{figure}

\section{NoiseFiT Mathematical Foundations}\label{sec:math_foundation}
In this section, we introduce the theoretical underpinnings of our NoiseFiT framework. We begin by outlining the core assumptions for unbiased noise injection and describe how these assumptions inform the variance-preserving characteristics of our approach. In particular, we provide high-level insights into why adaptive noise regularization improves generalization and stability, setting the stage for the formal lemmas and theorems that follow.

\subsection{Unbiased Noise Injection and Variance Preservation}
\subsubsection{Zero-Mean Noise}
\begin{lemma}
\label{lemma:zero_mean}
Let \(\bm{\xi}\) be an \(n\)-dimensional random vector distributed as
\[
\bm{\xi} \sim \mathcal{N}(\mathbf{0}, \mathbf{I}),
\]
that is, each component \(\xi_i\) of \(\bm{\xi}\) is an independent standard normal random variable with mean 0 and variance 1. Then the expectation of \(\bm{\xi}\) is the zero vector:
\[
\mathbb{E}[\bm{\xi}] = \mathbf{0}.
\]
\end{lemma}
\begin{proof}
A random vector \(\bm{\xi} \in \mathbb{R}^n\) has the distribution
\(\mathcal{N}(\mathbf{0}, \mathbf{I})\) if and only if each component
\(\xi_i\) (for \(i = 1, 2, \ldots, n\)) is distributed according to a one-dimensional standard normal distribution
\(\mathcal{N}(0,1)\), and all components are mutually independent.
The joint PDF of \(\bm{\xi}\) can be written as
\[
f_{\bm{\xi}}(\mathbf{x}) \;=\; \frac{1}{\sqrt{(2\pi)^n}} \exp\Bigl(-\tfrac{1}{2}\|\mathbf{x}\|^2\Bigr),
\]
where \(\|\mathbf{x}\|^2 = \sum_{i=1}^n x_i^2\). This density is
\emph{spherically symmetric} around the origin \(\mathbf{0} \in \mathbb{R}^n\).
The expectation of \(\bm{\xi}\) is the vector of expectations of its components:
\[
\mathbb{E}[\bm{\xi}] \;=\;
\bigl(\mathbb{E}[\xi_1],\, \mathbb{E}[\xi_2],\, \ldots,\, \mathbb{E}[\xi_n]\bigr).
\]
Equivalently, we can write
\[
\mathbb{E}[\bm{\xi}] \;=\; \int_{\mathbb{R}^n} \mathbf{x}\, f_{\bm{\xi}}(\mathbf{x}) \, d\mathbf{x}.
\]
Since \(\xi_i \sim \mathcal{N}(0,1)\) for each \(i\), we know by the definition of the standard normal distribution that
\[
\mathbb{E}[\xi_i] \;=\; 0, \quad \text{for each } i=1, 2, \ldots, n.
\]
Hence, immediately we have
\[
\mathbb{E}[\bm{\xi}] \;=\; \bigl(0, 0, \ldots, 0\bigr).
\]
We can also see this from the integral form. For each component \(\xi_i\),
\[
\mathbb{E}[\xi_i]
\;=\;
\int_{-\infty}^\infty x_i \,
\left(
    \int_{\mathbb{R}^{n-1}} f_{\bm{\xi}}(x_1, \ldots, x_{i-1}, x_i, x_{i+1}, \ldots, x_n) 
    \,dx_1 \cdots dx_{i-1} dx_{i+1} \cdots dx_n
\right) dx_i.
\]
Because \(f_{\bm{\xi}}(\mathbf{x})\) is an even function in each \(x_i\) due to its form \(\exp(-\tfrac12 \|\mathbf{x}\|^2)\) and the domain of integration is symmetric (\(\mathbb{R}^n\)), the integral of \(x_i f_{\bm{\xi}}(\mathbf{x})\) over \(\mathbb{R}^n\) is zero. This confirms
\(\mathbb{E}[\xi_i] = 0\) for every \(i\).
By combining the component-wise results, it follows that
\[
\mathbb{E}[\bm{\xi}]
\;=\;
\bigl(\mathbb{E}[\xi_1],\, \mathbb{E}[\xi_2],\, \ldots,\, \mathbb{E}[\xi_n]\bigr)
\;=\;
\bigl(0, 0, \ldots, 0\bigr)
\;=\;
\mathbf{0}.
\]
This completes the proof.
\end{proof}


\subsubsection{Unbiasedness of Noisy Representations}
\begin{theorem}
\label{theorem:unbiased}
Let \(\mathbf{h} \in \mathbb{R}^n\) be a deterministic hidden state and consider its noisy counterpart 
\[
\tilde{\mathbf{h}} \;=\; \mathbf{h} \;+\; \sigma_{\text{eff}} \cdot \bm{\xi},
\]
where \(\sigma_{\text{eff}}\in \mathbb{R}\) is a deterministic noise scale. Then,
\[
\mathbb{E}[\tilde{\mathbf{h}}] \;=\; \mathbf{h}.
\]
\end{theorem}

\begin{proof}
We wish to show that the expectation of the noisy hidden state \(\tilde{\mathbf{h}}\), which is formed by adding Gaussian noise scaled by \(\sigma_{\text{eff}}\), remains equal to the original deterministic hidden state \(\mathbf{h}\). Mathematically, we want to prove \(\mathbb{E}[\tilde{\mathbf{h}}] = \mathbf{h}\). By definition, we have
\[
\tilde{\mathbf{h}} 
\;=\; \mathbf{h} \;+\; \sigma_{\text{eff}} \cdot \bm{\xi}.
\]
Since \(\mathbf{h}\) and \(\sigma_{\text{eff}}\) are deterministic, the only source of randomness in \(\tilde{\mathbf{h}}\) is \(\bm{\xi}\). One of the key properties we use is that expectation is a \emph{linear operator}, which means:
\[
\mathbb{E}[\,a\mathbf{X} + b\mathbf{Y}\,]
\;=\;
a\,\mathbb{E}[\mathbf{X}] + b\,\mathbb{E}[\mathbf{Y}],
\]
for any random vectors \(\mathbf{X}, \mathbf{Y}\) and scalars (deterministic constants) \(a, b \in \mathbb{R}\). Applying this to
\(\tilde{\mathbf{h}} = \mathbf{h} + \sigma_{\text{eff}} \bm{\xi}\), we obtain:
\[
\mathbb{E}[\tilde{\mathbf{h}}]
\;=\;
\mathbb{E}[\mathbf{h} + \sigma_{\text{eff}} \bm{\xi}]
\;=\;
\mathbb{E}[\mathbf{h}] \;+\; \mathbb{E}[\sigma_{\text{eff}} \bm{\xi}].
\]
Since \(\mathbf{h}\) is \emph{not} random, its expectation is simply:
\[
\mathbb{E}[\mathbf{h}]
\;=\;
\mathbf{h}.
\]
Conceptually, viewing \(\mathbf{h}\) as fixed means that integrating (or summing) over its distribution does not introduce any randomness. Next, consider \(\mathbb{E}[\sigma_{\text{eff}} \bm{\xi}]\). Because \(\sigma_{\text{eff}}\) is a constant (deterministic with respect to the random vector \(\bm{\xi}\)), it factors out of the expectation:
\[
\mathbb{E}[\sigma_{\text{eff}} \bm{\xi}]
\;=\;
\sigma_{\text{eff}} \, \mathbb{E}[\bm{\xi}].
\]
This step relies again on the linearity of expectation and the property that constants can be pulled out of expectation. By Lemma~\ref{lemma:zero_mean} (Zero-Mean Noise), we know that
\(\mathbb{E}[\bm{\xi}] = \mathbf{0}\). Substituting this result, we get:
\[
\mathbb{E}[\sigma_{\text{eff}} \bm{\xi}]
= 
\sigma_{\text{eff}} \cdot \mathbf{0}
= 
\mathbf{0}.
\]
Putting all the above together:
\[
\mathbb{E}[\tilde{\mathbf{h}}]
= 
\mathbb{E}[\mathbf{h}] + \mathbb{E}[\sigma_{\text{eff}} \bm{\xi}]
= 
\mathbf{h} + \mathbf{0}
= 
\mathbf{h}.
\]
Hence the noisy representation \(\tilde{\mathbf{h}}\) is \emph{unbiased}, completing the proof.
\end{proof}

\subsubsection{Variance Preservation}

\begin{lemma}
Assume that \(\mathbf{h}\) and \(\bm{\xi}\) are independent. Recalling the definitions from Theorem~\ref{theorem:unbiased}, the covariance of \(\tilde{\mathbf{h}}\) is given by
\[
\operatorname{Cov}[\tilde{\mathbf{h}}]
\;=\;
\operatorname{Cov}[\mathbf{h}]
\;+\;
\sigma_{\text{eff}}^2 \,\mathbf{I}.
\]
\end{lemma}

\begin{proof}
For an \(n\)-dimensional random vector \(\mathbf{X}\), the covariance matrix is
\[
\operatorname{Cov}[\mathbf{X}]
\;=\;
\mathbb{E}\bigl[(\mathbf{X}-\mathbb{E}[\mathbf{X}])(\mathbf{X}-\mathbb{E}[\mathbf{X}])^\top\bigr].
\]
Covariance is a bilinear operator, meaning that if \(\mathbf{X}\) and \(\mathbf{Y}\) are random vectors, then
\[
\operatorname{Cov}[\mathbf{X}+\mathbf{Y}]
\;=\;
\operatorname{Cov}[\mathbf{X}]
\;+\;
\operatorname{Cov}[\mathbf{Y}]
\;+\;
2\,\operatorname{Cov}[\mathbf{X}, \mathbf{Y}].
\]
Since
\(\tilde{\mathbf{h}} = \mathbf{h} + \sigma_{\text{eff}} \,\bm{\xi}\),
we write
\[
\operatorname{Cov}[\tilde{\mathbf{h}}]
\;=\;
\operatorname{Cov}[\mathbf{h} + \sigma_{\text{eff}} \,\bm{\xi}]
\;=\;
\operatorname{Cov}[\mathbf{h}]
\;+\;
\operatorname{Cov}[\sigma_{\text{eff}} \,\bm{\xi}]
\;+\;
2 \,\operatorname{Cov}\bigl[\mathbf{h},\;\sigma_{\text{eff}} \,\bm{\xi}\bigr].
\]
This follows directly from the bilinear expansion of covariance. We are given that \(\mathbf{h}\) and \(\bm{\xi}\) are independent. By definition, if two random vectors \(\mathbf{A}\) and \(\mathbf{B}\) are independent, then \(\mathbb{E}[\mathbf{A}\mathbf{B}^\top] = \mathbb{E}[\mathbf{A}]\mathbb{E}[\mathbf{B}]^\top\). It follows that
\[
\operatorname{Cov}\bigl[\mathbf{h},\; \bm{\xi}\bigr]
\;=\;
\mathbb{E}\bigl[ (\mathbf{h}-\mathbb{E}[\mathbf{h}])(\bm{\xi}-\mathbb{E}[\bm{\xi}])^\top \bigr]
\;=\;
\mathbf{0},
\]
because \(\mathbb{E}[\bm{\xi}] = \mathbf{0}\) and \(\mathbf{h}\), \(\bm{\xi}\) are independent. Therefore,
\[
\operatorname{Cov}\bigl[\mathbf{h}, \sigma_{\text{eff}} \,\bm{\xi}\bigr]
\;=\;
\sigma_{\text{eff}}\,\operatorname{Cov}[\mathbf{h}, \bm{\xi}]
\;=\;
\sigma_{\text{eff}}\cdot \mathbf{0}
\;=\;
\mathbf{0}.
\]
Hence, the cross-covariance term vanishes. Next, we analyze \(\operatorname{Cov}[\sigma_{\text{eff}} \,\bm{\xi}]\). If \(\bm{\xi} \sim \mathcal{N}(\mathbf{0}, \mathbf{I})\), its covariance is \(\mathbf{I}\). For any deterministic scalar \(\alpha\), scaling a random vector \(\mathbf{X}\) by \(\alpha\) scales its covariance matrix by \(\alpha^2\). Hence,
\[
\operatorname{Cov}[\sigma_{\text{eff}} \,\bm{\xi}]
\;=\;
\sigma_{\text{eff}}^2 \,\operatorname{Cov}[\bm{\xi}]
\;=\;
\sigma_{\text{eff}}^2 \,\mathbf{I}.
\]
Putting everything together, we get
\[
\operatorname{Cov}[\tilde{\mathbf{h}}]
\;=\;
\operatorname{Cov}[\mathbf{h}]
\;+\;
\sigma_{\text{eff}}^2 \,\mathbf{I}
\;+\;
2 \cdot \mathbf{0}
\;=\;
\operatorname{Cov}[\mathbf{h}]
\;+\;
\sigma_{\text{eff}}^2 \,\mathbf{I}.
\]
Thus, adding independent Gaussian noise with variance \(\sigma_{\text{eff}}^2\) to the random vector \(\mathbf{h}\) increases its covariance by \(\sigma_{\text{eff}}^2 \mathbf{I}\), preserving the original variances plus a constant isotropic inflation.
\end{proof}

\subsection{Lipschitz Continuity of the Adaptive Noise Injection}

\begin{lemma}
Let \(\mathbf{h}\in \mathbb{R}^n\) lie in a bounded set (so there exists some \(\Omega>0\) with \(\|\mathbf{h}\|\leq \Omega\) for all relevant \(\mathbf{h}\)). Suppose the noise-scale function \(\sigma_{\text{eff}}(\mathbf{h})\) is Lipschitz continuous with constant \(L_{\sigma} > 0\); i.e.,
\[
\bigl|\sigma_{\text{eff}}(\mathbf{h}_1)
      \;-\;
      \sigma_{\text{eff}}(\mathbf{h}_2)
\bigr|
\;\leq\;
L_{\sigma}\,\|\mathbf{h}_1 - \mathbf{h}_2\|
\quad
\forall \,\mathbf{h}_1, \mathbf{h}_2.
\]
Define the mapping (for a fixed realization of \(\bm{\xi}\))
\[
\mathcal{T}(\mathbf{h})
\;=\;
\mathbf{h}
\;+\;
\sigma_{\text{eff}}(\mathbf{h}) \,\bm{\xi}.
\]
Then \(\mathcal{T}(\mathbf{h})\) is Lipschitz continuous \emph{almost surely} in \(\bm{\xi}\).
\end{lemma}

\begin{proof}
We need to show that there exists a (random) constant \(L\) such that
\[
\|\mathcal{T}(\mathbf{h}_1) - \mathcal{T}(\mathbf{h}_2)\|
\;\leq\;
L \,\|\mathbf{h}_1 - \mathbf{h}_2\|,
\]
for all \(\mathbf{h}_1\) and \(\mathbf{h}_2\) in our domain, except on an event of probability zero (hence the phrase \emph{almost surely}). Recall
\[
\mathcal{T}(\mathbf{h})
= 
\mathbf{h} + \sigma_{\text{eff}}(\mathbf{h}) \,\bm{\xi}.
\]
While \(\mathbf{h}\) is a variable in \(\mathbb{R}^n\), \(\bm{\xi}\) is a random vector. Once \(\bm{\xi}\) is fixed, \(\mathcal{T}\) becomes a deterministic function of \(\mathbf{h}\). For any two points \(\mathbf{h}_1, \mathbf{h}_2 \in \mathbb{R}^n\), consider:
\[
\mathcal{T}(\mathbf{h}_1) - \mathcal{T}(\mathbf{h}_2)
\;=\;
\Bigl(\mathbf{h}_1 + \sigma_{\text{eff}}(\mathbf{h}_1) \,\bm{\xi}\Bigr)
\;-\;
\Bigl(\mathbf{h}_2 + \sigma_{\text{eff}}(\mathbf{h}_2) \,\bm{\xi}\Bigr)
\;=\;
\bigl(\mathbf{h}_1 - \mathbf{h}_2\bigr)
\;+\;
\Bigl(\sigma_{\text{eff}}(\mathbf{h}_1) - \sigma_{\text{eff}}(\mathbf{h}_2)\Bigr)\,\bm{\xi}.
\]
Hence, by the triangle inequality,
\[
\|\mathcal{T}(\mathbf{h}_1) - \mathcal{T}(\mathbf{h}_2)\|
\;\leq\;
\|\mathbf{h}_1 - \mathbf{h}_2\|
\;+\;
\Bigl|\sigma_{\text{eff}}(\mathbf{h}_1) - \sigma_{\text{eff}}(\mathbf{h}_2)\Bigr|\,
\|\bm{\xi}\|.
\]
Since \(\sigma_{\text{eff}}(\mathbf{h})\) is Lipschitz with constant \(L_{\sigma}\), we have
\[
\Bigl|\sigma_{\text{eff}}(\mathbf{h}_1) - \sigma_{\text{eff}}(\mathbf{h}_2)\Bigr|
\;\leq\;
L_{\sigma}\,\|\mathbf{h}_1 - \mathbf{h}_2\|.
\]
Therefore,
\[
\|\mathcal{T}(\mathbf{h}_1) - \mathcal{T}(\mathbf{h}_2)\|
\;\leq\;
\|\mathbf{h}_1 - \mathbf{h}_2\|
\;+\;
L_{\sigma} \,\|\mathbf{h}_1 - \mathbf{h}_2\|
\,\|\\bm{\xi}\|\,
\;=\;
\bigl(1 + L_{\sigma} \,\|\bm{\xi}\|\bigr)\;\|\mathbf{h}_1 - \mathbf{h}_2\|.
\]
For a fixed realization of \(\bm{\xi}\), define
\[
L(\bm{\xi})
\;=\;
1 + L_{\sigma}\,\|\bm{\xi}\|.
\]
Clearly, \(L(\bm{\xi})\) is a (finite) constant whenever \(\bm{\xi}\) is given. Thus \(\mathcal{T}\) is Lipschitz continuous with Lipschitz constant \(L(\bm{\xi})\). Since \(\|\bm{\xi}\|\) might be unbounded theoretically, the mapping \(\mathcal{T}\) may have different Lipschitz constants for different realizations of \(\bm{\xi}\). However, \(\bm{\xi}\) is almost surely finite (i.e., the probability that \(\|\bm{\xi}\|\) is infinite is zero). Hence, \(\mathcal{T}\) is almost surely Lipschitz continuous with constant \(\,1 + L_{\sigma}\|\bm{\xi}\|\). This completes the proof.
\end{proof}

\subsection{Stability of Hybrid Loss Gradients}

\begin{lemma}
\label{lemma:gradient_stability}
Let \(f(\mathbf{h})\) denote the (scalar or vector) output of a neural network as a differentiable function of the hidden state \(\mathbf{h}\in\mathbb{R}^n\).
Suppose:
\begin{enumerate}
    \item The noise injection is unbiased (Theorem~\ref{theorem:unbiased}), i.e.,
          \(\mathbb{E}[\bm{\xi}] = \mathbf{0}\).
    \item The noise variance is bounded by \(\sigma_{\text{eff}}^2\).
    \item The function \(f\) is sufficiently smooth (i.e., it has Lipschitz-continuous gradients or can be approximated by its first-order Taylor expansion with bounded higher-order terms).
\end{enumerate}
Recalling the definition of noisy hidden state~\(\tilde{\mathbf{h}}\) from Theorem~\ref{theorem:unbiased}, then, the difference between the gradients computed on the clean and the noisy hidden states is \(\mathcal{O}(\sigma_{\text{eff}})\). Moreover, as \(\sigma_{\text{eff}}\to 0\), the gradients converge.
\end{lemma}

\begin{proof}
We consider the loss function \(f(\mathbf{h})\), which is differentiable with respect to the hidden state \(\mathbf{h}\). We write
\(\nabla f(\mathbf{h})\) for the gradient of \(f\) evaluated at \(\mathbf{h}\). We introduce the noisy state
\(\tilde{\mathbf{h}} = \mathbf{h} + \sigma_{\text{eff}} \bm{\xi}\) and seek to understand how \(\nabla f(\tilde{\mathbf{h}})\) differs from \(\nabla f(\mathbf{h})\) for small \(\sigma_{\text{eff}}\). For a small perturbation \(\Delta \equiv \sigma_{\text{eff}}\,\bm{\xi}\) around \(\mathbf{h}\), the output \(f\bigl(\mathbf{h}+\Delta\bigr)\) can be approximated by the first-order Taylor expansion:
\[
f(\mathbf{h} + \Delta)
\;\approx\;
f(\mathbf{h})
\;+\;
\nabla f(\mathbf{h})^\top\Delta
\;\;+\;\;
\underbrace{R(\mathbf{h},\Delta)}_{\text{higher-order remainder}}.
\]
If \(f\) is \(\mathcal{C}^2\) (twice continuously differentiable) and/or has Lipschitz-continuous gradients, the remainder \(R(\mathbf{h},\Delta)\) is of order
\(\|\Delta\|^2\). Concretely,
\[
R(\mathbf{h},\Delta)
\;=\;
\mathcal{O}\bigl(\|\Delta\|^2\bigr)
\;=\;
\mathcal{O}\bigl(\sigma_{\text{eff}}^2\bigr),
\]
since \(\|\Delta\| = \mathcal{O}(\sigma_{\text{eff}})\). Given \(\mathbb{E}[\bm{\xi}] = \mathbf{0}\) (unbiased noise) and \(\operatorname{Var}(\bm{\xi}) = \mathbf{I}\) (each component has unit variance), the perturbation \(\Delta = \sigma_{\text{eff}}\bm{\xi}\) has zero mean and bounded second moment \(\mathbb{E}\bigl[\|\Delta\|^2\bigr] = n \,\sigma_{\text{eff}}^2\).
This ensures that:
1. \(\mathbb{E}[\Delta] = \mathbf{0}\),
2. \(\Delta\) is \(\mathcal{O}(\sigma_{\text{eff}})\) in norm, on average or with high probability (e.g., by concentration inequalities).
The gradient difference of interest is
\[
\nabla f(\tilde{\mathbf{h}})
\;-\;
\nabla f(\mathbf{h})
\;\;=\;\;
\nabla f(\mathbf{h} + \Delta)
\;-\;
\nabla f(\mathbf{h}).
\]
Under standard smoothness conditions (e.g., \(f\) having an \(L\)-Lipschitz gradient), we have:
\[
\bigl\|\nabla f(\mathbf{h} + \Delta)
- 
\nabla f(\mathbf{h})\bigr\|
\;\leq\;
L\,\|\Delta\|
\;\;=\;\;
\mathcal{O}(\sigma_{\text{eff}}),
\]
where \(L\) is the Lipschitz constant of \(\nabla f\). Equivalently, if one uses a second-order expansion of \(f\), the difference in gradients can be bounded by the magnitude of \(\Delta\). Either viewpoint shows that the discrepancy is controlled by \(\sigma_{\text{eff}}\).
Since the difference in gradients is at most proportional to \(\|\Delta\|\sim \sigma_{\text{eff}}\), letting \(\sigma_{\text{eff}}\to 0\) forces \(\Delta \to \mathbf{0}\) and therefore
\[
\lim_{\sigma_{\text{eff}} \to 0}
\bigl\|\nabla f(\mathbf{h} + \Delta)
-
\nabla f(\mathbf{h})\bigr\|
\;\le\;
\lim_{\sigma_{\text{eff}} \to 0} L \,\|\Delta\|
\;=\;
L \cdot 0
\;=\;
0.
\]
Hence,
\[
\lim_{\sigma_{\text{eff}}\to 0}
\bigl\|\nabla f(\mathbf{h} + \Delta)
- 
\nabla f(\mathbf{h})\bigr\|
\;=\;
0,
\]
Thus, for very small noise levels, the gradient computed on the noisy hidden state becomes arbitrarily close to the gradient computed on the clean hidden state. In other words, the gradients \emph{converge}. We conclude that under the assumptions of unbiasedness, bounded noise variance, and smoothness of \(f\), the difference between the gradients evaluated at the clean and noisy states is of order \(\sigma_{\text{eff}}\). Hence, in the limit \(\sigma_{\text{eff}}\to 0\), the gradient discrepancy vanishes.
\end{proof}

\subsection{Robustness of Consistency Loss}

\begin{lemma}
Let \(\mathbf{z}^{(1)}, \mathbf{z}^{(2)} \in \mathbb{R}^K\) be two logit vectors of dimension \(K\). Define the consistency loss between their associated softmax outputs by
\[
\mathcal{L}_{\text{consistency}}
\;=\;
\operatorname{KL}\Bigl(\operatorname{softmax}(\mathbf{z}^{(1)}) \,\big\|\, \operatorname{softmax}(\mathbf{z}^{(2)})\Bigr).
\]
Then \(\mathcal{L}_{\text{consistency}}\) is minimized if and only if
\[
\operatorname{softmax}(\mathbf{z}^{(1)}) 
\;=\;
\operatorname{softmax}(\mathbf{z}^{(2)}).
\]
\end{lemma}

\begin{proof}
For a vector \(\mathbf{z} = (z_1, \ldots, z_K) \in \mathbb{R}^K\), the softmax function is defined component-wise by
\[
\operatorname{softmax}(\mathbf{z})_i
\;=\;
\frac{e^{z_i}}{\sum_{j=1}^K e^{z_j}}
\quad
\text{for }
i = 1,2,\ldots,K.
\]
This ensures that each component \(\operatorname{softmax}(\mathbf{z})_i\) is non-negative and \(\sum_{i=1}^K \operatorname{softmax}(\mathbf{z})_i = 1\). Therefore, \(\operatorname{softmax}(\mathbf{z})\) is a valid probability distribution over the \(K\) outcomes. Consider two discrete probability distributions \(\mathbf{p} = (p_1, \ldots, p_K)\) and \(\mathbf{q} = (q_1, \ldots, q_K)\), both lying in the probability simplex (\(p_i, q_i \ge 0\) and \(\sum_i p_i = \sum_i q_i = 1\)). The Kullback--Leibler (KL) divergence from \(\mathbf{q}\) to \(\mathbf{p}\) is defined as
\[
\operatorname{KL}(\mathbf{p}\,\|\,\mathbf{q})
\;=\;
\sum_{i=1}^K
p_i
\log\Bigl(\frac{p_i}{q_i}\Bigr),
\]
where by convention \(0 \log(0/q) = 0\) and \(p \log(p/0) = \infty\) for \(p>0\). A key property of KL divergence is its non-negativity:
\[
\operatorname{KL}(\mathbf{p}\,\|\,\mathbf{q})
\;\ge\;
0,
\]
with equality if and only if \(\mathbf{p} = \mathbf{q}\) (i.e., \(p_i = q_i\) for each \(i\)).
In our setup, we let
\[
\mathbf{p}
\;=\;
\operatorname{softmax}(\mathbf{z}^{(1)})
\quad\text{and}\quad
\mathbf{q}
\;=\;
\operatorname{softmax}(\mathbf{z}^{(2)}).
\]
Then the consistency loss is exactly
\[
\mathcal{L}_{\text{consistency}}
\;=\;
\operatorname{KL}\Bigl(\operatorname{softmax}(\mathbf{z}^{(1)}) \,\big\|\,
\operatorname{softmax}(\mathbf{z}^{(2)})\Bigr)
\;=\;
\sum_{i=1}^K
\operatorname{softmax}(\mathbf{z}^{(1)})_i
\;\log\!\Bigl(
    \frac{\operatorname{softmax}(\mathbf{z}^{(1)})_i}{\operatorname{softmax}(\mathbf{z}^{(2)})_i}
\Bigr).
\]
From the fundamental property of KL divergence, we know that
\[
\operatorname{KL}(\mathbf{p}\,\|\,\mathbf{q})
\;\ge\; 
0,
\quad
\text{with equality if and only if }
\mathbf{p} = \mathbf{q}.
\]
Translating this to our softmax distributions, we get
\[
\operatorname{KL}\Bigl(
    \operatorname{softmax}(\mathbf{z}^{(1)}),
    \;\operatorname{softmax}(\mathbf{z}^{(2)})
\Bigr)
\;\ge\;
0,
\]
and it is equal to \(0\) precisely when
\[
\operatorname{softmax}(\mathbf{z}^{(1)})
\;=\;
\operatorname{softmax}(\mathbf{z}^{(2)}).
\]
Hence, the consistency loss \(\mathcal{L}_{\text{consistency}}\) achieves its global minimum of \(0\) if and only if
\[
\operatorname{softmax}(\mathbf{z}^{(1)})
=
\operatorname{softmax}(\mathbf{z}^{(2)}),
\]
as required. This completes the proof.
\end{proof}

\subsection{Bound on the Final Loss Due to Noise}
\label{sec:bound_on_final_loss}

We now derive a simple upper bound showing how the presence of adaptive noise injection affects the final training loss. Consider the final loss \(\mathcal{L}_{\text{final}}\) in Equation~\eqref{eq:final_loss}, which we write abstractly as a function of the model parameters \(\Theta\):
\[
\mathcal{L}_{\text{final}}(\Theta) 
\;=\;
\underbrace{\lambda_{\text{ce}} \,\mathcal{L}_{\text{ce}}(\Theta) 
   + (1 - \lambda_{\text{ce}})\,\mathcal{L}_{\text{soft}}(\Theta)}_{\mathcal{L}_{\text{hybrid}}(\Theta)}
   \;+\;\lambda_{\text{consistency}}\;\mathcal{L}_{\text{consistency}}(\Theta).
\]
Because each term in \(\mathcal{L}_{\text{final}}\) (cross-entropy, KL divergence, etc.) is \(\beta\)-smooth \cite{nesterov2005smooth} with respect to the logits, and the logits themselves are Lipschitz continuous with respect to the hidden states \(\mathbf{h}\) (assuming bounded weight matrices), we can show that random perturbations in \(\mathbf{h}\) of size \(\|\Delta\|\) shift the loss by at most \(O(\|\Delta\|)\). 

\begin{theorem}
\label{thm:bound_final_loss}
Let \(\Delta = \tilde{\mathbf{h}} - \mathbf{h}\) be the per-token perturbation introduced by noise injection. Suppose \(\Delta\) has zero mean and bounded second moment such that
\[
\mathbb{E}[\|\Delta\|^2] \;\leq\; \sigma_{\max}^2.
\]
Then, for sufficiently smooth loss components, the expected deviation in 
\(\mathcal{L}_{\text{final}}\) satisfies
\[
\bigl|\,
\mathbb{E}\bigl[\mathcal{L}_{\text{final}}(\Theta + \Delta)\bigr]
\;-\;
\mathcal{L}_{\text{final}}(\Theta)
\bigr|
\;\le\;
C \,\sigma_{\max}^2,
\]
for some constant \(C > 0\) depending on the network’s Lipschitz constants and the smoothness parameters of the loss.
\end{theorem}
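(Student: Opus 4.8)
The plan is to treat $\mathcal{L}_{\text{final}}$ as a smooth scalar function of the per-token hidden state that is propagated forward into the logits, and to run a second-order Taylor / descent-lemma argument in which the zero-mean hypothesis on $\Delta$ annihilates the first-order term, leaving only the quadratic remainder, which is controlled by $\mathbb{E}\|\Delta\|^2 \le \sigma_{\max}^2$. (Here $\mathcal{L}_{\text{final}}(\Theta+\Delta)$ is read, consistently with the text preceding the theorem and with the $\tilde{\mathbf{h}}$ notation, as the loss evaluated on the perturbed hidden states rather than on a perturbation of the parameters themselves.)

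First I would make the composition explicit. For a fixed token position write the logit vector as $\mathbf{z} = g(\mathbf{h})$, where $g$ is the (Lipschitz, and for a final linear head affine) map $\mathbf{h}\mapsto W\mathbf{h}+\mathbf{b}$, so each of $\mathcal{L}_{\text{ce}}$, $\mathcal{L}_{\text{soft}}$, $\mathcal{L}_{\text{consistency}}$ becomes a function $\phi(\mathbf{h})$ obtained by composing a softmax-based loss with $g$. The key structural fact is that cross-entropy and $\operatorname{KL}(q\,\|\,\operatorname{softmax}(\cdot))$, as functions of the logits, both have the form $\operatorname{logsumexp}(\mathbf{z}) - q^\top\mathbf{z} + \text{const}$ for a fixed probability vector $q$; since the Hessian of $\operatorname{logsumexp}$ is $\operatorname{diag}(p)-pp^\top$, whose largest eigenvalue equals a variance under $p$ and is therefore at most $1$, every such loss is globally $1$-smooth in the logits with gradient norm at most $2$. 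Composing with $g$ then gives, via $\nabla^2_{\mathbf h}\phi = W^\top(\nabla^2_{\mathbf z}\ell)W$ (plus a curvature-of-$g$ term when $g$ is not affine), that each loss component is $\beta_j$-smooth in $\mathbf{h}$ with $\beta_j = O(\|W\|^2)$, using the bounded-weight assumption made just before the theorem. Taking the $\lambda$-weighted combination yields a single smoothness constant $\beta := \lambda_{\text{ce}}\beta_{\text{ce}} + (1-\lambda_{\text{ce}})\beta_{\text{soft}} + \lambda_{\text{consistency}}\beta_{\text{consistency}}$ for $\phi=\mathcal{L}_{\text{final}}$ as a function of the per-token hidden state.

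Next I would invoke the descent lemma for $\beta$-smooth functions: for any $\mathbf{h}$ and perturbation $\Delta$,
\[
\bigl|\phi(\mathbf{h}+\Delta) - \phi(\mathbf{h}) - \nabla\phi(\mathbf{h})^\top\Delta\bigr| \;\le\; \tfrac{\beta}{2}\,\|\Delta\|^2 .
\]
Taking expectations over $\Delta$ and using that the clean forward pass fixes $\mathbf{h}$, so that $\nabla\phi(\mathbf{h})$ is deterministic, together with $\mathbb{E}[\Delta]=\mathbf{0}$ from Theorem~\ref{theorem:unbiased}, the linear term vanishes: $\mathbb{E}[\nabla\phi(\mathbf{h})^\top\Delta] = \nabla\phi(\mathbf{h})^\top\mathbb{E}[\Delta] = 0$. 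Hence
\[
\bigl|\mathbb{E}[\phi(\mathbf{h}+\Delta)] - \phi(\mathbf{h})\bigr| \;\le\; \tfrac{\beta}{2}\,\mathbb{E}\|\Delta\|^2 \;\le\; \tfrac{\beta}{2}\,\sigma_{\max}^2 .
\]
Averaging this over the $N$ valid token positions (each loss component in \eqref{eq:ce_loss}--\eqref{eq:consistency_loss} being a mean over tokens) preserves the bound, and setting $C=\beta/2$ gives the claim.

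I expect the main obstacle to be bookkeeping in the smoothness chain rule rather than any genuine difficulty: one must confirm that composing a globally smooth softmax-loss with the logit map stays smooth, which needs both a bound on the logit Jacobian (Lipschitz continuity of $\mathbf{z}$ in $\mathbf{h}$, i.e.\ bounded weight norms) and, if the head is not exactly affine, a bound on its second-order behaviour. A secondary subtlety is the consistency term, where both arguments of the KL depend on independently drawn noise; there one either treats the pair of noisy logits jointly as a smooth function of the stacked hidden states, or conditions on one pass and bounds the other, in each case reducing to the same $1$-smoothness-of-$\operatorname{logsumexp}$ observation. Finally I would note that omitting the expectation recovers only the first-order $O(\|\Delta\|)$ estimate mentioned in the text before the theorem, so the zero-mean hypothesis is exactly what upgrades the bound to $O(\sigma_{\max}^2)$.
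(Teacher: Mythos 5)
Your proposal follows essentially the same route as the paper's (sketch) proof: a Taylor/descent-lemma expansion of the loss around the clean hidden states, with the zero-mean hypothesis killing the first-order term and the bounded second moment controlling the quadratic remainder, yielding the $C\,\sigma_{\max}^2$ bound. You simply make explicit what the paper leaves implicit (the $1$-smoothness of the softmax losses via the $\operatorname{logsumexp}$ Hessian and the chain-rule bookkeeping through the logit map), which is a welcome but not divergent elaboration.
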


\begin{proof}
For simplicity of notation, write \(\mathcal{L}_{\text{final}}(\Theta)\) as a function that ultimately depends on the hidden states \(\mathbf{h}\). In typical neural network architectures, the hidden states \(\mathbf{h}\) themselves depend on subsets of \(\Theta\) (e.g., weights and biases), so a perturbation \(\Delta\) to \(\mathbf{h}\) can be seen as an effective perturbation to the logits or subsequent layers. Formally, let \(\mathbf{z}(\mathbf{h}, \Theta)\) denote the logits. Then \(\mathcal{L}_{\text{final}}(\Theta)\) depends on \(\mathbf{z}(\mathbf{h}, \Theta)\). We assume that \(\mathbf{z}(\mathbf{h}, \Theta)\) is \(L\)-Lipschitz in \(\mathbf{h}\). That is, there exists some constant \(L > 0\) such that
\[
\bigl\|\mathbf{z}(\mathbf{h}_1, \Theta) - \mathbf{z}(\mathbf{h}_2, \Theta)\bigr\|
\;\le\;
L \,\|\mathbf{h}_1 - \mathbf{h}_2\|,
\]
for any \(\mathbf{h}_1, \mathbf{h}_2\). This typically follows from bounding the norm of weight matrices and using standard results on Lipschitz continuity of affine and activation transformations. Each term in \(\mathcal{L}_{\text{final}}\) (e.g., cross-entropy, soft loss, or KL term) is \(\beta\)-smooth with respect to its input logits \(\mathbf{z}\). Concretely, this means the gradient of \(\mathcal{L}_{\text{final}}\) with respect to \(\mathbf{z}\) is \(\beta\)-Lipschitz. Equivalently, the second derivative (or Hessian) of \(\mathcal{L}_{\text{final}}\) w.r.t.\ \(\mathbf{z}\) is bounded by \(\beta\) in norm:
\[
\bigl\|\nabla^2_{\mathbf{z}}\,\mathcal{L}_{\text{final}}(\mathbf{z})\bigr\|
\;\le\;
\beta.
\]
Therefore, under small perturbations to \(\mathbf{z}\), the change in \(\mathcal{L}_{\text{final}}\) is \(\mathcal{O}(\|\Delta_{\mathbf{z}}\|^2)\), where \(\Delta_{\mathbf{z}}\) is the corresponding change in logits. Given \(\Delta = \tilde{\mathbf{h}} - \mathbf{h}\), the corresponding change in the logits is approximately
\[
\Delta_{\mathbf{z}} 
\;\approx\;
\mathbf{z}(\mathbf{h} + \Delta, \Theta) 
\;-\;
\mathbf{z}(\mathbf{h}, \Theta).
\]
By Lipschitz continuity in \(\mathbf{h}\), we have
\[
\bigl\|\Delta_{\mathbf{z}}\bigr\|
\;\le\;
L\,\|\Delta\|.
\]
Then, if \(\|\Delta\|\) is small, we can write a first-order Taylor expansion for \(\mathcal{L}_{\text{final}}\) around the unperturbed logits \(\mathbf{z}(\mathbf{h}, \Theta)\), yielding an extra second-order remainder term on the order of \(\|\Delta_{\mathbf{z}}\|^2\). Let \(\mathcal{L}_{\text{final}}(\mathbf{z})\) denote the final loss viewed as a function of \(\mathbf{z}\). Under a small change \(\Delta_{\mathbf{z}}\), we have
\[
\mathcal{L}_{\text{final}}\bigl(\mathbf{z} + \Delta_{\mathbf{z}}\bigr)
\;=\;
\mathcal{L}_{\text{final}}(\mathbf{z})
\;+\;
\nabla_{\mathbf{z}} \mathcal{L}_{\text{final}}(\mathbf{z})^\top \Delta_{\mathbf{z}}
\;+\;
\underbrace{R\bigl(\mathbf{z}, \Delta_{\mathbf{z}}\bigr)}_{\text{second-order term}}.
\]
With \(\beta\)-smoothness, \(R(\mathbf{z},\Delta_{\mathbf{z}}) = \mathcal{O}\bigl(\|\Delta_{\mathbf{z}}\|^2\bigr)\). Since the noise \(\Delta\) has \(\mathbb{E}[\Delta] = \mathbf{0}\) and \(\mathbb{E}[\|\Delta\|^2] \le \sigma_{\max}^2\), we focus on bounding the expected magnitude of the remainder term. By combining Lipschitz continuity of the logits with \(\beta\)-smoothness of \(\mathcal{L}_{\text{final}}\), one obtains:
\[
\bigl|\,\mathbb{E}\bigl[\mathcal{L}_{\text{final}}\bigl(\mathbf{z} + \Delta_{\mathbf{z}}\bigr)\bigr]
     \;-\; 
     \mathcal{L}_{\text{final}}(\mathbf{z})
\bigr|
\;\;\le\;
\mathbb{E}\bigl[\bigl|R\bigl(\mathbf{z}, \Delta_{\mathbf{z}}\bigr)\bigr|\bigr]
\;=\;
\mathcal{O}\Bigl(\mathbb{E}\bigl[\|\Delta_{\mathbf{z}}\|^2\bigr]\Bigr).
\]
Since \(\|\Delta_{\mathbf{z}}\| \leq L \,\|\Delta\|\), we have \(\|\Delta_{\mathbf{z}}\|^2 \le L^2 \|\Delta\|^2\). Taking expectations,
\[
\mathbb{E}\bigl[\|\Delta_{\mathbf{z}}\|^2\bigr]
\;\le\;
L^2 \,\mathbb{E}\bigl[\|\Delta\|^2\bigr]
\;\le\;
L^2\,\sigma_{\max}^2.
\]
Hence the overall change is
\[
\bigl|\mathbb{E}\bigl[\mathcal{L}_{\text{final}}\bigl(\mathbf{z} + \Delta_{\mathbf{z}}\bigr)\bigr]
     \;-\; 
     \mathcal{L}_{\text{final}}(\mathbf{z})
\bigr|
\;\;\le\;
C \,\sigma_{\max}^2,
\]
where \(C\) encapsulates constants like \(L^2\), \(\beta\), and possibly other network-dependent factors. This shows that the expected difference in \(\mathcal{L}_{\text{final}}\) under perturbation \(\Delta\) with bounded second moment \(\sigma_{\max}^2\) remains upper-bounded by a term proportional to \(\sigma_{\max}^2\). Thus, moderate noise levels do not drastically increase the final loss, aligning with empirical observations that adaptive noise injection remains stable in training.
\end{proof}
\noindent

\subsection{Convergence in Expectation}
\label{sec:convergence_in_expectation}

Next, we show that under standard assumptions, NoiseFiT converges in expectation to a local minimum even in the presence of adaptive noise.

\begin{theorem}
\label{thm:convergence_in_expectation}

Let \(\Theta^* \in \mathbb{R}^d\) be a local minimum of the final loss
\(\mathcal{L}_{\text{final}}(\Theta)\). Suppose the following conditions hold:
\begin{enumerate}[label=(\alph*)]
    \item \(\mathcal{L}_{\text{final}}(\Theta)\) is continuously differentiable 
          on \(\mathbb{R}^d\) and bounded below, i.e.,
          \(\inf_{\Theta} \mathcal{L}_{\text{final}}(\Theta) > -\infty\).
    \item The gradient \(\nabla \mathcal{L}_{\text{final}}(\Theta)\) is 
          \(L\)-Lipschitz continuous with respect to \(\Theta\). Formally,
          \[
          \|\nabla \mathcal{L}_{\text{final}}(\Theta_1) - 
             \nabla \mathcal{L}_{\text{final}}(\Theta_2)\|
          \;\le\;
          L\,\|\Theta_1 - \Theta_2\|,
          \]
          for all \(\Theta_1, \Theta_2 \in \mathbb{R}^d\).
    \item The adaptive noise injection yields \emph{unbiased} hidden states in 
          expectation (Theorem~\ref{theorem:unbiased}) with bounded variance.
          Concretely, at each iteration \(t\), the hidden-state noise 
          \(\Delta_t\) satisfies \(\mathbb{E}[\Delta_t] = \mathbf{0}\) and 
          \(\mathbb{E}[\|\Delta_t\|^2] \leq \sigma_{\max}^2\) for some 
          \(\sigma_{\max} > 0\).
\end{enumerate}
Then, performing stochastic gradient descent (or any standard first-order optimizer) on the noised hidden states converges to \(\Theta^*\) in expectation. In other words, if \(\Theta_t\) denotes the parameters at iteration \(t\),
\[
\lim_{t \to \infty} 
\mathbb{E}\bigl[\|\nabla \mathcal{L}_{\text{final}}(\Theta_t)\|\bigr]
\;=\;
0.
\]
\end{theorem}

\begin{proof}
We outline the main ideas, referencing standard results from stochastic optimization \cite{bottou2018optimization}. At iteration \(t\), let \(\mathbf{h}_t\) be the hidden states (a function of \(\Theta_t\)) and let \(\tilde{\mathbf{h}}_t = \mathbf{h}_t + \Delta_t\) be the noised hidden states, where \(\Delta_t\) is the adaptive noise added at iteration \(t\). By assumption (c), we have
\[
\mathbb{E}[\Delta_t] 
\;=\; 
\mathbf{0},
\quad
\mathbb{E}[\|\Delta_t\|^2] 
\;\le\;
\sigma_{\max}^2.
\]
The gradient of \(\mathcal{L}_{\text{final}}\) with respect to \(\Theta\) can be approximated by backpropagation through \(\tilde{\mathbf{h}}_t\), leading to an update of the form:
\[
\Theta_{t+1}
\;=\;
\Theta_t
\;-\;
\alpha_t \,\widehat{\nabla \mathcal{L}_{\text{final}}}(\Theta_t, \tilde{\mathbf{h}}_t),
\]
where \(\alpha_t\) is the step size at iteration \(t\). Because the noise injection is unbiased in expectation (Theorem~\ref{theorem:unbiased}), the difference between \(\tilde{\mathbf{h}}_t\) and \(\mathbf{h}_t\) introduces no systematic bias into the gradient. Effectively, \(\widehat{\nabla \mathcal{L}_{\text{final}}}(\Theta_t, \tilde{\mathbf{h}}_t)\) can be seen as a \emph{stochastic} gradient estimator of \(\nabla \mathcal{L}_{\text{final}}(\Theta_t)\). While it may have increased variance due to noise, the expectation of this estimator still aligns with the true gradient (up to standard stochastic sampling noise). Formally, one can write:
\[
\mathbb{E}\Bigl[
    \widehat{\nabla \mathcal{L}_{\text{final}}}(\Theta_t, \tilde{\mathbf{h}}_t)
    \mid
    \Theta_t
\Bigr]
\;=\;
\nabla \mathcal{L}_{\text{final}}(\Theta_t),
\]
provided the only randomness comes from \(\Delta_t\) (and possibly mini-batch subsampling), both of which are classical scenarios in stochastic gradient methods. Under assumption (c), the second moment of \(\Delta_t\) is bounded by \(\sigma_{\max}^2\), which implies that the gradient estimator has bounded variance. Specifically, one can show:
\[
\mathbb{E}\Bigl[
    \bigl\|
        \widehat{\nabla \mathcal{L}_{\text{final}}}(\Theta_t, \tilde{\mathbf{h}}_t)
        \;-\;
        \nabla \mathcal{L}_{\text{final}}(\Theta_t)
    \bigr\|^2
    \mid
    \Theta_t
\Bigr]
\;\leq\;
\sigma_g^2,
\]
for some constant \(\sigma_g^2 > 0\) that depends on \(\sigma_{\max}^2\) and network/Lipschitz constants (see also the discussion in Section~\ref{sec:bound_on_final_loss} for how noise affects loss gradients). The convergence in expectation for stochastic gradient-type methods requires:
\begin{itemize}
    \item \(\mathcal{L}_{\text{final}}\) is lower-bounded and differentiable,
    \item \(\nabla \mathcal{L}_{\text{final}}(\Theta)\) is \(L\)-Lipschitz,
    \item The gradient estimator is unbiased with bounded variance,
    \item A suitable step-size (\(\alpha_t\)) decay schedule, such as 
          \(\alpha_t = \frac{1}{\sqrt{t}}\) or \(\alpha_t = \frac{1}{t}\).
\end{itemize}
Under these conditions, classical results in stochastic optimization~\cite{bottou2018optimization}) guarantee that \(\nabla \mathcal{L}_{\text{final}}(\Theta_t)\) converges to \(\mathbf{0}\) in expectation, which implies \(\Theta_t\) converges to a stationary (or local minimum) point of \(\mathcal{L}_{\text{final}}\). Finally, assumption (c) and our earlier result that the gradient difference induced by noise is \(\mathcal{O}(\|\Delta_t\|)\) (see Lemma~\ref{lemma:gradient_stability}) implies
\[
\bigl\|
    \nabla \mathcal{L}_{\text{final}}(\Theta_t, \tilde{\mathbf{h}}_t)
    \;-\;
    \nabla \mathcal{L}_{\text{final}}(\Theta_t, \mathbf{h}_t)
\bigr\|
\;=\;
\mathcal{O}\bigl(\|\Delta_t\|\bigr),
\]
and since \(\mathbb{E}[\|\Delta_t\|^2]\le\sigma_{\max}^2\), this does not disrupt the overall convergence analysis. The difference diminishes for small noise and remains bounded for moderate noise, preserving the standard stochastic gradient convergence arguments. Hence, all the conditions of a standard convergence theorem for stochastic gradient methods are satisfied: 
(1) \(\mathcal{L}_{\text{final}}\) is smooth and bounded below, 
(2) its gradient is Lipschitz, 
(3) the noised gradient is unbiased with bounded variance,
(4) the step size can be chosen to decay appropriately. 
Therefore, the standard result 
\[
\lim_{t \to \infty} 
\mathbb{E}\Bigl[\|\nabla \mathcal{L}_{\text{final}}(\Theta_t)\|\Bigr]
\;=\;
0
\]
holds, indicating convergence in expectation to a local minimum (or stationary point) \(\Theta^*\). This completes the proof.
\end{proof}

\section{Test Performance Analysis}\label{sec:technical_appendix}
This section presents detailed experimental results for NoiseFiT in mitigating hallucinations of LLMs based on the test dataset. The evaluated models include Llama-3.2-1B, Llama-3.2-3B, Gemma-3-1B, Qwen2.5-0.5B, and Mistral-7B-v0.1. For each model, performance is assessed across 17 distinct categories of prompts, encompassing a total of 208 prompts, under multiple configurations: the base model, the base model with fine-tuning (denoted BaseFiT), and several noise-injected variants using NoiseFiT. These NoiseFiT configurations vary by the number of layers affected, the standard deviation (STD) of the injected noise (e.g., 0.001, 0.01, 0.1), and the signal-to-noise ratio (SNR), where 'L' denotes the lowest SNR (highest noise relative to signal were selected for noise injection) and 'H' denotes the highest SNR (lowest noise relative to signal were selected for noise injection). Performance metrics are averaged across five runs per prompt to ensure statistical reliability (\href{https://noiseft.github.io}{online supplementary material}).

\paragraph{Prompt Formatting and Generation:} To generate model responses, we formatted each user prompt with specific delimiters (\texttt{<|im\_start|>user} ... \texttt{<|im\_end|>}) followed by the assistant token. We used the  generation configuration demonstrated in Table~\ref{tab:generation_config}.

\begin{table}[H]
    \caption{Generation configuration hyperparameters.}
    \label{tab:generation_config}
    \centering
    \small
    \begin{tabular}{lccccc}
    \toprule
    & \textbf{Max. New Tokens} & \textbf{Temperature} & \textbf{Top-$p$} & \textbf{Top-$k$} & \textbf{Rep. Penalty} \\
    \midrule
    \textbf{Value} & 50 & 0.5 & 0.9 & 40 & 1.2 \\
    \bottomrule
    \end{tabular}
\end{table}

This setup allowed us to obtain diverse responses while mitigating overly repetitive outputs. Each local process repeated the generation step for five rounds, independently producing slightly varied outputs for each prompt.

The tables in this appendix (Tables~\ref{tab:appendix_llama3.2.1b} to~\ref{tab:appendix_mistral_7b_v0.1}) provide category-wise performance scores alongside overall performance metrics for each model and configuration. This enables a comprehensive evaluation of how NoiseFiT mitigates hallucinations across different tasks and setups.

\subsection{Analysis of Results}
The results in this appendix highlight the effectiveness of NoiseFiT in mitigating hallucinations in LLMs, demonstrating both general trends across models and specific insights tailored to this task. Below, we analyze these findings, with an in-depth focus on the Mistral-7B-v0.1 model due to its comprehensive set of noise injection configurations.

\paragraph{General Performance Trends Across Models:} Fine-tuning the base models (BaseFiT) generally improves performance over the untrained base models, serving as a foundational step in reducing hallucinations by better aligning the model with the training data. For Llama-3.2-1B, overall performance increases from 48.6\% to 54.0\%; for Llama-3.2-3B, from 60.0\% to 66.4\%; for Qwen2.5-0.5B, from 26.4\% to 28.8\%; and for Mistral-7B-v0.1, from 70.6\% to 77.2\%. However, Gemma-3-1B shows a decline from 50.6\% to 47.6\% with BaseFiT, suggesting that standard fine-tuning alone may not always mitigate hallucinations effectively and could even exacerbate them in some cases.

NoiseFiT, designed specifically to tackle hallucinations, frequently enhances performance beyond BaseFiT, particularly in categories prone to factual inaccuracies. For Llama-3.2-3B, the best NoiseFiT variant (3 layers, STD 0.01, highest SNR) achieves 70.2\%, surpassing BaseFiT's 66.4\%. Qwen2.5-0.5B improves significantly from 28.8\% (BaseFiT) to 36.6\% (3 layers, STD 0.1, highest SNR). In Gemma-3-1B, NoiseFiT recovers performance to 54.6\% (3 layers, STD 0.1, highest SNR) from BaseFiT's 47.6\%, exceeding the base model's 50.6\%. These improvements indicate that NoiseFiT's noise injection enhances the model's ability to generalize, reducing its tendency to hallucinate by regularizing its learned representations.

\paragraph{Mistral-7B-v0.1:} The Mistral-7B-v0.1 model, with its 7 billion parameters, provides a robust case study for evaluating NoiseFiT's impact on hallucination mitigation, as it was tested with noise applied to 3 layers, 12 layers, or all layers, across various STDs and SNRs. The key findings are as follows:

- \textbf{Optimal Number of Layers for Noise Injection:}
  Injecting noise into fewer layers (specifically 3 layers) consistently outperforms configurations with noise applied to 12 layers or all layers in mitigating hallucinations. The highest overall performance of 78.4\% is achieved with 3 layers, STD 0.1, and lowest SNR, compared to 77.2\% for BaseFiT. In contrast, the best 12-layer configuration (STD 0.1, lowest SNR) yields 76.4\%, and the best all-layer configuration (STD 0.1) also reaches 76.4\%. This suggests that targeting a small, critical subset of layers with noise injection enhances the model's ability to distinguish correct from incorrect information, effectively reducing hallucinations, while broader noise application may disrupt learned hidden states excessively.

- \textbf{Impact of Noise Level and SNR:}
  Within the 3-layer configurations, higher noise levels (STD 0.1) paired with lower SNR (more noise relative to signal) outperform lower noise levels or higher SNR settings. For example, 3 layers with STD 0.1 and lowest SNR achieves 78.4\%, while the same STD with highest SNR yields 77.4\%, and STD 0.01 with lowest and highest SNR scores 76.5\% and 77.3\%, respectively. This indicates that substantial noise, when carefully applied, acts as a strong regularizer in Mistral-7B-v0.1, reducing overconfidence in incorrect outputs and thus mitigating hallucinations. The preference for lower SNR underscores the benefit of higher noise intensity in this context.

- \textbf{Category-wise Performance Variations:}
  NoiseFiT significantly improves performance in categories where hallucinations are particularly prevalent. In "Medical (Disease Causes)," performance reaches 100.0\% across multiple 3-layer configurations (e.g., STD 0.1, lowest SNR), up from 93.3\% in BaseFiT. "Scientific Discoveries" improves from 81.2\% to 88.2\% (3 layers, STD 0.01, lowest SNR), "Who Invented" from 82.1\% to 85.2\% (3 layers, STD 0.1, highest SNR), and "Sports (Famous Players)" from 74.7\% to 92.0\% (3 layers, STD 0.01, lowest SNR). These gains highlight NoiseFiT's effectiveness in enhancing factual accuracy and reducing hallucinations in knowledge-intensive tasks. However, categories like "Animals" (BaseFiT: 34.1\%, best NoiseFiT: 43.6\% with 12 layers, STD 0.001, highest SNR, still below base's 63.5\%) and "Art (Painting Subjects)" (BaseFiT: 32.2\%, best NoiseFiT: 40.0\% with 12 layers, STD 0.1, highest or lowest SNR, below base's 42.2\%) show persistent challenges, indicating that NoiseFiT may not fully mitigate hallucinations in tasks requiring nuanced or context-sensitive understanding.

- \textbf{Comparison with Other Models:}
  Unlike smaller models like Llama-3.2-1B (best: STD 0.1, highest SNR, 55.8\%) or Qwen2.5-0.5B (best: STD 0.1, highest SNR, 36.6\%), where higher SNR (less noise) often performs better, Mistral-7B-v0.1 favors lower SNR (more noise) in its optimal configuration. This difference likely reflects Mistral's larger capacity, allowing it to benefit from higher noise levels as a stronger regularizer against hallucinations, whereas smaller models may be more sensitive to noise, requiring lower levels to maintain stability.

- \textbf{Layer Selection Implications:}
  The superior performance of the 3-layer configuration suggests an optimal subset exists—possibly layers critical with high variance. Broader noise application (12 layers or all layers) reduces effectiveness (e.g., 12 layers, STD 0.001, lowest SNR: 77.0\%; all layers, STD 0.001: 74.4\%), emphasizing the importance of layer selection strategy for noise injection in mitigating hallucination.

These findings demonstrate that for Mistral-7B-v0.1, injecting significant noise (STD 0.1, lowest SNR) into a small, targeted set of layers (3 layers) optimizes performance, slightly surpassing BaseFiT and outperforming broader noise applications in reducing hallucinations. The category-wise analysis reveals substantial benefits in factual, knowledge-based tasks, though challenges persist in areas like "Animals" and "Art," suggesting limitations in NoiseFiT's applicability across those domains for our specific test dataset.

In conclusion, NoiseFiT proves to be a promising technique for mitigating hallucinations in LLMs, particularly in knowledge-intensive categories, by leveraging noise injection to enhance robustness and reduce overconfidence in incorrect outputs. However, its effectiveness varies across tasks and models, necessitating task-specific optimization of noise injection parameters and further research to address remaining challenges in certain domains.

\begin{table}[H]
\centering
\caption{Category-wise performance of Llama-3.2-1B configurations. Performance is averaged across 5 runs per prompt (208 prompts total). For noise-injected cases (3 layers), the first value is the standard deviation (STD), with 'L' indicating Lowest SNR and 'H' indicating Highest SNR.}
\label{tab:appendix_llama3.2.1b}
\resizebox{\textwidth}{!}{
\begin{tabular}{@{}lcccccc@{}}
\toprule
& \multicolumn{6}{c}{\textbf{Llama-3.2-1B} \small(3 layers if noise injected)} \\ \cmidrule(r){2-7}
\textbf{Category} & {Base} & {BaseFiT} & {0.1, L} & \textbf{0.1, H} & {0.01, L} & {0.01, H} \\ \midrule
Medical (Disease Causes)            & 76.6 & 50.0 & 53.4 & 60.0 & 56.6 & 63.4 \\
Geography -- Landmarks              & 65.4 & 69.0 & 81.8 & 85.4 & 80.0 & 81.8 \\
Geography -- Capitals               & 66.6 & 75.0 & 50.0 & 75.0 & 98.4 & 80.0 \\
Geography -- Currency               & 34.6 & 66.6 & 65.4 & 74.6 & 77.4 & 74.6 \\
Geography -- Landmark Locations     & 53.4 & 100.0 & 100.0 & 91.6 & 91.6 & 100.0 \\
Language                            & 80.0 & 100.0 & 100.0 & 100.0 & 100.0 & 100.0 \\
History (Year Events)               & 54.6 & 91.0 & 100.0 & 100.0 & 87.2 & 85.4 \\
History (When Events)               & 48.4 & 98.4 & 100.0 & 96.6 & 96.6 & 93.4 \\
Inventions                          & 40.0 & 25.0 & 41.2 & 40.0 & 38.8 & 22.6 \\
Animals                             & 14.2 & 34.2 & 29.4 & 36.4 & 22.4 & 24.8 \\
Music/Composers                     & 43.4 & 53.4 & 46.6 & 50.0 & 50.0 & 60.0 \\
Scientific Discoveries              & 57.6 & 36.4 & 36.4 & 41.2 & 35.2 & 29.4 \\
Who Invented                        & 45.2 & 32.6 & 42.2 & 33.6 & 35.8 & 37.8 \\
Sports (Famous Players)             & 74.6 & 30.6 & 26.6 & 29.4 & 30.6 & 32.0 \\
Art (Painting Subjects)             & 22.2 & 13.4 & 12.2 & 10.0 & 13.4 & 12.2 \\
Literature                          & 60.0 & 70.6 & 60.0 & 64.2 & 67.4 & 53.6 \\
Miscellaneous                       & 80.0 & 100.0 & 100.0 & 100.0 & 100.0 & 100.0 \\ \midrule
\textbf{Overall}                              & 48.6 & 54.0 & 53.4 & \textbf{55.8} & 55.4 & 52.4 \\ \bottomrule
\end{tabular}
}
\end{table}

\begin{table}[H]
\centering
\caption{Category-wise performance of Llama-3.2-3B configurations. Performance is averaged across 208 prompts total. For noise-injected cases (3 layers), the first value is the standard deviation (STD), with ‘L’ indicating Lowest SNR and ‘H’ indicating Highest SNR.}
\label{tab:appendix_llama3.2.3b}
\resizebox{\textwidth}{!}{
\begin{tabular}{@{}lcccccc@{}}
\toprule
& \multicolumn{6}{c}{\textbf{Llama-3.2-3B} \small(3 layers if noise injected)} \\ \cmidrule(r){2-7}
\textbf{Category}               & {Base} & {BaseFiT} & {0.1, L} & {0.1, H} & {0.01, L} & \textbf{0.01, H} \\ \midrule
Medical (Disease Causes)        & 73.4 & 63.4 & 80.0 & 63.4 & 70.0 & 80.0 \\
Geography -- Landmarks          & 92.8 & 100.0 & 98.4 & 94.6 & 96.7 & 85.4 \\
Geography -- Capitals           & 78.3 & 91.6 & 85.0 & 91.6 & 91.6 & 91.7 \\
Geography -- Currency           & 84.0 & 100.0 & 98.6 & 100.0 & 98.7 & 100.0 \\
Geography -- Landmark Locations & 86.7 & 96.6 & 100.0 & 100.0 & 100.0 & 100.0 \\
Language                        & 80.0 & 100.0 & 100.0 & 100.0 & 100.0 & 100.0 \\
History (Year Events)           & 85.5 & 90.9 & 63.6 & 81.8 & 89.1 & 96.4 \\
History (When Events)           & 70.0 & 91.6 & 91.6 & 91.6 & 91.6 & 98.3 \\
Inventions                      & 33.8 & 48.8 & 47.6 & 50.0 & 43.8 & 37.5 \\
Animals                         & 27.1 & 22.4 & 11.8 & 15.2 & 16.5 & 32.9 \\
Music/Composers                 & 76.7 & 60.0 & 66.6 & 50.0 & 63.3 & 63.3 \\
Scientific Discoveries          & 49.4 & 48.2 & 49.4 & 58.8 & 52.9 & 55.3 \\
Who Invented                    & 51.6 & 66.4 & 75.8 & 83.2 & 82.1 & 74.7 \\
Sports (Famous Players)         & 10.7 & 49.4 & 54.6 & 57.4 & 53.3 & 60.0 \\
Art (Painting Subjects)         & 45.6 & 25.6 & 16.6 & 21.2 & 16.7 & 20.0 \\
Literature                      & 83.2 & 77.8 & 81.0 & 84.2 & 83.2 & 93.7 \\
Miscellaneous                   & 100.0 & 100.0 & 100.0 & 100.0 & 100.0 & 100.0 \\ \midrule
\textbf{Overall}                & 60.0 & 66.4 & 65.6 & 68.2 & 68.0 & \textbf{70.2} \\ \bottomrule
\end{tabular}
}
\end{table}

\begin{table}[H]
\centering
\caption{Category-wise performance of Gemma-3-1B configurations. Performance is averaged across 208 prompts total. For noise-injected cases (3 layers), the first value is the standard deviation (STD), with ‘L’ indicating Lowest SNR and ‘H’ indicating Highest SNR.}
\label{tab:appendix_gemma3.1b}
\resizebox{\textwidth}{!}{
\begin{tabular}{@{}lcccccc@{}}
\toprule
& \multicolumn{6}{c}{\textbf{Gemma-3-1B-it} \small(3 layers if noise injected)} \\ \cmidrule(r){2-7}
\textbf{Category}                   & {Base} & {BaseFiT} & \textbf{0.1, H} & {0.01, H} & {0.001, L} & {0.01, L} \\ \midrule
Medical (disease causes)            & 43.4 & 76.7 & 60.0 & 70.0 & 66.6 & 70.0 \\
Miscellaneous                       & 80.0 & 100.0 & 100.0 & 20.0 & 100.0 & 80.0 \\
Geography -- Landmarks              & 43.6 & 85.5 & 92.8 & 80.0 & 83.6 & 76.4 \\
Geography -- Capitals               & 78.4 & 100.0 & 100.0 & 100.0 & 100.0 & 100.0 \\
Geography -- Currency               & 73.4 & 93.3 & 86.6 & 97.4 & 96.0 & 92.0 \\
Language                            & 80.0 & 100.0 & 100.0 & 100.0 & 100.0 & 100.0 \\
History (Year events)               & 83.6 & 81.8 & 81.8 & 81.8 & 91.0 & 80.0 \\
History (When events)               & 76.6 & 41.7 & 76.6 & 81.6 & 60.0 & 40.0 \\
Inventions                          & 55.0 & 32.5 & 40.0 & 37.6 & 31.2 & 27.6 \\
Geography -- Landmark Locations     & 70.0 & 66.7 & 100.0 & 100.0 & 100.0 & 20.0 \\
Animals                             & 31.8 & 7.1  & 21.2 & 10.6 & 9.4  & 9.4  \\
Music/Composers                     & 26.6 & 23.3 & 26.6 & 26.6 & 40.0 & 33.4 \\
Scientific Discoveries              & 42.4 & 36.5 & 29.4 & 27.0 & 24.8 & 24.8 \\
Who Invented                        & 65.2 & 45.3 & 63.2 & 57.8 & 50.6 & 53.6 \\
Sports (Famous Players)             & 26.6 & 30.7 & 32.0 & 28.0 & 26.6 & 30.6 \\
Art (Painting Subjects)             & 7.8  & 8.9  & 7.8  & 16.6 & 14.4 & 7.8  \\
Literature                          & 44.2 & 31.6 & 40.0 & 35.8 & 31.6 & 34.8 \\ \midrule
\textbf{Overall}                    & 50.6 & 47.6 & \textbf{54.6} & 53.2 & 51.0 & 43.8 \\ \bottomrule
\end{tabular}
}
\end{table}

\begin{table}[H]
\centering
\caption{Category-wise performance of Qwen2.5-0.5B configurations. Performance is averaged across 208 prompts total. For noise-injected cases (3 layers), the first value is the standard deviation (STD), with ‘L’ indicating Lowest SNR and ‘H’ indicating Highest SNR.}
\label{tab:appendix_qwen2.5_0.5b}
\resizebox{\textwidth}{!}{
\begin{tabular}{@{}lcccccc@{}}
\toprule
& \multicolumn{6}{c}{\textbf{Qwen2.5-0.5B} \small(3 layers if noise injected)} \\ \cmidrule(r){2-7}
\textbf{Category}                   & {Base} & {BaseFiT} & {0.1, L} & \textbf{0.1, H} & {0.01, L} & {0.01, H} \\ \midrule
Medical (disease causes)            & 66.7 & 76.6 & 80.0 & 80.0 & 86.6 & 73.4 \\
Miscellaneous                       & 60.0 & 100.0 & 100.0 & 100.0 & 20.0 & 100.0 \\
Geography -- Landmarks              & 21.8 & 31.0 & 16.4 & 40.0 & 18.2 & 36.4 \\
Geography -- Capitals               & 31.7 & 75.0 & 85.0 & 86.6 & 76.6 & 71.6 \\
Geography -- Currency               & 38.7 & 50.6 & 74.6 & 77.4 & 69.4 & 78.6 \\
Language                            & 80.0 & 100.0 & 20.0 & 100.0 & 80.0 & 100.0 \\
History (Year events)               & 43.6 & 45.4 & 63.6 & 58.2 & 51.0 & 61.8 \\
History (When events)               & 38.3 & 68.4 & 73.4 & 73.4 & 61.6 & 66.6 \\
Inventions                          & 23.8 & 5.0  & 11.2 & 10.0 & 11.2 & 7.6 \\
Geography -- Landmark Locations     & 60.0 & 93.4 & 88.4 & 93.4 & 81.6 & 80.0 \\
Animals                             & 17.6 & 5.8  & 3.6  & 18.8 & 9.4  & 7.0 \\
Music/Composers                     & 0.0  & 0.0  & 0.0  & 0.0  & 0.0  & 0.0 \\
Scientific Discoveries              & 28.2 & 11.8 & 11.8 & 13.0 & 7.0  & 14.2 \\
Who Invented                        & 21.0 & 10.6 & 11.6 & 7.4  & 11.6 & 8.4 \\
Sports (Famous Players)             & 9.3  & 6.6  & 9.4  & 17.4 & 9.4  & 20.0 \\
Art (Painting Subjects)             & 3.3  & 3.4  & 1.2  & 3.4  & 2.2  & 1.2 \\
Literature                          & 16.8 & 8.4  & 26.4 & 25.2 & 20.0 & 20.0 \\ \midrule
\textbf{Overall}                    & 26.4 & 28.8 & 33.0 & \textbf{36.6} & 30.2 & 33.0 \\ \bottomrule
\end{tabular}
}
\end{table}

\begin{sidewaystable}
\centering
\caption{Category-wise performance of Mistral-7B-v0.1 configurations. Performance is typically averaged across 208 prompts. For noise-injected cases, column labels show \#Layers, (STD, SNR); with ‘L’ indicating Lowest SNR and ‘H’ indicating Highest SNR.}
\label{tab:appendix_mistral_7b_v0.1}
\resizebox{\textwidth}{!}{
\begin{tabular}{@{}lccccccccccccccc@{}}
\toprule
& \multicolumn{15}{c}{\textbf{Mistral-7B-v0.1}} \\
\cmidrule(r){2-16}
\textbf{Category} 
& {Base} 
& {BaseFiT} 
& \textbf{3 (0.1, L)} 
& {3 (0.1, H)} 
& {3 (0.01, L)} 
& {3 (0.01, H)} 
& {12 (0.001, L)} 
& {12 (0.001, H)} 
& {12 (0.01, L)} 
& {12 (0.01, H)} 
& {12 (0.1, L)} 
& {12 (0.1, H)} 
& {All (0.001)} 
& {All (0.01)} 
& {All (0.1)} \\
\midrule
Medical (Disease Causes)            
& 90.0 & 93.3 & 100.0 & 96.6 & 100.0 & 100.0 & 83.4 & 83.4 & 83.4 & 83.4 & 83.4 & 83.4 & 83.4 & 83.4 & 83.4 \\
Miscellaneous                       
& 80.0 & 100.0 & 100.0 & 100.0 & 100.0 & 100.0 & 100.0 & 100.0 & 100.0 & 100.0 & 100.0 & 100.0 & 100.0 & 100.0 & 100.0 \\
Geography -- Landmarks              
& 90.9 & 96.4 & 89.1 & 94.6 & 94.6 & 98.2 & 98.2 & 100.0 & 100.0 & 96.4 & 72.8 & 100.0 & 100.0 & 100.0 & 78.2 \\
Geography -- Capitals               
& 93.3 & 100.0 & 100.0 & 100.0 & 100.0 & 100.0 & 100.0 & 100.0 & 96.6 & 100.0 & 100.0 & 100.0 & 100.0 & 100.0 & 100.0 \\
Geography -- Currency               
& 81.3 & 100.0 & 100.0 & 100.0 & 98.6 & 100.0 & 100.0 & 100.0 & 100.0 & 100.0 & 100.0 & 100.0 & 100.0 & 100.0 & 100.0 \\
Language                            
& 80.0 & 100.0 & 100.0 & 100.0 & 100.0 & 100.0 & 100.0 & 100.0 & 100.0 & 100.0 & 100.0 & 100.0 & 100.0 & 100.0 & 100.0 \\
History (Year Events)               
& 89.1 & 96.4 & 90.9 & 91.0 & 91.0 & 90.9 & 100.0 & 87.2 & 92.8 & 89.0 & 98.2 & 96.4 & 100.0 & 100.0 & 98.2 \\
History (When Events)               
& 91.7 & 100.0 & 100.0 & 100.0 & 100.0 & 100.0 & 100.0 & 100.0 & 100.0 & 100.0 & 100.0 & 100.0 & 100.0 & 100.0 & 100.0 \\
Inventions                          
& 37.5 & 58.8 & 53.8 & 56.2 & 50.0 & 52.5 & 58.8 & 67.6 & 56.2 & 58.8 & 55.0 & 43.8 & 45.0 & 47.6 & 56.2 \\
Geography -- Landmark Locations     
& 100.0 & 100.0 & 100.0 & 100.0 & 100.0 & 100.0 & 100.0 & 100.0 & 100.0 & 100.0 & 100.0 & 100.0 & 100.0 & 100.0 & 100.0 \\
Animals                             
& 63.5 & 34.1 & 38.8 & 25.8 & 29.4 & 36.5 & 35.2 & 43.6 & 33.0 & 35.2 & 34.2 & 30.6 & 24.8 & 31.8 & 34.2 \\
Music/Composers                     
& 73.3 & 80.0 & 66.7 & 70.0 & 66.6 & 66.7 & 83.4 & 66.6 & 83.4 & 83.4 & 83.4 & 83.4 & 83.4 & 83.4 & 83.4 \\
Scientific Discoveries              
& 57.6 & 81.2 & 82.4 & 83.6 & 88.2 & 81.2 & 76.4 & 76.4 & 76.4 & 76.4 & 74.2 & 82.4 & 75.2 & 65.8 & 76.4 \\
Who Invented                        
& 66.3 & 82.1 & 82.1 & 85.2 & 84.2 & 78.9 & 81.0 & 69.4 & 73.6 & 73.6 & 76.8 & 73.6 & 79.0 & 73.6 & 77.8 \\
Sports (Famous Players)             
& 48.0 & 74.7 & 90.7 & 89.4 & 92.0 & 77.3 & 82.6 & 81.4 & 69.4 & 77.4 & 85.4 & 77.4 & 80.0 & 88.0 & 82.6 \\
Art (Painting Subjects)             
& 42.2 & 32.2 & 37.8 & 28.8 & 33.4 & 30.0 & 31.2 & 23.4 & 38.8 & 33.4 & 40.0 & 40.0 & 27.8 & 38.8 & 38.8 \\
Literature                          
& 81.1 & 75.8 & 78.9 & 68.4 & 82.2 & 76.8 & 71.6 & 73.6 & 63.2 & 75.8 & 80.0 & 67.4 & 70.6 & 75.8 & 70.6 \\
\midrule
\textbf{Overall}                    
& 70.6 & 77.2 & \textbf{78.4} & 77.4 & 76.5 & 77.3 & 77.0 & 75.6 & 74.4 & 75.8 & 76.4 & 75.2 & 74.4 & 75.8 & 76.4 \\
\bottomrule
\end{tabular}
}
\end{sidewaystable}

\subsection{Statistical Analysis}\label{ses:statistical-analysis}
A hallucination score measures the degree to which a language model generates hallucinated outputs. Figure~\ref{fig:hallucination-errorbar} (panels a–d) and Figure~\ref{fig:Mistral-7B-errorbar} display the mean hallucination scores (with standard error interval for the mean) for each noisy fine‐tuned variant alongside its base variants.  To assess whether noisy fine-tuning leads to statistically meaningful changes in the distribution of hallucination scores (beyond what can be inferred from mean and error‐bar overlap alone), we employ the \emph{Epps–Singleton two-sample test}~\cite{Epps1986AnOT, goerg2009nonparametric}.  This test is a nonparametric method for comparing two continuous distributions without assuming normality or equal variances—properties. Moreover, by examining the full empirical distribution rather than only its first two moments (mean and variance), the Epps–Singleton test can detect shifts in shape, tail behavior, or modality that might be missed by simpler tests.

All pairwise comparisons against the base model were performed in Python using SciPy’s \texttt{epps\_singleton\_2samp} function, with Holm’s method applied to control the familywise error rate at $\alpha=0.05$.  The detailed results for each model family appear in Tables~\ref{tab:llama1B_stest}–\ref{tab:mistral_stest}. Across all models, the adjusted p-values are consistently low (e.g., 0.000 to 0.018), and the "Significant" column is uniformly True, leading to the rejection of \(H_0\), indicating that the distributions of hallucination scores for the noisy fine-tuned models differ significantly from those of their base variants. The significant results across nearly all experiments confirm that adding noise during fine-tuning isn’t a trivial change—it alters how the model generates outputs, as reflected in the hallucination scores. This suggests that noise acts as a regularizer, helping the model avoid overfitting to the training data and improving its ability to handle new inputs without hallucinating.

\begin{sidewaysfigure}
  \centering
  \resizebox{\textheight}{!}{%
    \begin{minipage}{\textheight}
      \centering
      \begin{subfigure}[t]{0.48\textheight}
        \includegraphics[width=\linewidth]{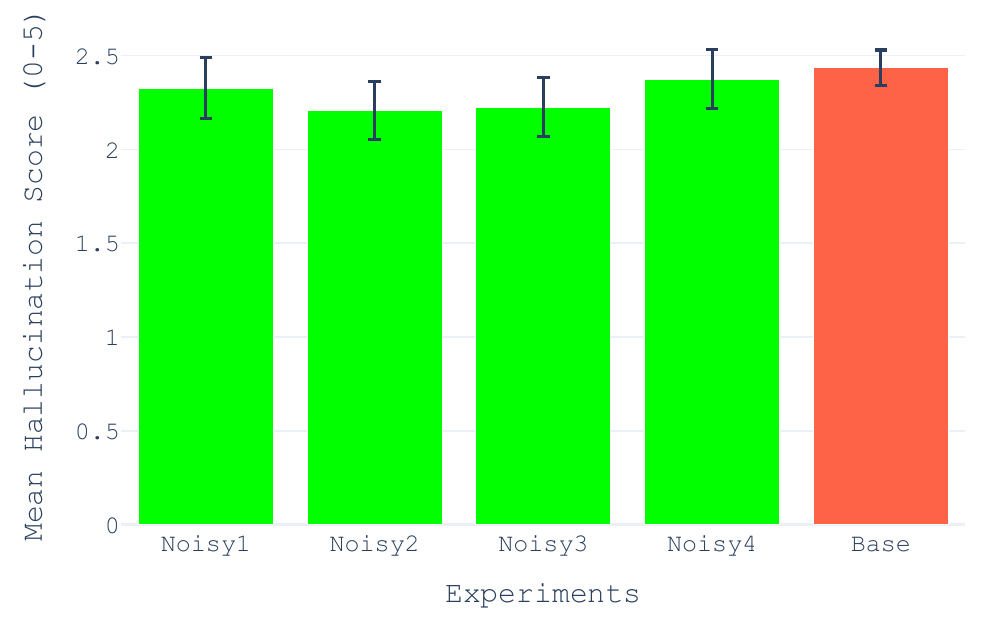}
        \caption{Llama-3.2-1B}
        \label{fig:Llama3.2.1B-errorbar}
      \end{subfigure}\hfill
      \begin{subfigure}[t]{0.48\textheight}
        \includegraphics[width=\linewidth]{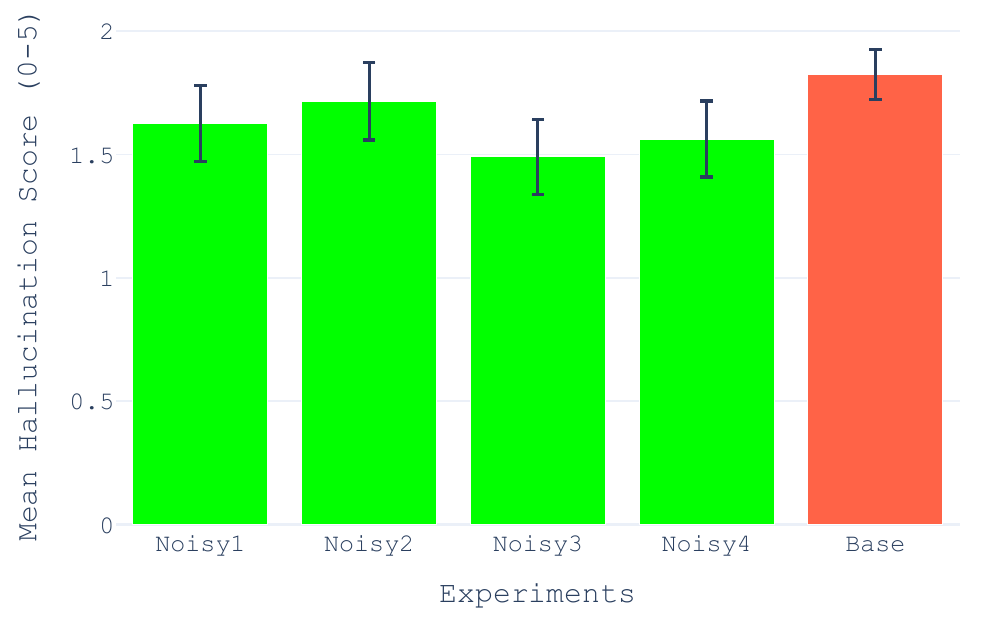}
        \caption{Llama-3.2-3B}
        \label{fig:Llama-3.2-3B-errorbar}
      \end{subfigure}

      \vspace{1em}

      \begin{subfigure}[t]{0.48\textheight}
        \includegraphics[width=\linewidth]{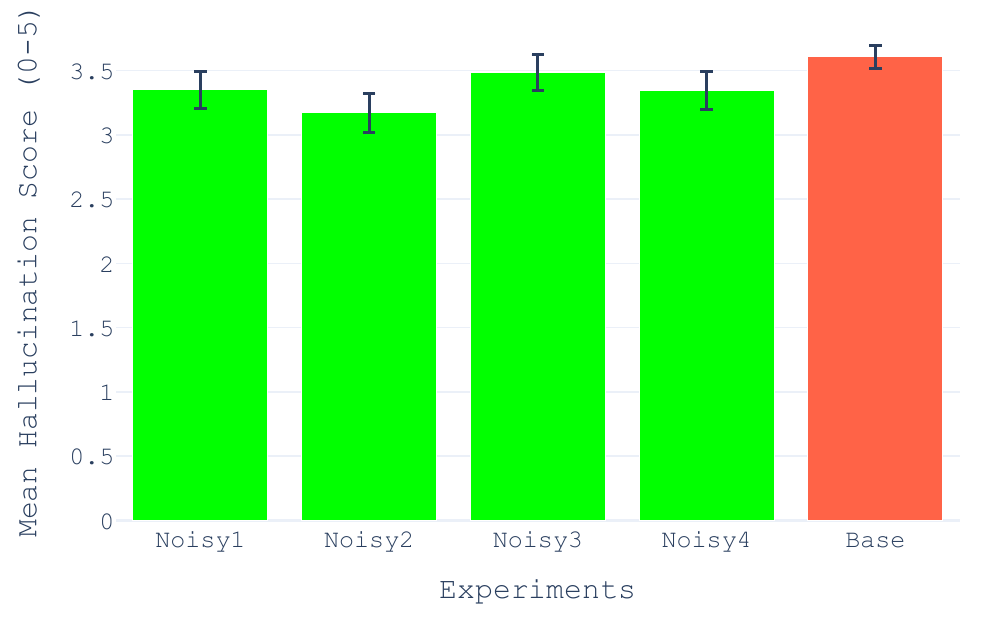}
        \caption{Qwen2.5-0.5B}
        \label{fig:qwen2.5-0.5B-errorbar}
      \end{subfigure}\hfill
      \begin{subfigure}[t]{0.48\textheight}
        \includegraphics[width=\linewidth]{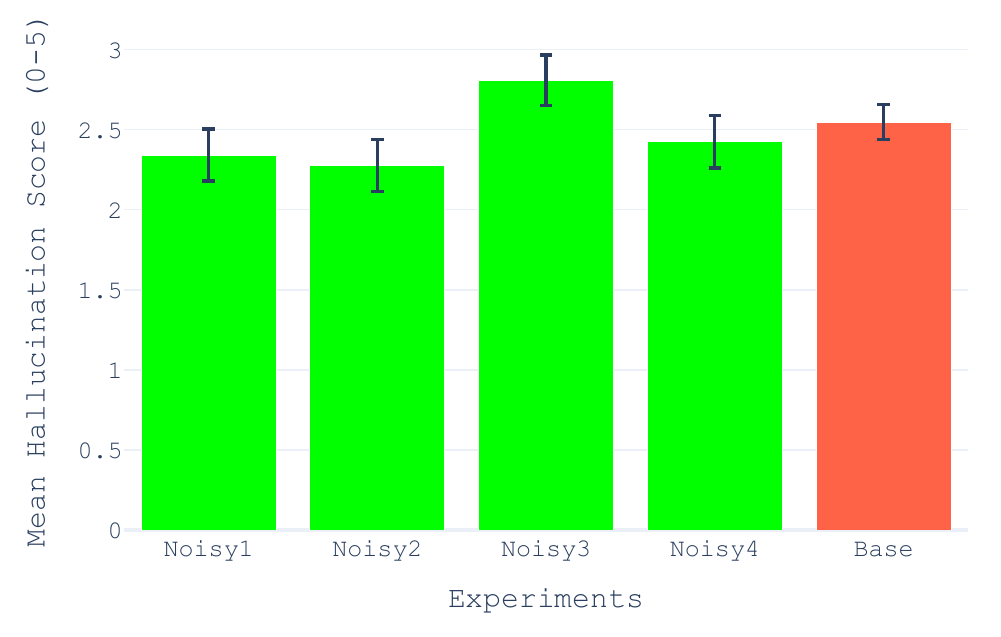}
        \caption{Gemma-3-1b-it}
        \label{fig:gemma-3-1b-it-errorbar}
      \end{subfigure}
    \end{minipage}%
  }
  \caption{Error bars for the mean hallucination scores across the models and experiments}
  \label{fig:hallucination-errorbar}
\end{sidewaysfigure}

\begin{sidewaysfigure}
    \centering
    \includegraphics[width=0.82\linewidth]{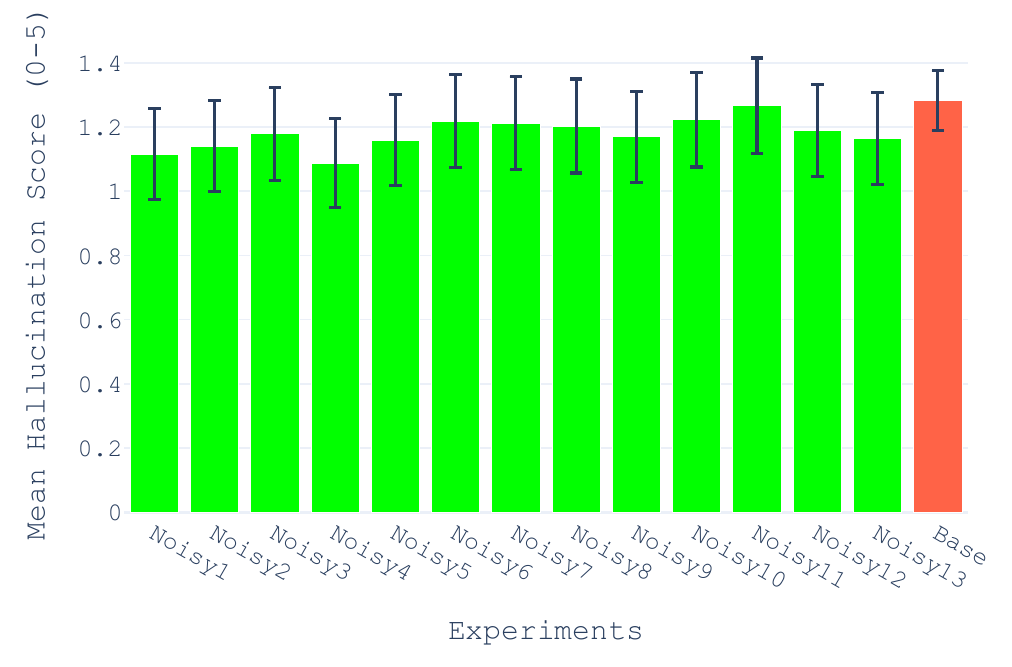}
    \caption{Error bars for the mean hallucination scores across the experiments for Mistral-7B-V0.1.}
    \label{fig:Mistral-7B-errorbar}
\end{sidewaysfigure}

\begin{table}[H]
\centering
\caption{Statistical comparison between noisy fine-tuned models and the base variants of Llama-3.2-1B.}
\label{tab:llama1B_stest}
\resizebox{\textwidth}{!}{
\begin{tabular}{llrrrcl}
\toprule
Experiment & Test used & Statistic & P-value raw & P-value adjusted & Significant & Interpretation \\
\midrule
Noisy1 & Epps-Singleton & 93.975 & 0.000 & 0.000 & True & Reject $H_0$ $\rightarrow$ distributions differ \\
Noisy2 & Epps-Singleton & 52.584 & 0.000 & 0.000 & True & Reject $H_0$ $\rightarrow$ distributions differ \\
Noisy3 & Epps-Singleton & 60.025 & 0.000 & 0.000 & True & Reject $H_0$ $\rightarrow$ distributions differ \\
Noisy4 & Epps-Singleton & 53.175 & 0.000 & 0.000 & True & Reject $H_0$ $\rightarrow$ distributions differ \\
\bottomrule
\end{tabular}
}
\end{table}

\begin{table}[H]
\centering
\caption{Statistical comparison between noisy fine-tuned models and the base variants of Llama-3.2-3B.}
\label{tab:llama3B_stest}
\resizebox{\textwidth}{!}{
\begin{tabular}{llrrrcl}
\toprule
Experiment & Test used & Statistic & P-value raw & P-value adjusted & Significant & Interpretation \\
\midrule
Noisy1 & Epps-Singleton & 46.320 & 0.000 & 0.000 & True & Reject $H_0$ $\rightarrow$ distributions differ \\
Noisy2 & Epps-Singleton & 57.663 & 0.000 & 0.000 & True & Reject $H_0$ $\rightarrow$ distributions differ \\
Noisy3 & Epps-Singleton & 60.415 & 0.000 & 0.000 & True & Reject $H_0$ $\rightarrow$ distributions differ \\
Noisy4 & Epps-Singleton & 62.553 & 0.000 & 0.000 & True & Reject $H_0$ $\rightarrow$ distributions differ \\
\bottomrule
\end{tabular}
}
\end{table}

\begin{table}[H]
\centering
\caption{Statistical comparison between noisy fine-tuned models and the base variants of Qwen2.5-0.5B.}
\label{tab:qwen_stest}
\resizebox{\textwidth}{!}{
\begin{tabular}{llrrrcl}
\toprule
Experiment & Test used & Statistic & P-value raw & P-value adjusted & Significant & Interpretation \\
\midrule
Noisy1 & Epps-Singleton & 19.060 & 0.001 & 0.002 & True & Reject $H_0$ $\rightarrow$ distributions differ \\
Noisy2 & Epps-Singleton & 26.945 & 0.000 & 0.000 & True & Reject $H_0$ $\rightarrow$ distributions differ \\
Noisy3 & Epps-Singleton & 11.742 & 0.019 & 0.019 & True & Reject $H_0$ $\rightarrow$ distributions differ \\
Noisy4 & Epps-Singleton & 33.286 & 0.000 & 0.000 & True & Reject $H_0$ $\rightarrow$ distributions differ \\
\bottomrule
\end{tabular}
}
\end{table}

\begin{table}[H]
\centering
\caption{Statistical comparison between noisy fine-tuned models and the base variants of Gemma-3-1B-it.}
\label{tab:gemma_stest}
\resizebox{\textwidth}{!}{
\begin{tabular}{llrrrcl}
\toprule
Experiment & Test used & Statistic & P-value raw & P-value adjusted & Significant & Interpretation \\
\midrule
Noisy1 & Epps-Singleton & 19.498 & 0.001 & 0.002 & True & Reject $H_0$ $\rightarrow$ distributions differ \\
Noisy2 & Epps-Singleton & 23.610 & 0.000 & 0.000 & True & Reject $H_0$ $\rightarrow$ distributions differ \\
Noisy3 & Epps-Singleton & 7.663 & 0.105 & 0.105 & False & Fail to reject $H_0$ $\rightarrow$ no evidence of difference \\
Noisy4 & Epps-Singleton & 18.095 & 0.001 & 0.002 & True & Reject $H_0$ $\rightarrow$ distributions differ \\
\bottomrule
\end{tabular}
}
\end{table}

\begin{table}[H]
\centering
\caption{Statistical comparison between noisy fine-tuned models and the base variants of Mistral-7B-v0.1.}
\label{tab:mistral_stest}
\resizebox{\textwidth}{!}{
\begin{tabular}{llrrrcl}
\toprule
Experiment & Test used & Statistic & P-value raw & P-value adjusted & Significant & Interpretation \\
\midrule
Noisy1 & Epps-Singleton & 63.822 & 0.000 & 0.000 & True & Reject $H_0$ $\rightarrow$ distributions differ \\
Noisy2 & Epps-Singleton & 49.946 & 0.000 & 0.000 & True & Reject $H_0$ $\rightarrow$ distributions differ \\
Noisy3 & Epps-Singleton & 63.714 & 0.000 & 0.000 & True & Reject $H_0$ $\rightarrow$ distributions differ \\
Noisy4 & Epps-Singleton & 50.750 & 0.000 & 0.000 & True & Reject $H_0$ $\rightarrow$ distributions differ \\
Noisy5 & Epps-Singleton & 54.740 & 0.000 & 0.000 & True & Reject $H_0$ $\rightarrow$ distributions differ \\
Noisy6 & Epps-Singleton & 54.225 & 0.000 & 0.000 & True & Reject $H_0$ $\rightarrow$ distributions differ \\
Noisy7 & Epps-Singleton & 63.442 & 0.000 & 0.000 & True & Reject $H_0$ $\rightarrow$ distributions differ \\
Noisy8 & Epps-Singleton & 83.037 & 0.000 & 0.000 & True & Reject $H_0$ $\rightarrow$ distributions differ \\
Noisy9 & Epps-Singleton & 57.399 & 0.000 & 0.000 & True & Reject $H_0$ $\rightarrow$ distributions differ \\
Noisy10 & Epps-Singleton & 77.249 & 0.000 & 0.000 & True & Reject $H_0$ $\rightarrow$ distributions differ \\
Noisy11 & Epps-Singleton & 85.994 & 0.000 & 0.000 & True & Reject $H_0$ $\rightarrow$ distributions differ \\
Noisy12 & Epps-Singleton & 60.581 & 0.000 & 0.000 & True & Reject $H_0$ $\rightarrow$ distributions differ \\
Noisy13 & Epps-Singleton & 69.129 & 0.000 & 0.000 & True & Reject $H_0$ $\rightarrow$ distributions differ \\
\bottomrule
\end{tabular}
}
\end{table}

\section{Computational Efficiency and Scalability Analysis}\label{sec:app:computational-efficiency}

To evaluate the computational efficiency and scalability of the proposed NoiseFiT framework compared to the common fine-tuning (BaseFiT), we analyzed a series of GPU performance metrics recorded during the experiments (Figures \ref{fig:training_gpu_utilization} and \ref{fig:training_mem}). The metrics under consideration include:
\begin{itemize}
    \item \textbf{GPU Memory Allocated (\%)} – Indicates the percentage of total GPU memory used.
    \item \textbf{GPU Power Usage (\%)} – Reflects the power consumption during model training.
    \item \textbf{GPU Temperature (°C)} – Monitors the thermal performance of the GPU.
    \item \textbf{Time Spent Accessing Memory (\%)} – Measures the relative time the GPU spent in memory operations.
    \item \textbf{GPU Utilization (\%)} – Captures the overall usage of the GPU computational resources.
\end{itemize}

For each metric, we computed the mean and standard deviation over multiple experimental runs. Table~\ref{tab:appendix_GPU_summary_metrics} summarizes the performance for both BaseFiT (Base) and NoiseFiT configurations. The results indicate that the NoiseFiT framework exhibits a mixed performance profile across the evaluated GPU metrics:
\begin{itemize}
    \item \textbf{Memory and Power Efficiency:} While NoiseFiT requires a higher GPU memory allocation (61.3\% vs. 35.5\%), it achieves reduced power usage (64.0\% vs. 67.3\%). This suggests that, despite the increased memory demand, NoiseFiT benefits from lower energy consumption during training.
    \item \textbf{Thermal Performance and Memory Operations:} The GPU temperature and the time spent accessing memory are marginally elevated in the NoiseFiT setup (58.6°C and 50.7\%, respectively) compared to BaseFiT (57.8°C and 49.0\%). These slight differences indicate that thermal management and memory operation times remain largely comparable between the two approaches.
    \item \textbf{Overall Utilization:} The slightly lower overall GPU utilization observed with NoiseFiT (75.5\% vs. 77.2\%) implies that similar or improved performance may be achieved with a reduced computational load, which is beneficial for scalability.
\end{itemize}

In summary, the performance trade-offs observed with the NoiseFiT suggest a viable balance between computational efficiency and resource allocation. Although NoiseFiT demands higher memory usage and shows marginal increases in thermal metrics, its reduced power consumption and overall GPU utilization indicate that it can mitigate hallucinations while decreasing the computational overhead associated with training. These benefits are especially critical when scaling large language models in resource-constrained environments, thereby enhancing both the practicality and the environmental sustainability of deploying such systems.

\begin{table}[H]
    \centering
    \caption{Summary of GPU performance metrics statistics comparing Base Fine Tuning (BaseFiT) and Noisy Fine Tuning (NoiseFiT) workflows. Values represent the mean $\pm$ standard deviation across multiple runs.}
    \label{tab:appendix_GPU_summary_metrics}
    \begin{tabular}{lcc}
        \toprule
        \textbf{Metric} & \textbf{BaseFiT} & \textbf{NoiseFiT} \\
        \midrule
        GPU Memory Allocated (\%)         & 35.5 $\pm$ 0.0  & 61.3 $\pm$ 0.5  \\
        GPU Power Usage (\%)                & 67.3 $\pm$ 14.0 & 64.0 $\pm$ 16.1 \\
        GPU Temperature (°C)                & 57.8 $\pm$ 2.2  & 58.6 $\pm$ 1.2  \\
        Time Spent Accessing Memory (\%)    & 49.0 $\pm$ 13.5 & 50.7 $\pm$ 19.4 \\
        GPU Utilization (\%)                & 77.2 $\pm$ 20.6 & 75.5 $\pm$ 18.3 \\
        \bottomrule
    \end{tabular}
\end{table}

\begin{figure}[htbp]
  \centering
  \includegraphics[
    width=\textwidth,     
    keepaspectratio        
  ]{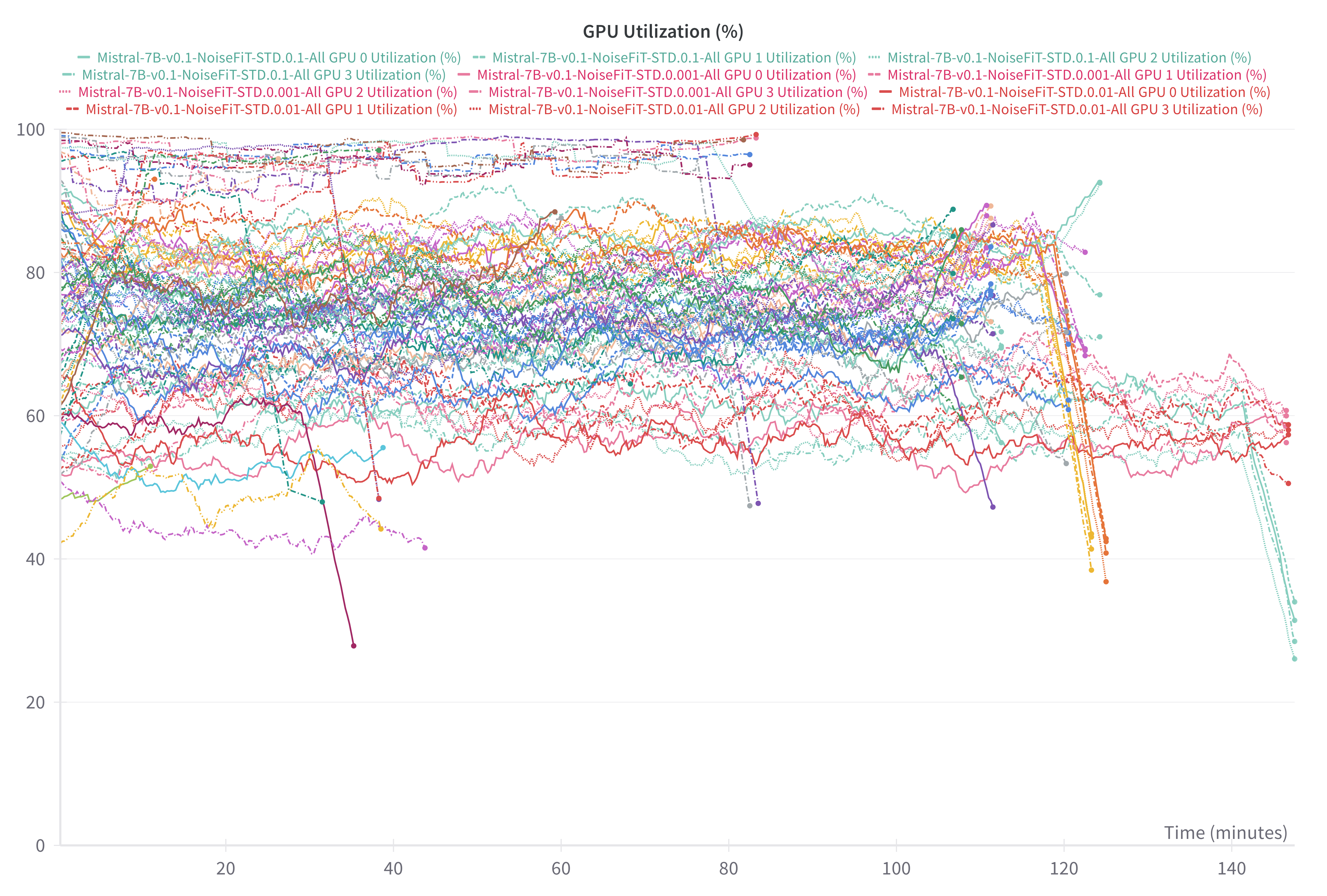}
  \caption{NoiseFiT training GPU utilization history for different models, noise injection STD and layer selection strategies. Available in interactive mode online at \href{https://api.wandb.ai/links/afshin-khadangi-university-of-luxembourg/kzxs9pkx}{W\&B}.}
  \label{fig:training_gpu_utilization}
\end{figure}

\begin{figure}[htbp]
  \centering
  \includegraphics[
    width=\textwidth,     
    keepaspectratio        
  ]{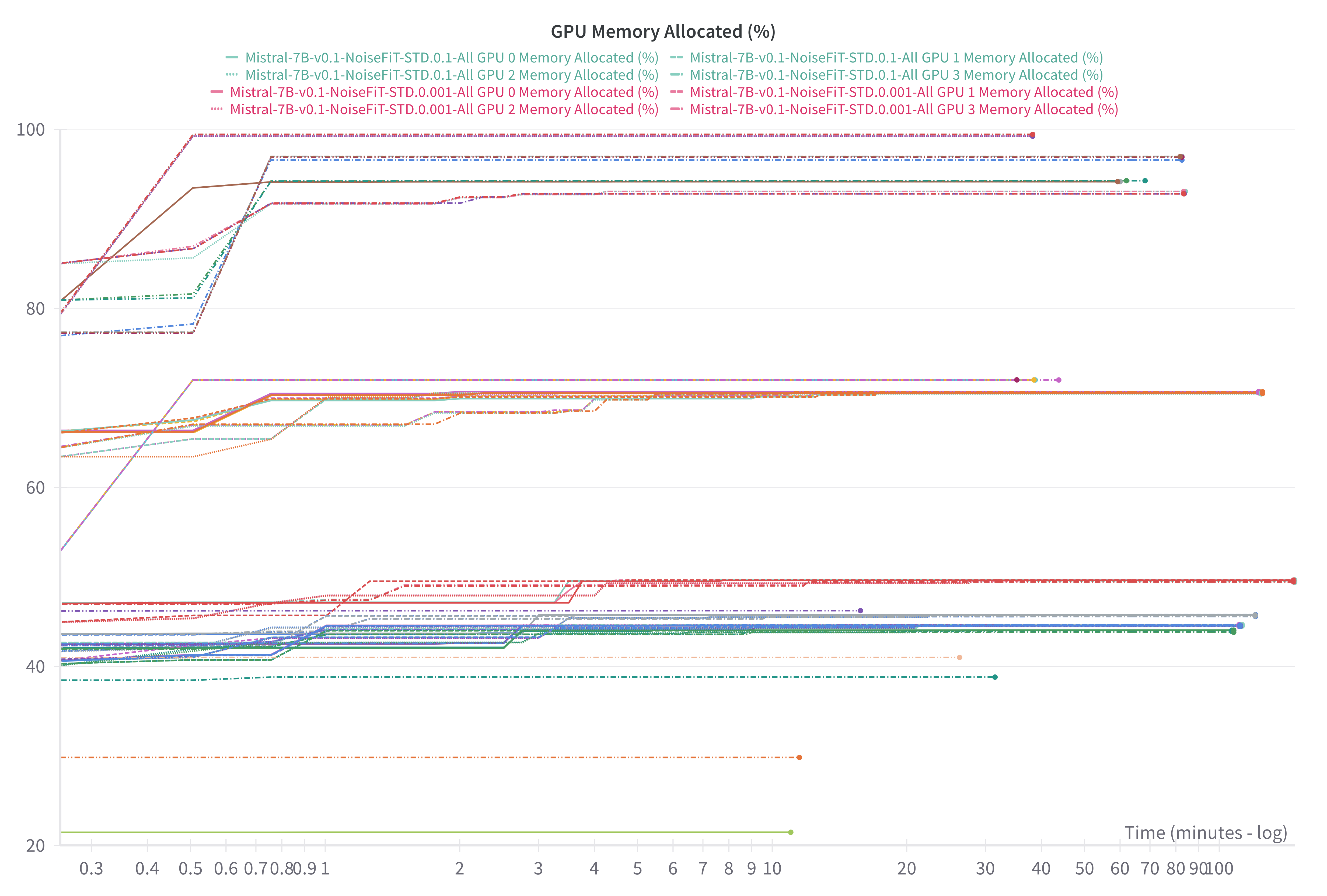}
  \caption{NoiseFiT training GPU memory allocation history for different models, noise injection STD and layer selection strategies. Available in interactive mode online at \href{https://api.wandb.ai/links/afshin-khadangi-university-of-luxembourg/kzxs9pkx}{W\&B}.}
  \label{fig:training_mem}
\end{figure}

\section{Analysis of Layer-wise Metrics}\label{sec:analysis-layer-metrics}

In this section, we analyze the layer-wise insights for the models. First, we provide an analysis of the SNR trends per layer across multiple noise standard deviation (STD) values in the five models (Figures~\ref{fig:Llama-3.2-1B-snr}-~\ref{fig:Mistral-7B-v0.1-snr}). Then, we provide an analysis of the metrics including sparsity, variance, logit entropy, attention entropy, mean L2 norm, and rank of the hidden states across layers.

For Llama-3.2-1B, Llama-3.2-3B, and Mistral-7B-v0.1, SNR increases with layer index for all noise STD values. Conversely, gemma-3-1b-it shows a unique decreasing SNR trend across layers, with the decline more pronounced at lower STD values, indicating greater noise sensitivity in deeper layers. Qwen2.5-0.5B presents a mixed trend: SNR remains stable for lower STD values but declines for higher STD values, reflecting varying noise tolerance. Across all models, higher STD values consistently yield lower SNR. The diversity in trends suggests that the model's architecture plays a crucial role in how noise injection influences the fine-tuning.

Here we briefly define the metrics used:
\begin{itemize}
    \item \textbf{Sparsity}: The proportion of zero or near-zero values in the hidden states, indicating how many features are inactive. Higher sparsity suggests a focus on fewer, potentially more robust features.
    \item \textbf{Variance}: The spread of hidden state activations. Higher variance may indicate greater expressiveness, while lower variance suggests stability.
    \item \textbf{Logit Entropy}: Measures uncertainty in the model's output predictions. Lower entropy reflects higher confidence, while higher entropy indicates more uncertainty.
    \item \textbf{Attention Entropy}: Assesses the distribution of attention weights. Lower entropy implies concentrated attention on specific tokens, while higher entropy suggests more uniform attention.
    \item \textbf{Mean L2 Norm}: The magnitude of hidden state activations. Larger norms indicate stronger activations, while smaller norms suggest subdued activations.
    \item \textbf{Rank}: The effective rank of hidden states, reflecting the dimensionality of information processed. Higher rank suggests more complex representations.
\end{itemize}

\subsection{LLaMA-3.2-1B}

\begin{itemize}
    \item \textbf{Sparsity (Fig. \ref{fig:Llama3.2.1B-sparsity})}:  
    The noisy variants show higher sparsity across most layers, especially in the middle layers, compared to BaseFiT and the Base model. This suggests that noise promotes sparser, potentially more robust representations. BaseFiT exhibits lower sparsity, indicating reliance on more features, while the Base model maintains moderate sparsity compared to other models.
    
    \item \textbf{Variance (Fig. \ref{fig:Llama3.2.1B-variance})}:  
    Base model displays higher variance, reflecting more diverse activations. BaseFiT and noisy variants show lower variance, suggesting more stable activations.
    
    \item \textbf{Logit Entropy (Fig. \ref{fig:Llama3.2.1B-logit-entropy})}:  
    Noisy variants exhibit lower logit entropy median and higher logit entropy variance, which may improve calibration. Base model shows moderate entropy relatively higher median with less variance.
    
    \item \textbf{Attention Entropy (Fig. \ref{fig:Llama3.2.1B-attention-entropy})}:  
    Noisy variants have higher attention entropy, implying more distributed attention across tokens. The Base model's entropy increases gradually across layers.
    
    \item \textbf{Mean L2 Norm (Fig. \ref{fig:Llama3.2.1-MeanL2Norm})}:  
    Noisy variants have lower mean L2 norms, especially in deeper layers, suggesting smaller activations. Base model exhibits higher norms, indicating stronger activations.
    
    \item \textbf{Rank (Fig. \ref{fig:Llama3.2.1B-rank})}:  
    The rank is lower for base model, particularly in later layers, suggesting compressed representations. The noisy variants show consistently higher rank.
\end{itemize}

\subsection{LLaMA-3.2-3B}

\begin{itemize}
    \item \textbf{Sparsity (Fig. \ref{fig:LLaMA-3.2-3B-Sparsity})}:  
    Noisy variants exhibit fluctuating higher sparsity, especially in middle and later layers.
    
    \item \textbf{Variance (Fig. \ref{fig:LLaMA-3.2-3B-Variance})}:  
    Noisy variants have higher variance. The Base model shows the lowest variance.
    
    \item \textbf{Logit Entropy (Fig. \ref{fig:LLaMA-3.2-3B-LogitEntropy})}:  
    Noisy variants display lower logit entropy, indicating greater confidence, while Base model has higher entropy.
    
    \item \textbf{Attention Entropy (Fig. \ref{fig:LLaMA-3.2-3B-Attention-Entropy})}:  
    Noisy variants have slightly higher attention entropy, especially in deeper layers, suggesting distributed attention. Base model's entropy decreases gradually relative to other models.
    
    \item \textbf{Mean L2 Norm (Fig. \ref{fig:LLaMA-3.2-3B-MeanL2Norm})}:  
    Mean L2 norms exhibit relatively similar pattern across all models.
    
    \item \textbf{Rank (Fig. \ref{fig:LLaMA-3.2-3B-Rank})}:  
    Ranks exhibit relatively similar pattern across all models.
\end{itemize}

\subsection{Qwen2.5-0.5B}

\begin{itemize}
    \item \textbf{Sparsity (Fig. \ref{fig:Qwen2.5-0.5B-Sparsity})}:  
    Noisy variants show higher sparsity, particularly in earlier layers, compared to the Base model.
    
    \item \textbf{Variance (Fig. \ref{fig:Qwen2.5-0.5B-Variance})}:  
    Base model exhibits relatively higher variance in the middle layers with the BaseFiT and noisy variants maintaining lower variance across these layers.
    
    \item \textbf{Logit Entropy (Fig. \ref{fig:Qwen2.5-0.5B-LogitEntropy})}:  
    Noisy variants have show higher output logit entropy median and variance, indicating less confidence. Base model shows lower entropy.
    
    \item \textbf{Attention Entropy (Fig. \ref{fig:Qwen2.5-0.5B-AttentionEntropy})}:  
    Mean attention entropy exhibits similar pattern across all models.
    
    \item \textbf{Mean L2 Norm (Fig. \ref{fig:Qwen2.5-0.5B-MeanL2Norm})}:  
    Mean L2 norms exhibit relatively similar pattern across all models.
    
    \item \textbf{Rank (Fig. \ref{fig:Qwen2.5-0.5B-Rank})}:  
    Ranks exhibit relatively similar pattern across all models.
\end{itemize}

\subsection{Gemma-3-1b-it}

\begin{itemize}
    \item \textbf{Sparsity (Fig. \ref{fig:Gemma-3-1b-it-Sparsity})}:  
    Sparsity exhibits relatively similar pattern across all models.
    
    \item \textbf{Variance (Fig. \ref{fig:Gemma-3-1b-it-Variance})}:  
    Variance exhibits relatively similar pattern across all models.
    
    \item \textbf{Logit Entropy (Fig. \ref{fig:Gemma-3-1b-it-LogitEntropy})}:  
    Noisy variants display relatively higher logit entropy median and higher variance, while Base model shows lower entropy median and variance.
    
    \item \textbf{Attention Entropy (Fig. \ref{fig:Gemma-3-1b-it-AttentionEntropy})}:  
    Attention entropy exhibits relatively similar pattern across all models.
    
    \item \textbf{Mean L2 Norm (Fig. \ref{fig:Gemma-3-1b-it-MeanL2Norm})}:  
    Logit entropy exhibits relatively similar pattern across all models with the noisy variants exhibiting relatively higher mean L2 norms in deeper layers.
    
    \item \textbf{Rank (Fig. \ref{fig:Gemma-3-1b-it-Rank})}:  
    Rank exhibits relatively similar pattern across all models with the noisy variants exhibiting relatively higher rank in deeper layers.
\end{itemize}

\subsection{Mistral-7B-v0.1}

Based on the layer-wise metrics for Mistral-7B-v0.1, we observe the following trends:

\begin{itemize}
    \item \textbf{Sparsity (Fig. \ref{fig:Mistral-7B-v0.1-Sparsity})}:  
    Sparsity exhibits similar pattern across all models.
    
    \item \textbf{Variance (Fig. \ref{fig:Mistral-7B-v0.1-Variance})}:  
    Variance exhibits similar pattern across all models.
    
    \item \textbf{Logit Entropy (Fig. \ref{fig:Mistral-7B-v0.1-LogitEntropy})}:  
    The noisy variants show lower logit entropy, suggesting high confidence in predictions. Base model displays higher entropy.
    
    \item \textbf{Attention Entropy (Fig. \ref{fig:Mistral-7B-v0.1-AttentionEntropy})}:  
    Attention entropy exhibits relatively similar pattern across all models with the noisy variants exhibiting relatively higher entropy in deeper layers.
    
    \item \textbf{Mean L2 Norm (Fig. \ref{fig:Mistral-7B-v0.1-MeanL2Norm})}:  
    The Mean L2 norms exhibit similar pattern across all models.
    
    \item \textbf{Rank (Fig. \ref{fig:Mistral-7B-v0.1-Rank})}:  
    Ranks exhibit relatively similar pattern across all models.
\end{itemize}

\subsection{Synthesis of the Findings}

Across all five models, consistent patterns emerge:

\begin{itemize}
    \item \textbf{Sparsity}: Noisy variants exhibit higher sparsity compared to their base counterparts, particularly in specific layers. In LLaMA-3.2-1B, noisy variants show elevated sparsity in middle layers (Fig. \ref{fig:Llama3.2.1B-sparsity}), while in LLaMA-3.2-3B, this increase is more pronounced in middle and later layers (Fig. \ref{fig:LLaMA-3.2-3B-Sparsity}). Similarly, Qwen2.5-0.5B displays higher sparsity in earlier layers for noisy variants (Fig. \ref{fig:Qwen2.5-0.5B-Sparsity}). This pattern suggests that noise injection may encourage sparser representations, potentially enhancing robustness by focusing on fewer, critical features. However, in Gemma-3-1b-it and Mistral-7B-v0.1, sparsity remains relatively consistent across all variants (Figs. \ref{fig:Gemma-3-1b-it-Sparsity} and \ref{fig:Mistral-7B-v0.1-Sparsity}), indicating that the impact of noise on sparsity may be architecture-dependent.
    \item \textbf{Variance}: In LLaMA-3.2-1B and Qwen2.5-0.5B, noisy variants tend to have lower variance compared to the base models, particularly noticeable in middle layers for Qwen2.5-0.5B (Figs. \ref{fig:Llama3.2.1B-variance} and \ref{fig:Qwen2.5-0.5B-Variance}), suggesting more stable activations. In contrast, LLaMA-3.2-3B shows higher variance in noisy variants (Fig. \ref{fig:LLaMA-3.2-3B-Variance}), indicating greater activation diversity. Gemma-3-1b-it and Mistral-7B-v0.1 exhibit similar variance patterns across all variants (Figs. \ref{fig:Gemma-3-1b-it-Variance} and \ref{fig:Mistral-7B-v0.1-Variance}), highlighting that the effect of noise on activation spread is not uniform and likely influenced by model size or structure.
    \item \textbf{Logit Entropy}: In LLaMA-3.2-1B, noisy variants have a lower median but higher variance in logit entropy (Fig. \ref{fig:Llama3.2.1B-logit-entropy}), potentially indicating better calibration. LLaMA-3.2-3B and Mistral-7B-v0.1 show lower logit entropy in noisy variants (Figs. \ref{fig:LLaMA-3.2-3B-LogitEntropy} and \ref{fig:Mistral-7B-v0.1-LogitEntropy}), suggesting increased prediction confidence. Conversely, Qwen2.5-0.5B and Gemma-3-1b-it exhibit higher median and variance in logit entropy for noisy variants (Figs. \ref{fig:Qwen2.5-0.5B-LogitEntropy} and \ref{fig:Gemma-3-1b-it-LogitEntropy}), pointing to greater uncertainty.
    \item \textbf{Attention Entropy}: Attention entropy tends to increase in noisy variants across multiple models. LLaMA-3.2-1B shows higher attention entropy in noisy variants (Fig. \ref{fig:Llama3.2.1B-attention-entropy}), while LLaMA-3.2-3B and Mistral-7B-v0.1 exhibit slightly higher entropy in deeper layers (Figs. \ref{fig:LLaMA-3.2-3B-Attention-Entropy} and \ref{fig:Mistral-7B-v0.1-AttentionEntropy}). This trend suggests that noise promotes more distributed attention across tokens, possibly improving contextual awareness. In Qwen2.5-0.5B and Gemma-3-1b-it, attention entropy patterns are largely similar across variants (Figs. \ref{fig:Qwen2.5-0.5B-AttentionEntropy} and \ref{fig:Gemma-3-1b-it-AttentionEntropy}), indicating less pronounced effects in these models.
    \item \textbf{Mean L2 Norm}: The mean L2 norm generally shows consistent patterns across variants in most models, with some exceptions. In LLaMA-3.2-1B, noisy variants have lower mean L2 norms, especially in deeper layers (Fig. \ref{fig:Llama3.2.1-MeanL2Norm}), suggesting subdued activations, whereas Gemma-3-1b-it displays higher norms in noisy variants in deeper layers (Fig. \ref{fig:Gemma-3-1b-it-MeanL2Norm}). LLaMA-3.2-3B, Qwen2.5-0.5B, and Mistral-7B-v0.1 exhibit similar norm patterns across all variants (Figs. \ref{fig:LLaMA-3.2-3B-MeanL2Norm}, \ref{fig:Qwen2.5-0.5B-MeanL2Norm}, and \ref{fig:Mistral-7B-v0.1-MeanL2Norm}), suggesting that noise impact on activation magnitude varies by model.
    \item \textbf{Rank}: LLaMA-3.2-1B's noisy variants maintain a higher rank, particularly in later layers (Fig. \ref{fig:Llama3.2.1B-rank}), indicating more complex representations. In Gemma-3-1b-it, noisy variants also show higher rank in deeper layers (Fig. \ref{fig:Gemma-3-1b-it-Rank}), while LLaMA-3.2-3B, Qwen2.5-0.5B, and Mistral-7B-v0.1 display similar rank patterns across variants (Figs. \ref{fig:LLaMA-3.2-3B-Rank}, \ref{fig:Qwen2.5-0.5B-Rank}, and \ref{fig:Mistral-7B-v0.1-Rank}). This suggests that noise may enhance representational dimensionality in some models.
\end{itemize}

These findings indicate that noise injection influences the internal representations of the models in various ways. Increased sparsity and attention entropy are relatively consistent effects, while variance, logit entropy, and mean L2 norm exhibit model-specific responses, highlighting the role of architecture in noisy fine-tuning. Our findings demonstrate that NoiseFiT can effectively alter layer-wise hidden states characteristics of language models for mitigating hallucinations. The increased sparsity and attention entropy in noisy variants align with goals of reducing overfitting and enhancing generalization. However, the mixed effects on variance and logit entropy emphasize the complexity of noise’s impact and the need for careful calibration.

\begin{figure}[htbp]
    \centering
        \includegraphics[width=\linewidth]{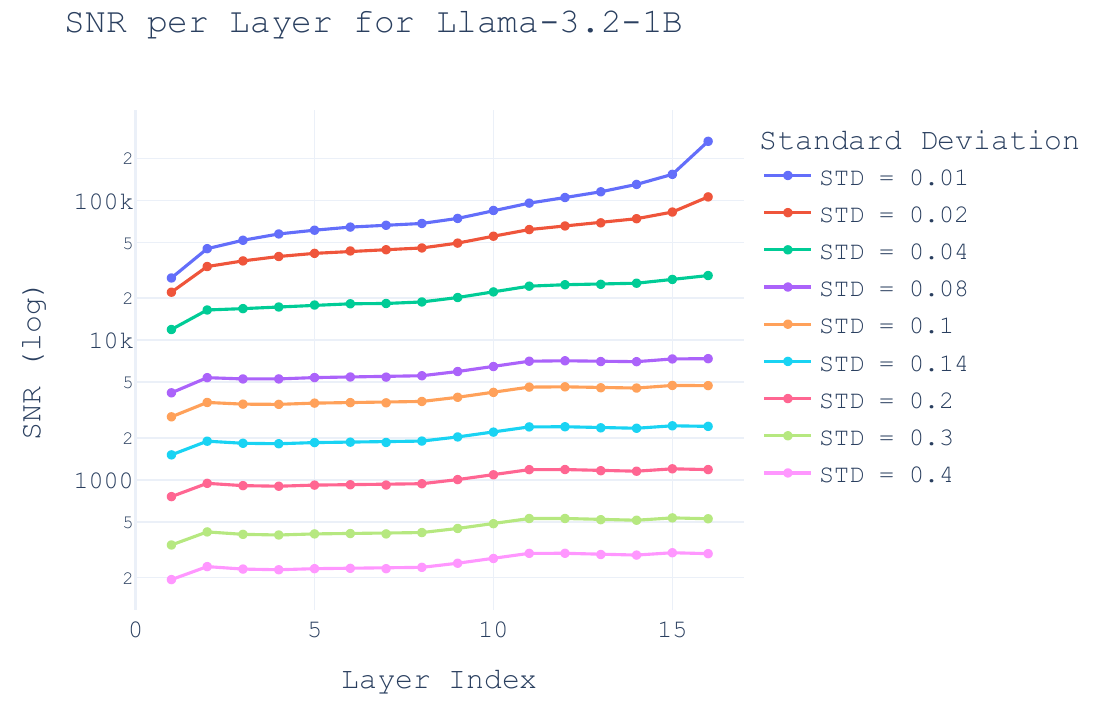}
        \caption{Layerwise SNR for Llama-3.2-1B across different noise standard deviation values}
        \label{fig:Llama-3.2-1B-snr}
\end{figure}

\begin{figure}[htbp]
    \centering
        \includegraphics[width=\linewidth]{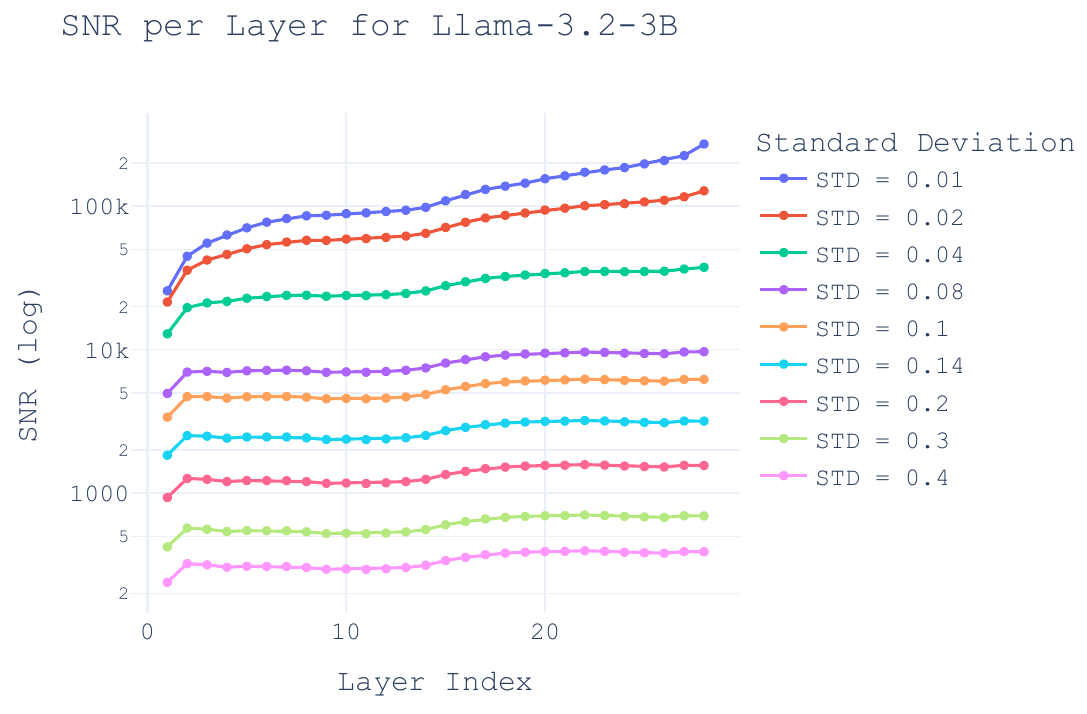}
        \caption{Layerwise SNR for Llama-3.2-3B across different noise standard deviation values}
        \label{fig:Llama-3.2-3B-snr}
\end{figure}

\begin{figure}[htbp]
    \centering
        \includegraphics[width=\linewidth]{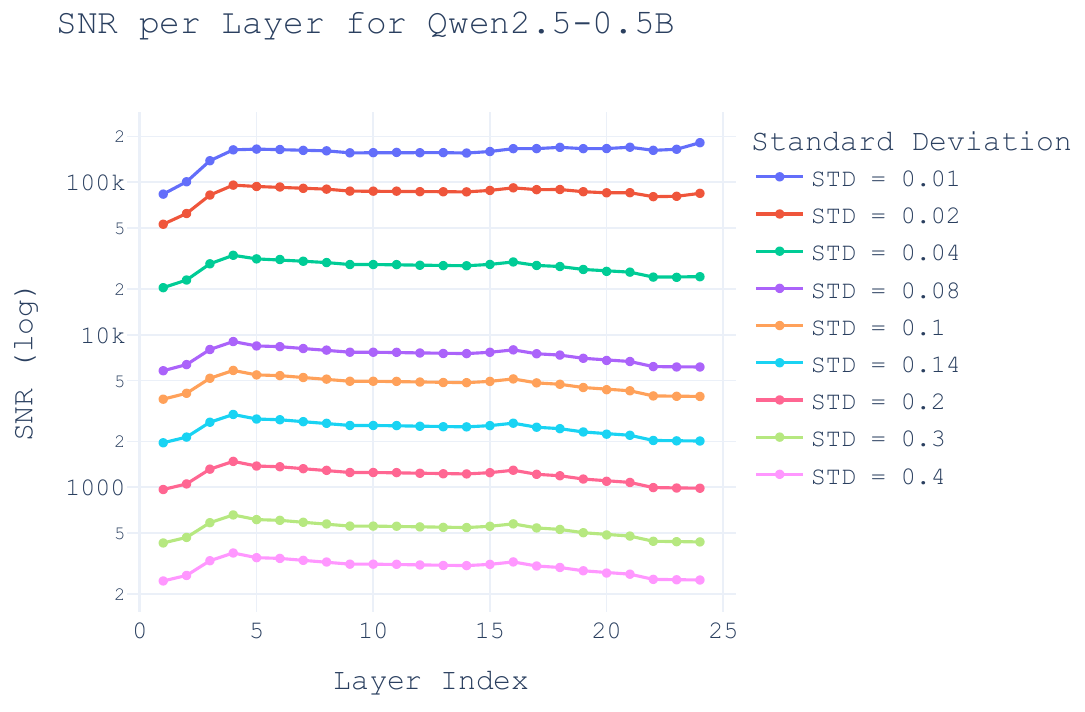}
        \caption{Layerwise SNR for Qwen2.5-0.5B across different noise standard deviation values}
        \label{fig:Qwen2.5-0.5B-snr}
\end{figure}

\begin{figure}[htbp]
    \centering
        \includegraphics[width=\linewidth]{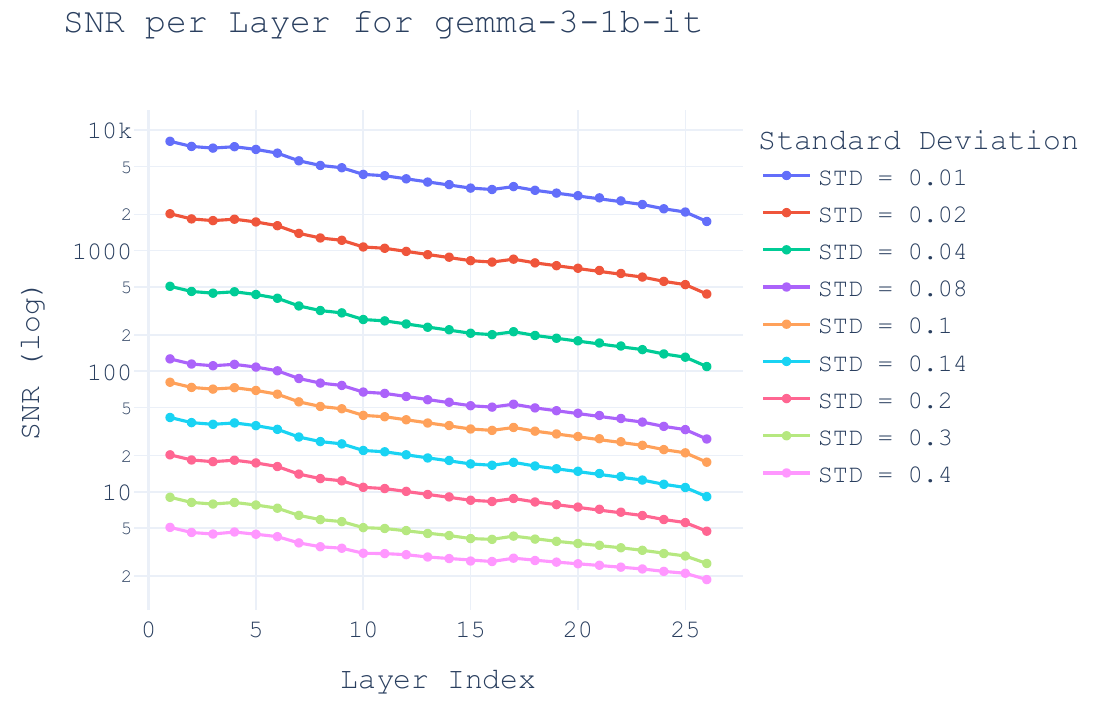}
        \caption{Layerwise SNR for gemma-3-1b-it across different noise standard deviation values}
        \label{fig:gemma-3-1b-it-snr}
\end{figure}

\begin{figure}[htbp]
    \centering
        \includegraphics[width=\linewidth]{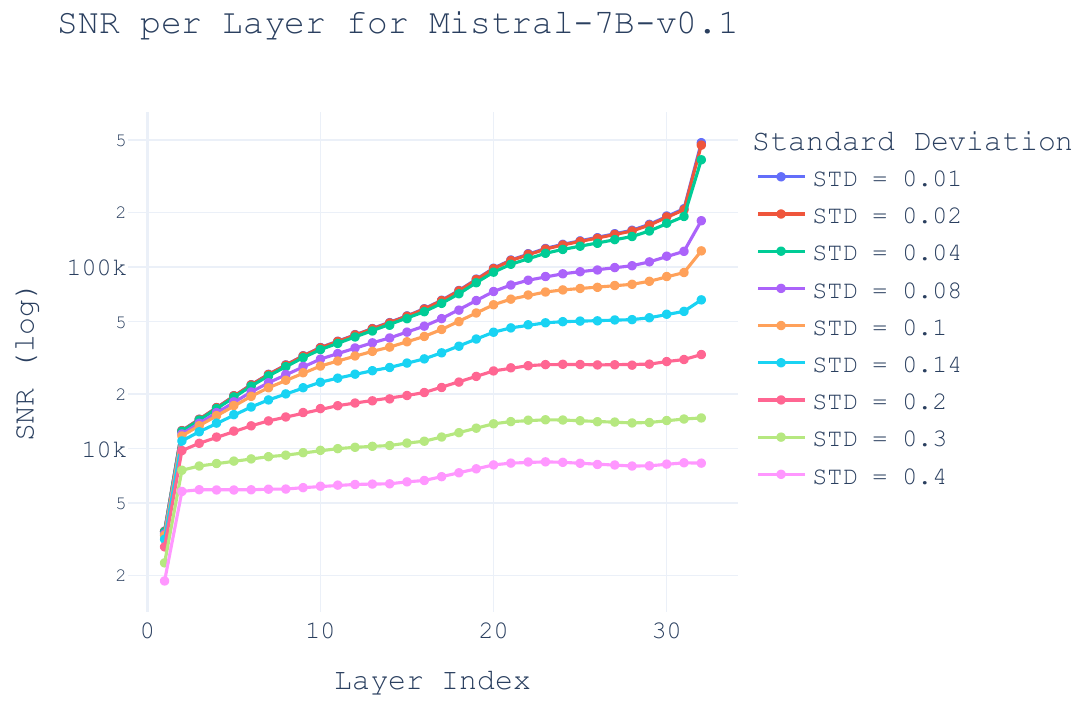}
        \caption{Layerwise SNR for Mistral-7B-v0.1 across different noise standard deviation values}
        \label{fig:Mistral-7B-v0.1-snr}
\end{figure}

\begin{figure}[!t]
    \centering
    \begin{subfigure}[t]{0.96\textwidth}
        \includegraphics[width=\linewidth]{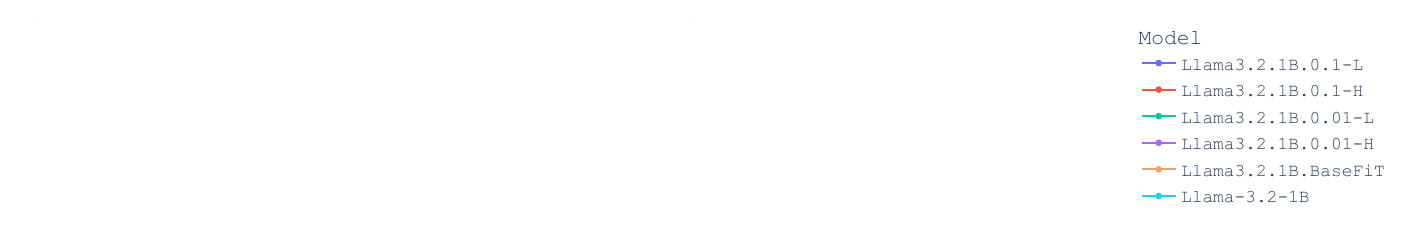}
    \end{subfigure}\hfill
    \begin{subfigure}[t]{0.48\textwidth}
        \includegraphics[width=\linewidth]{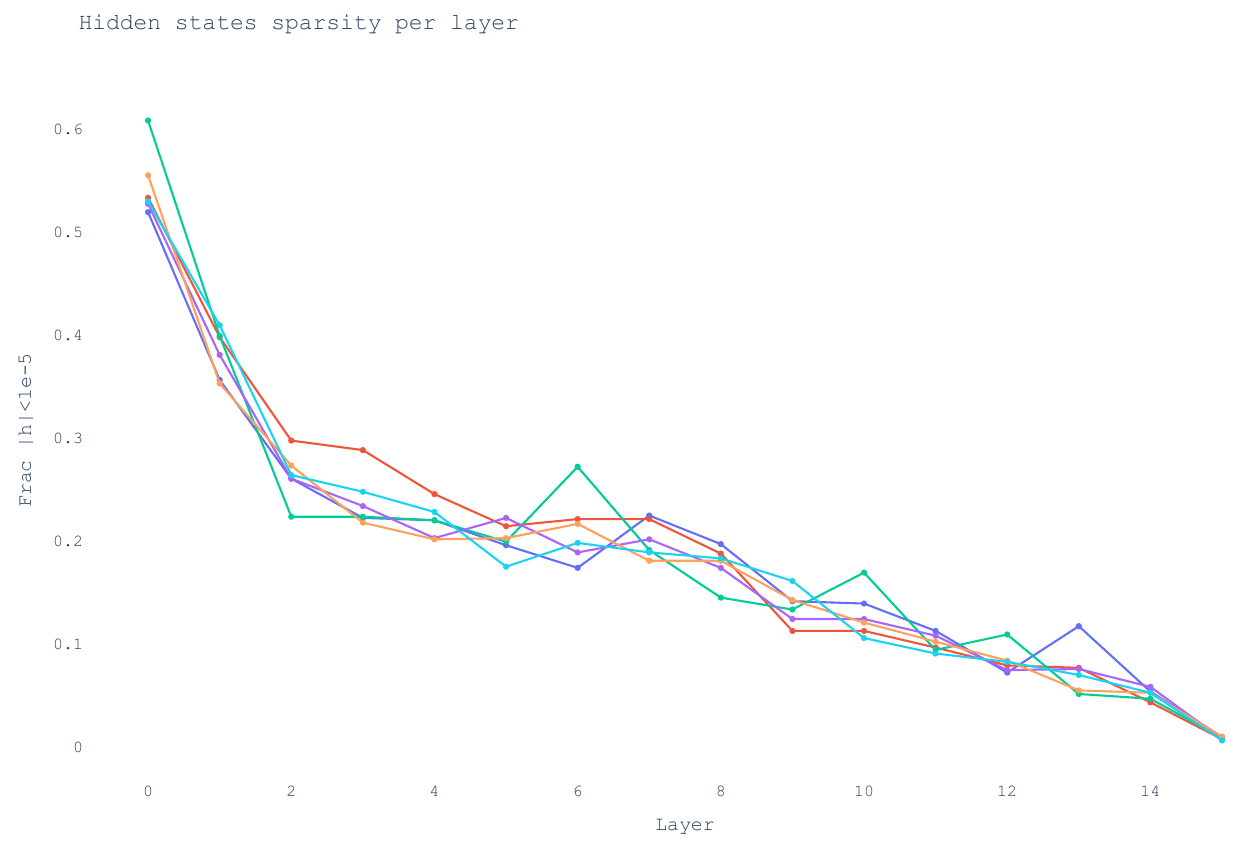}
        \caption{Sparsity}
        \label{fig:Llama3.2.1B-sparsity}
    \end{subfigure}\hfill
    \begin{subfigure}[t]{0.48\textwidth}
        \includegraphics[width=\linewidth]{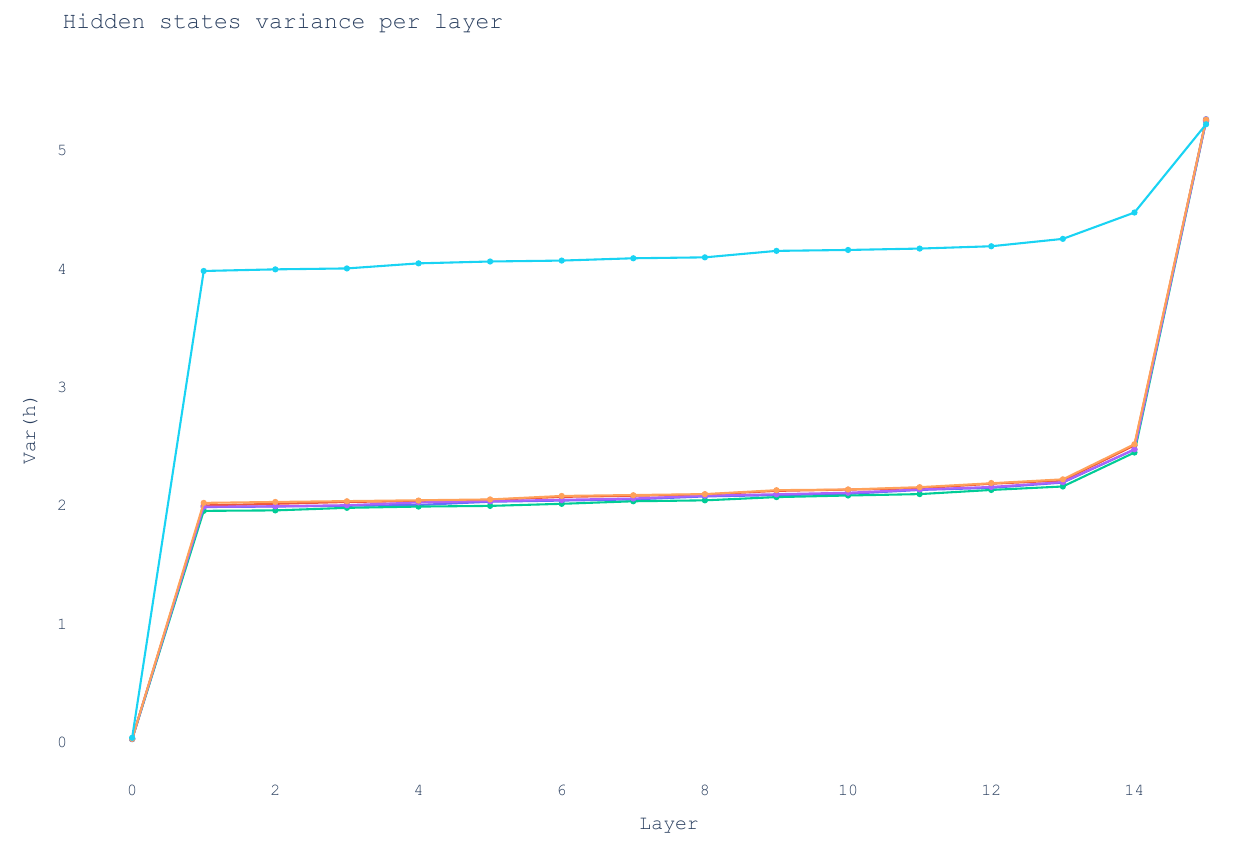}
        \caption{Variance}
         \label{fig:Llama3.2.1B-variance}
    \end{subfigure}

    \vspace{1em}

    \begin{subfigure}[t]{0.48\textwidth}
        \includegraphics[width=\linewidth]{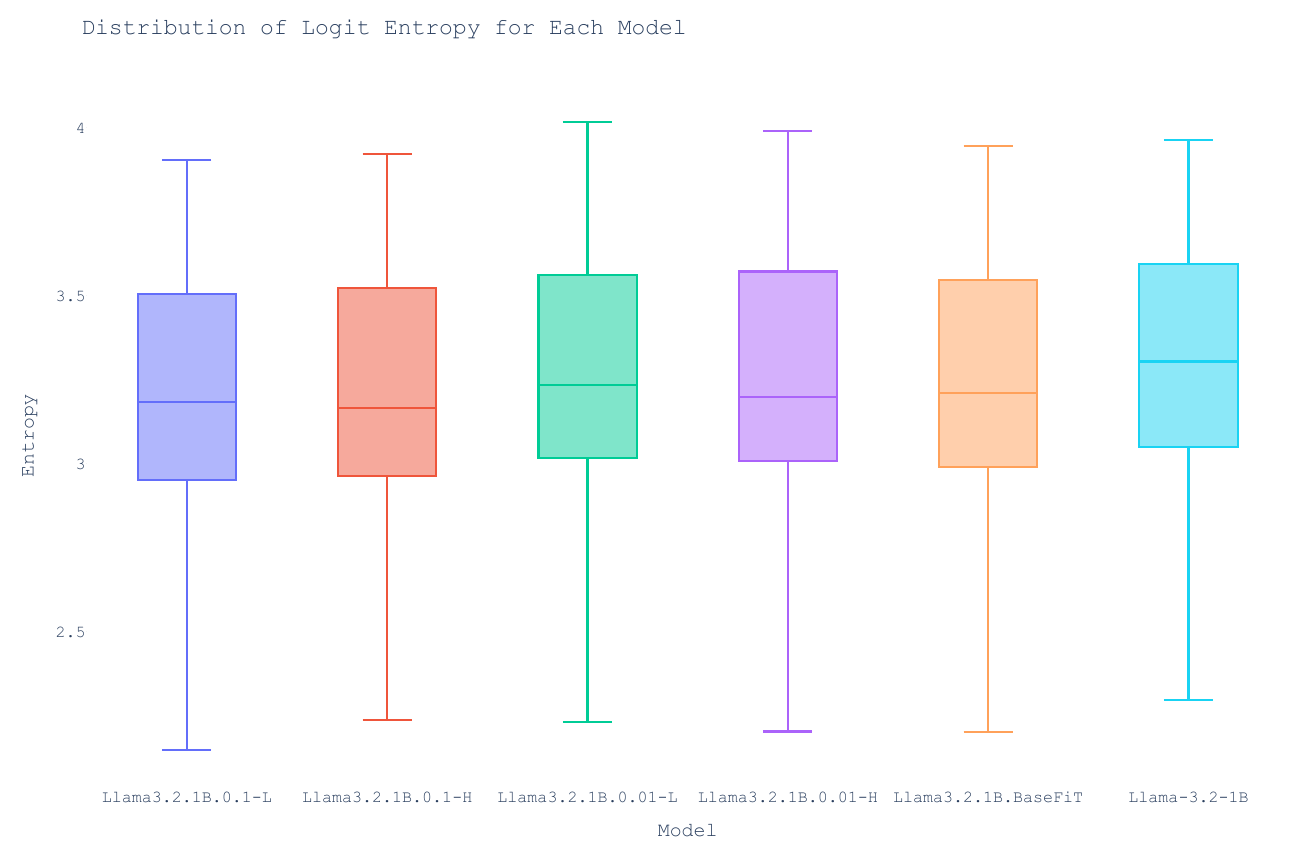}
        \caption{Logit Entropy}
        \label{fig:Llama3.2.1B-logit-entropy}
    \end{subfigure}\hfill
    \begin{subfigure}[t]{0.48\textwidth}
        \includegraphics[width=\linewidth]{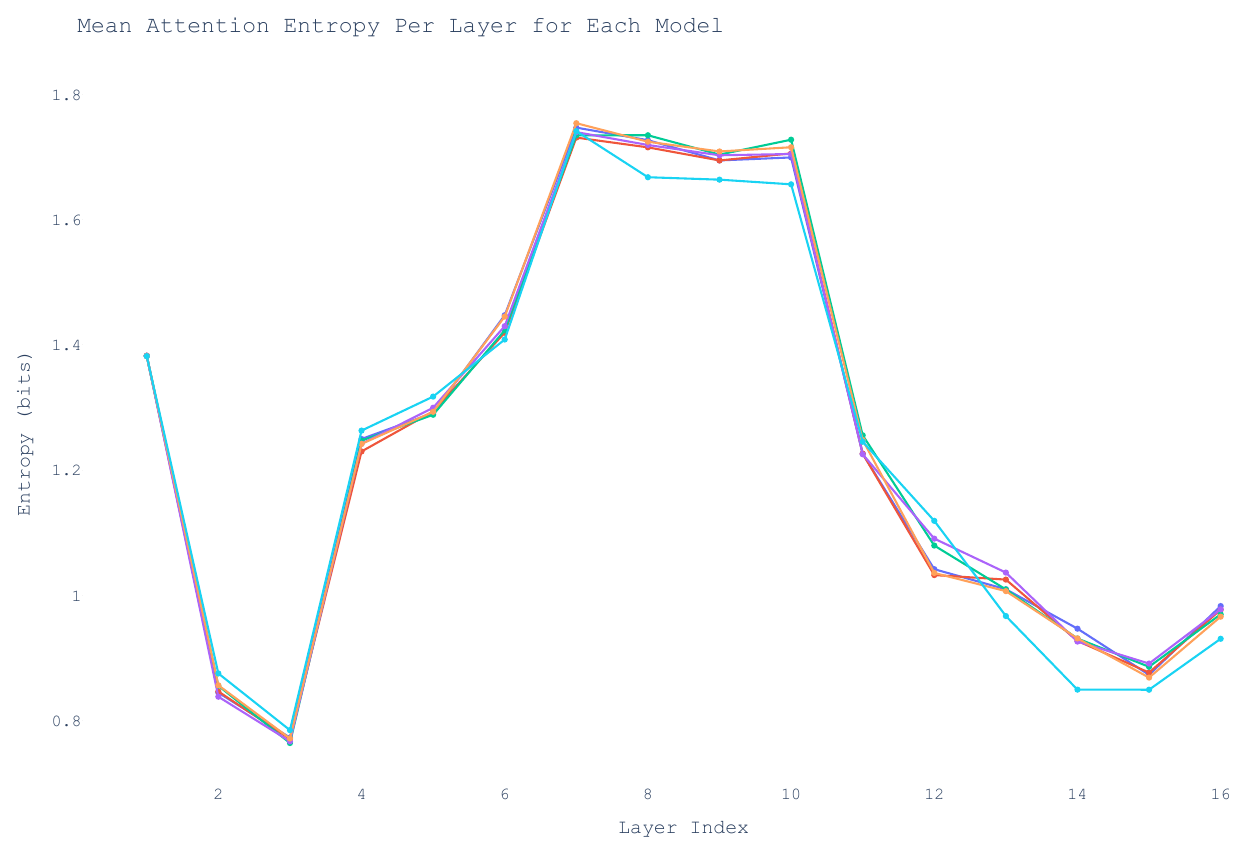}
        \caption{Attention Entropy}
        \label{fig:Llama3.2.1B-attention-entropy}
    \end{subfigure}

    \vspace{1em}

    \begin{subfigure}[t]{0.48\textwidth}
        \includegraphics[width=\linewidth]{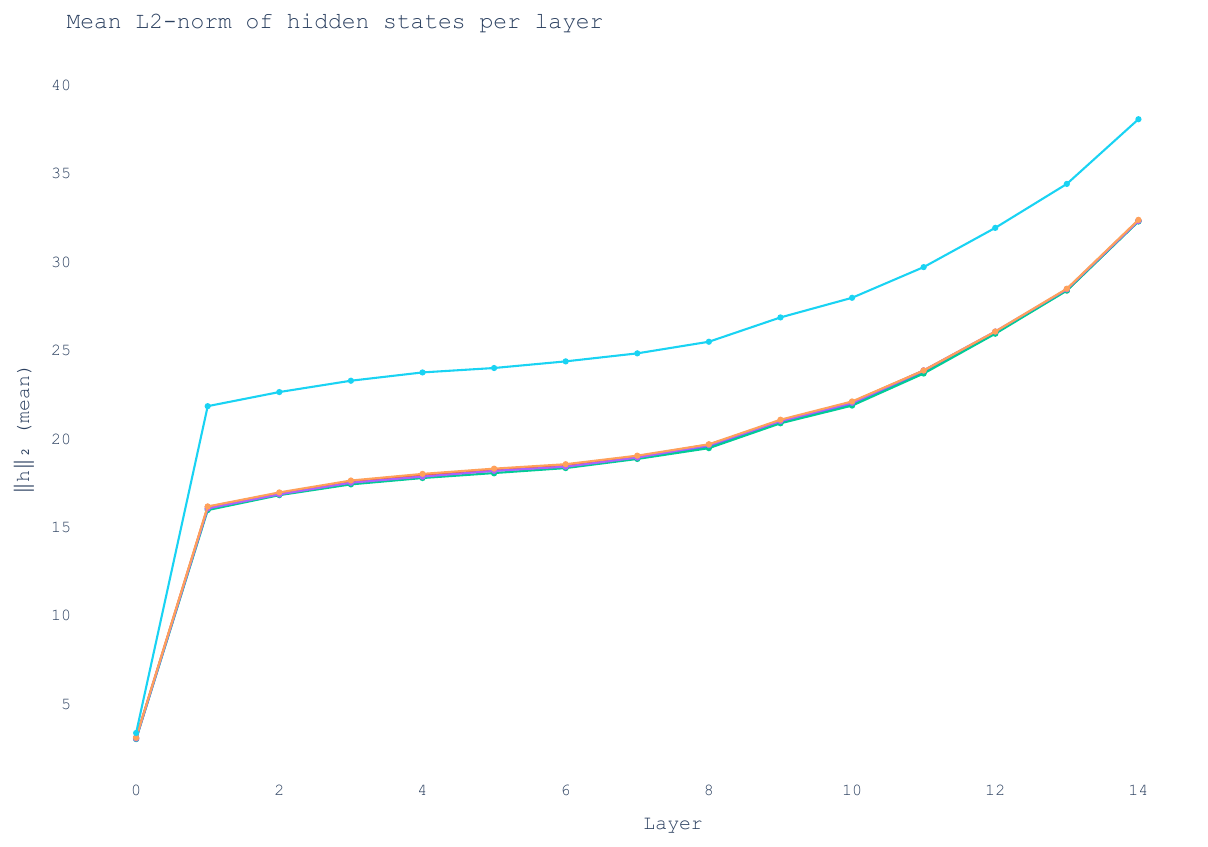}
        \caption{Mean L2 Norm}
        \label{fig:Llama3.2.1-MeanL2Norm}
    \end{subfigure}\hfill
    \begin{subfigure}[t]{0.48\textwidth}
        \includegraphics[width=\linewidth]{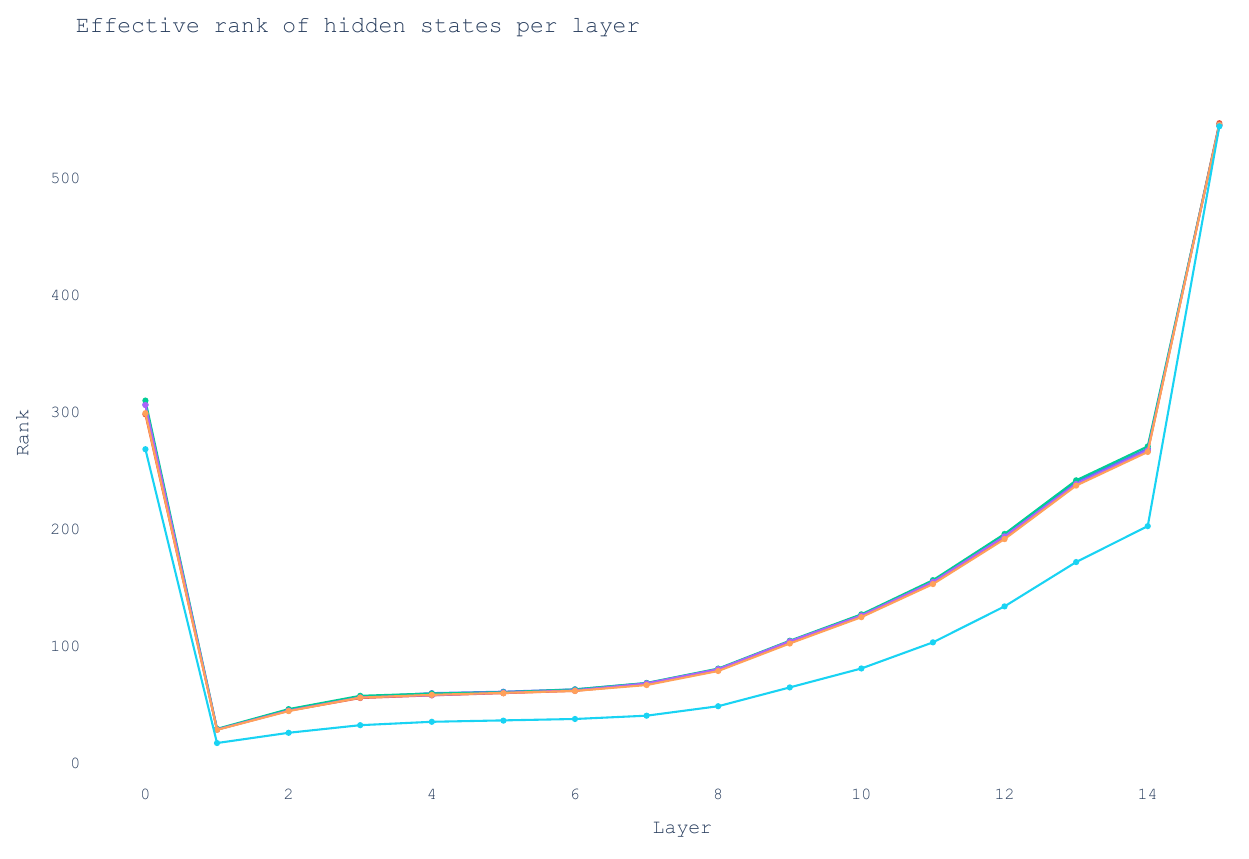}
        \caption{Rank}
        \label{fig:Llama3.2.1B-rank}
    \end{subfigure}

    \caption{Layerwise metrics for LLaMA-3.2-1B
    }
    \label{fig:llama3.2.1-layerwise}
\end{figure}

\begin{figure}[!t]
    \centering
    \begin{subfigure}[t]{0.96\textwidth}
        \includegraphics[width=\linewidth]{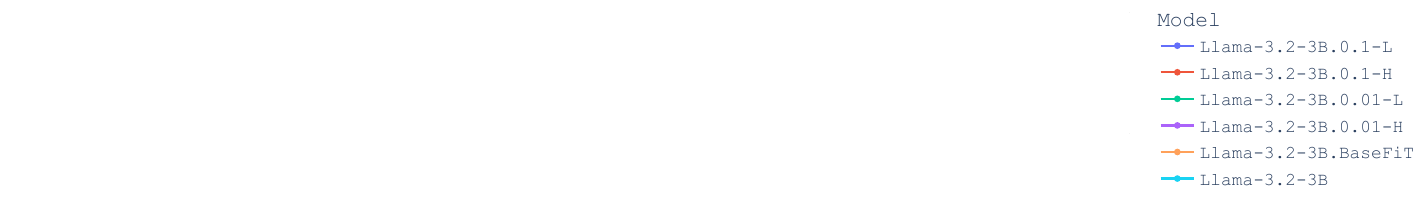}
    \end{subfigure}\hfill
    \begin{subfigure}[t]{0.48\textwidth}
        \includegraphics[width=\linewidth]{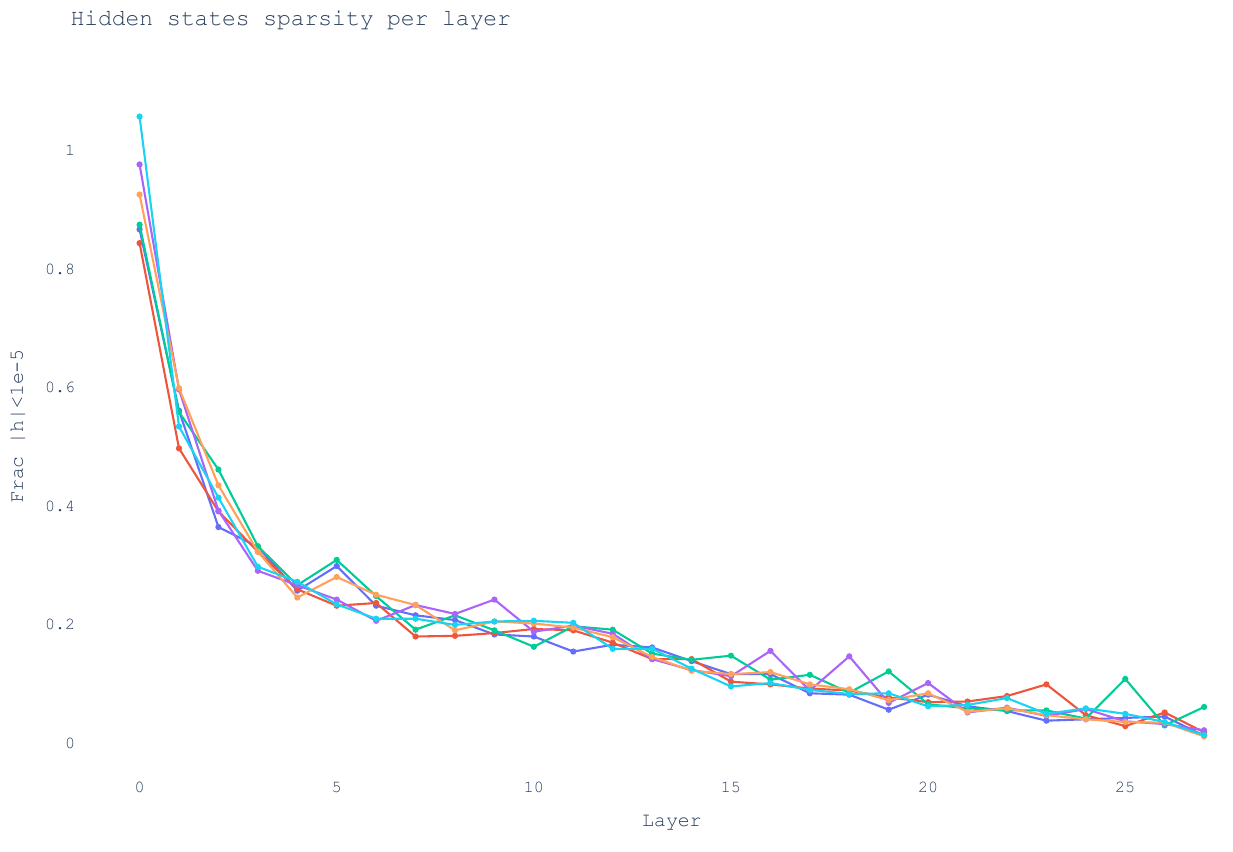}
        \caption{Sparsity}
        \label{fig:LLaMA-3.2-3B-Sparsity}
    \end{subfigure}\hfill
    \begin{subfigure}[t]{0.48\textwidth}
        \includegraphics[width=\linewidth]{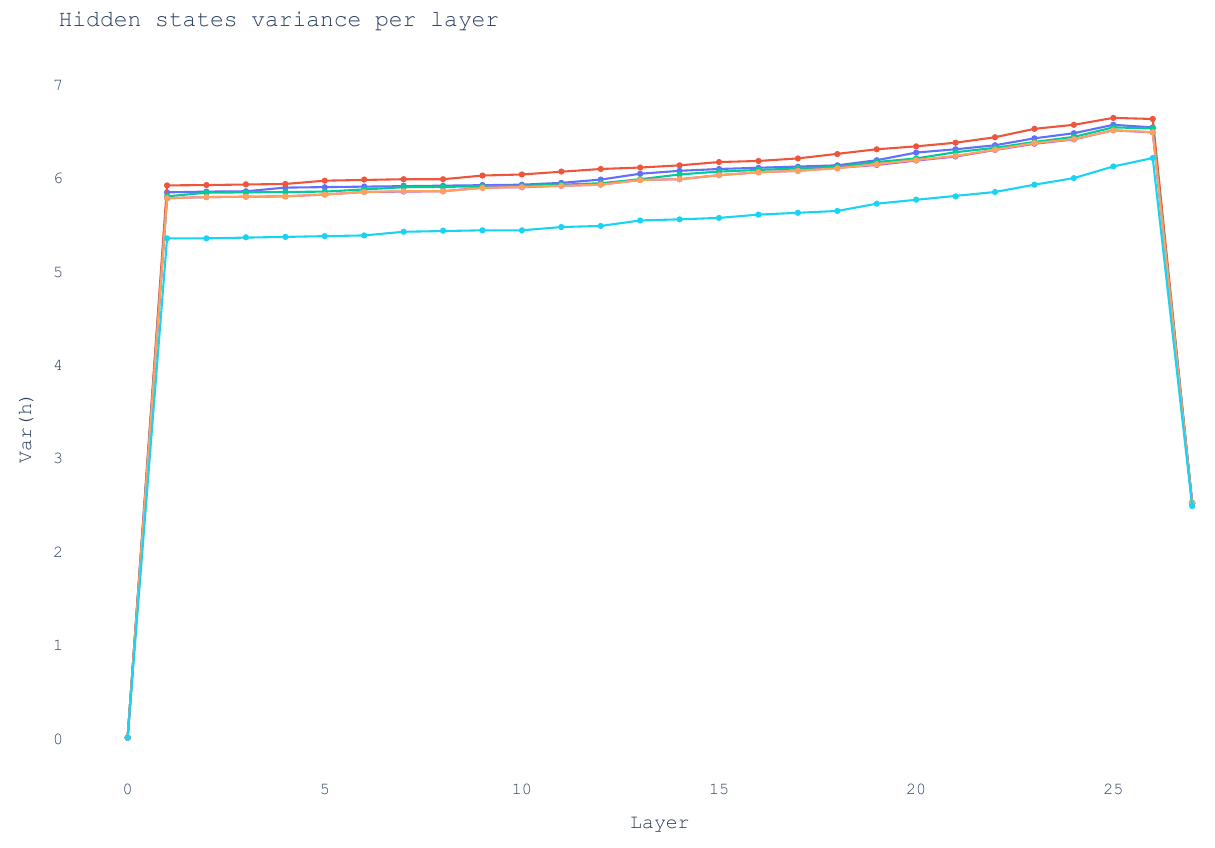}
        \caption{Variance}
        \label{fig:LLaMA-3.2-3B-Variance}
    \end{subfigure}

    \vspace{1em}

    \begin{subfigure}[t]{0.48\textwidth}
        \includegraphics[width=\linewidth]{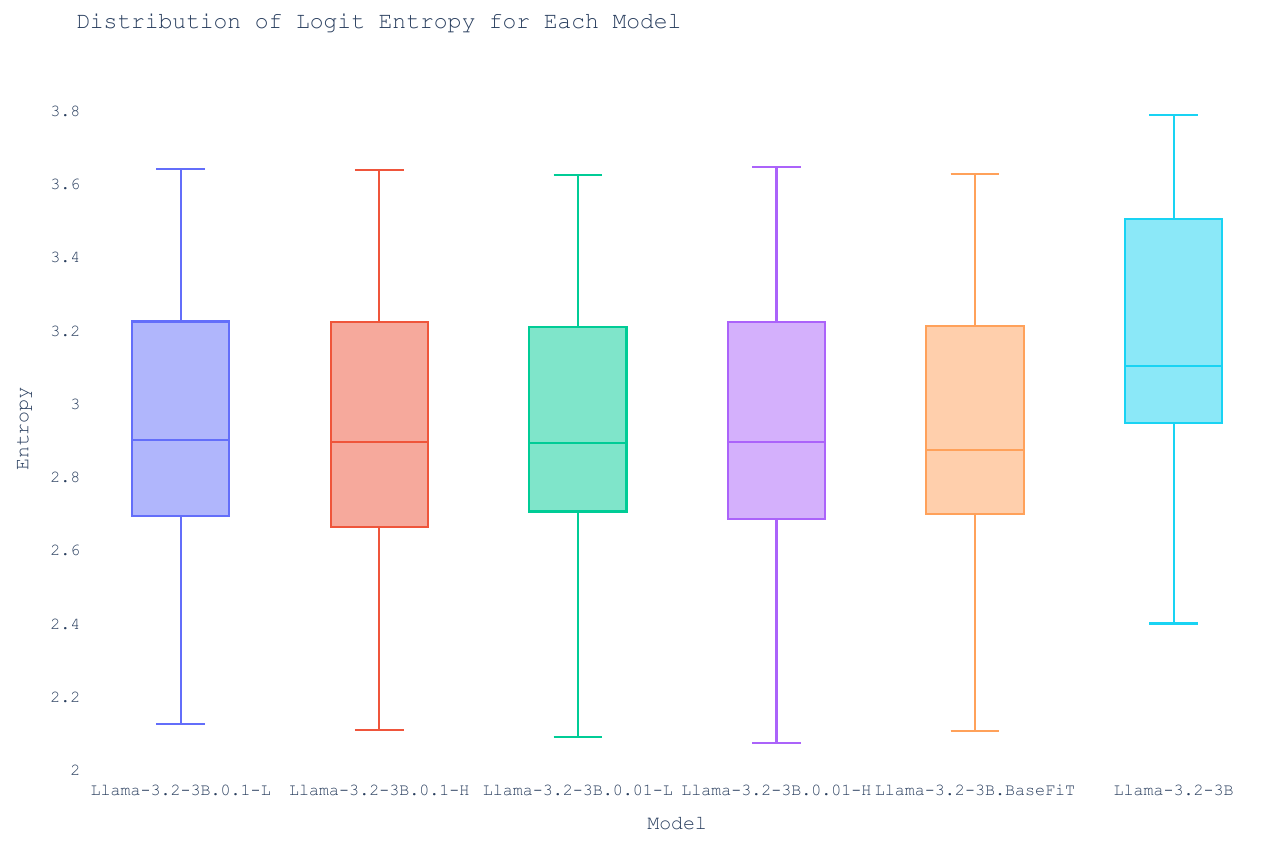}
        \caption{Logit Entropy}
        \label{fig:LLaMA-3.2-3B-LogitEntropy}
    \end{subfigure}\hfill
    \begin{subfigure}[t]{0.48\textwidth}
        \includegraphics[width=\linewidth]{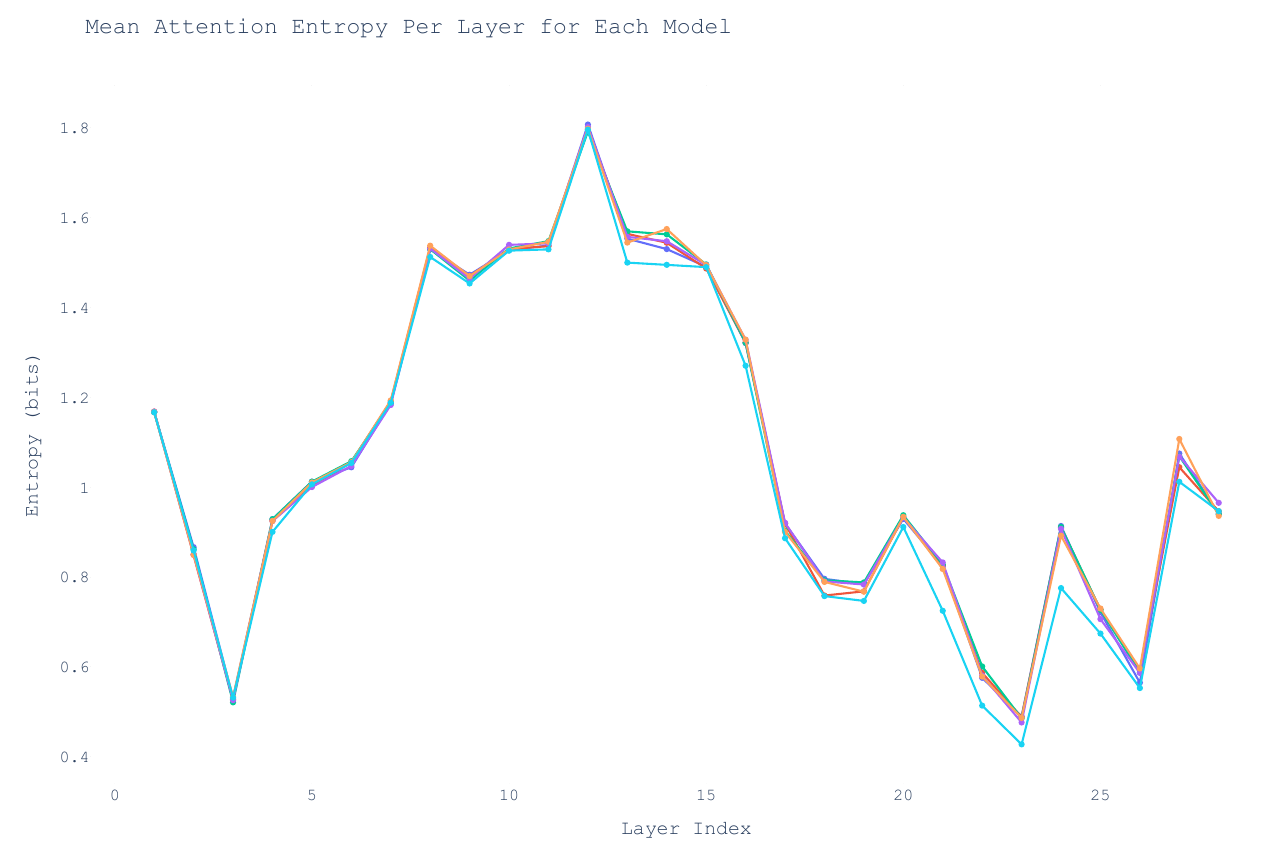}
        \caption{Attention Entropy}
        \label{fig:LLaMA-3.2-3B-Attention-Entropy}
    \end{subfigure}

    \vspace{1em}

    \begin{subfigure}[t]{0.48\textwidth}
        \includegraphics[width=\linewidth]{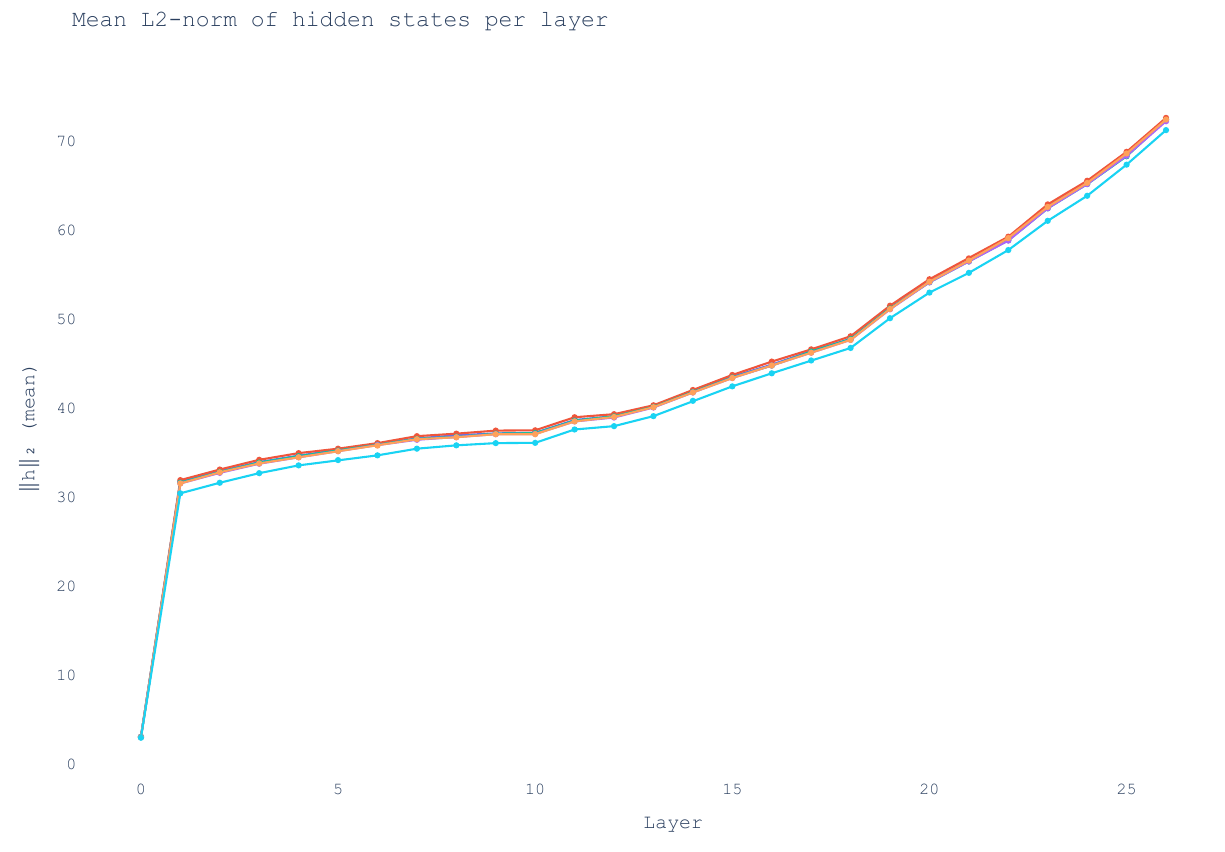}
        \caption{Mean L2 Norm}
        \label{fig:LLaMA-3.2-3B-MeanL2Norm}
    \end{subfigure}\hfill
    \begin{subfigure}[t]{0.48\textwidth}
        \includegraphics[width=\linewidth]{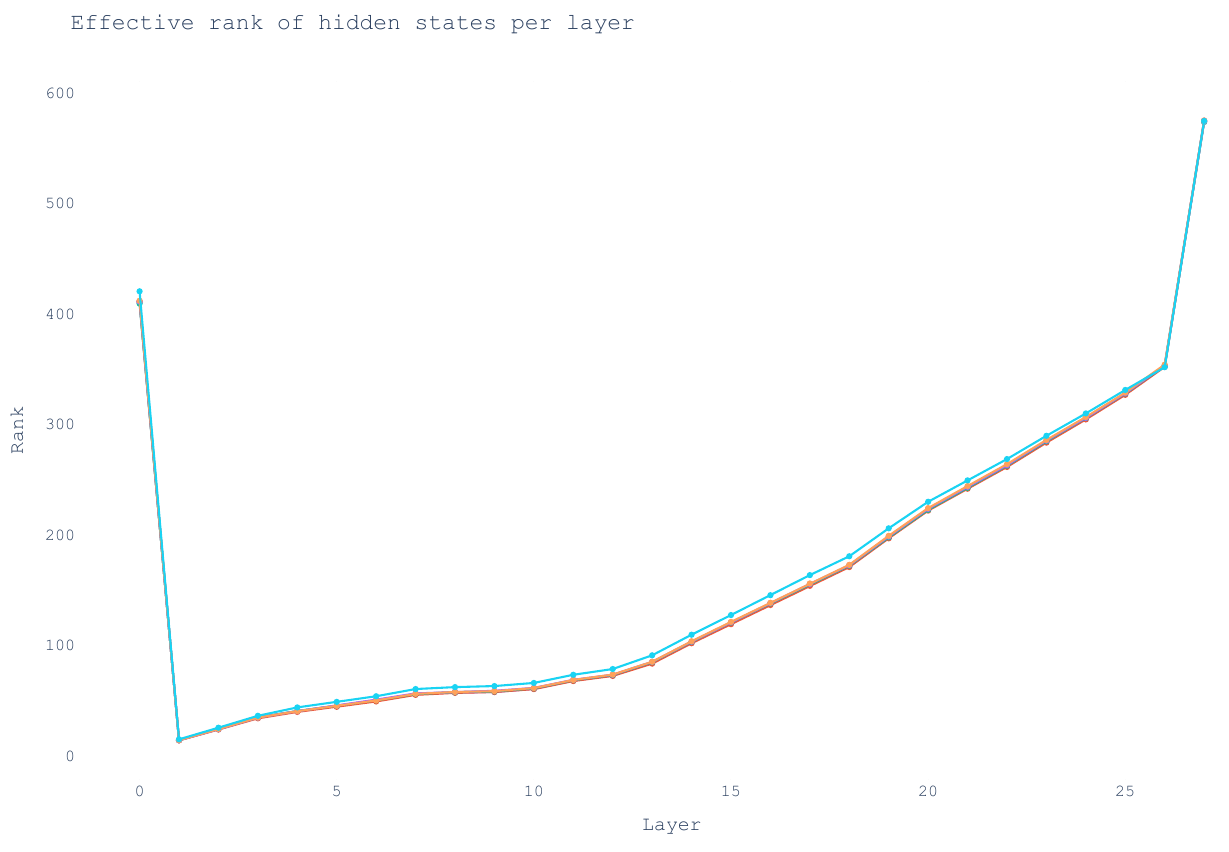}
        \caption{Rank}
        \label{fig:LLaMA-3.2-3B-Rank}
    \end{subfigure}

    \caption{Layerwise metrics for LLaMA-3.2-3B
    }
    \label{fig:llama3.2.3-layerwise}
\end{figure}

\begin{figure}[!t]
    \centering
    \begin{subfigure}[t]{0.96\textwidth}
        \includegraphics[width=\linewidth]{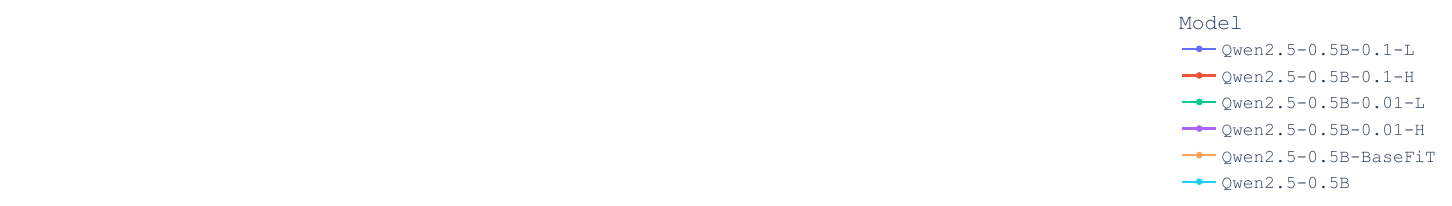}
    \end{subfigure}\hfill
    \begin{subfigure}[t]{0.48\textwidth}
        \includegraphics[width=\linewidth]{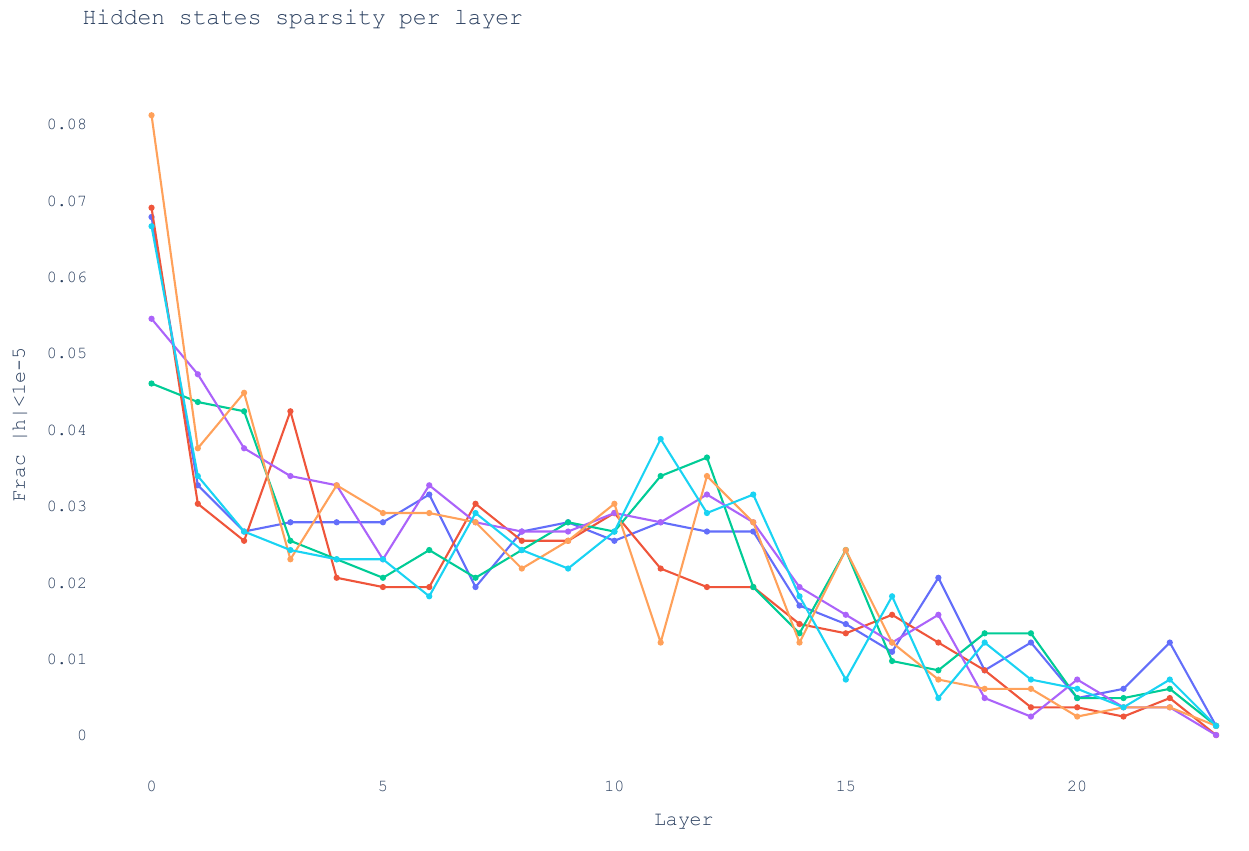}
        \caption{Sparsity}
        \label{fig:Qwen2.5-0.5B-Sparsity}
    \end{subfigure}\hfill
    \begin{subfigure}[t]{0.48\textwidth}
        \includegraphics[width=\linewidth]{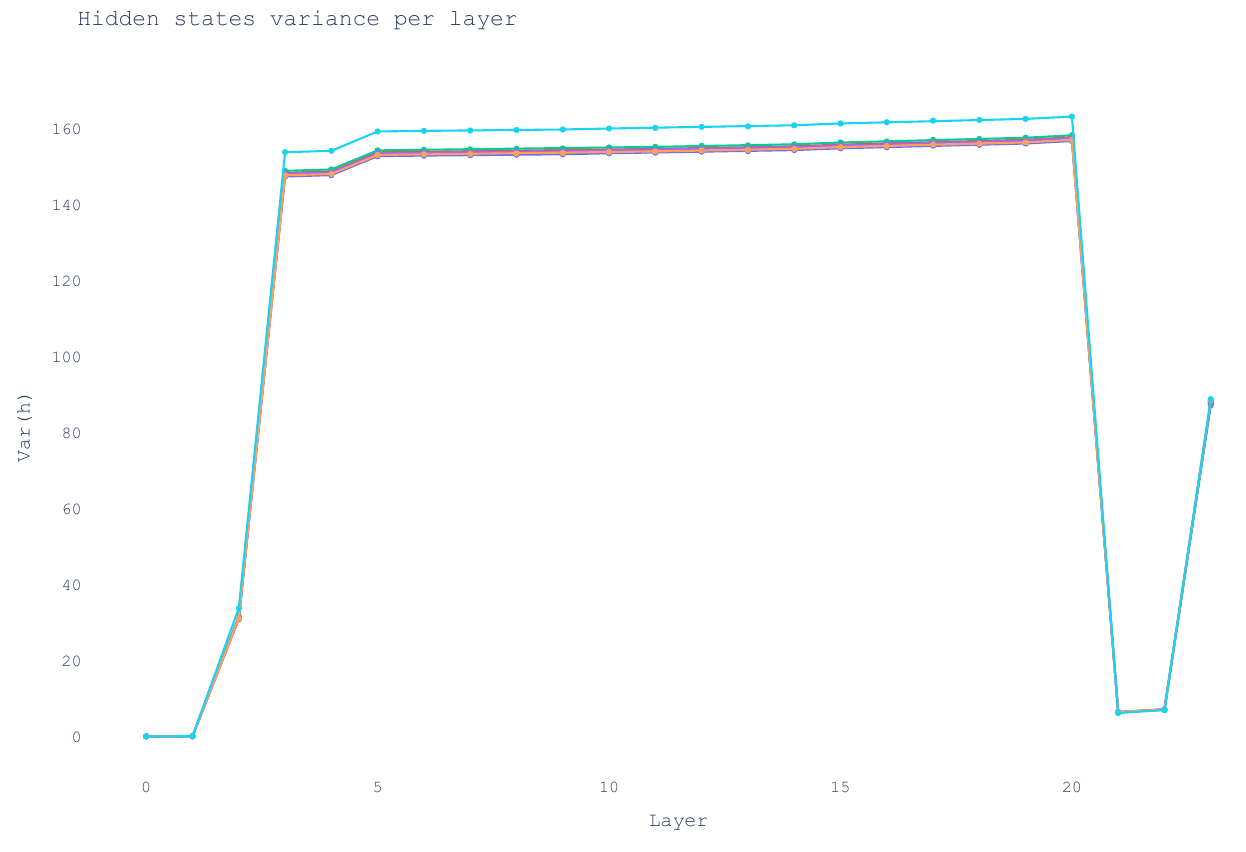}
        \caption{Variance}
        \label{fig:Qwen2.5-0.5B-Variance}
    \end{subfigure}

    \vspace{1em}

    \begin{subfigure}[t]{0.48\textwidth}
        \includegraphics[width=\linewidth]{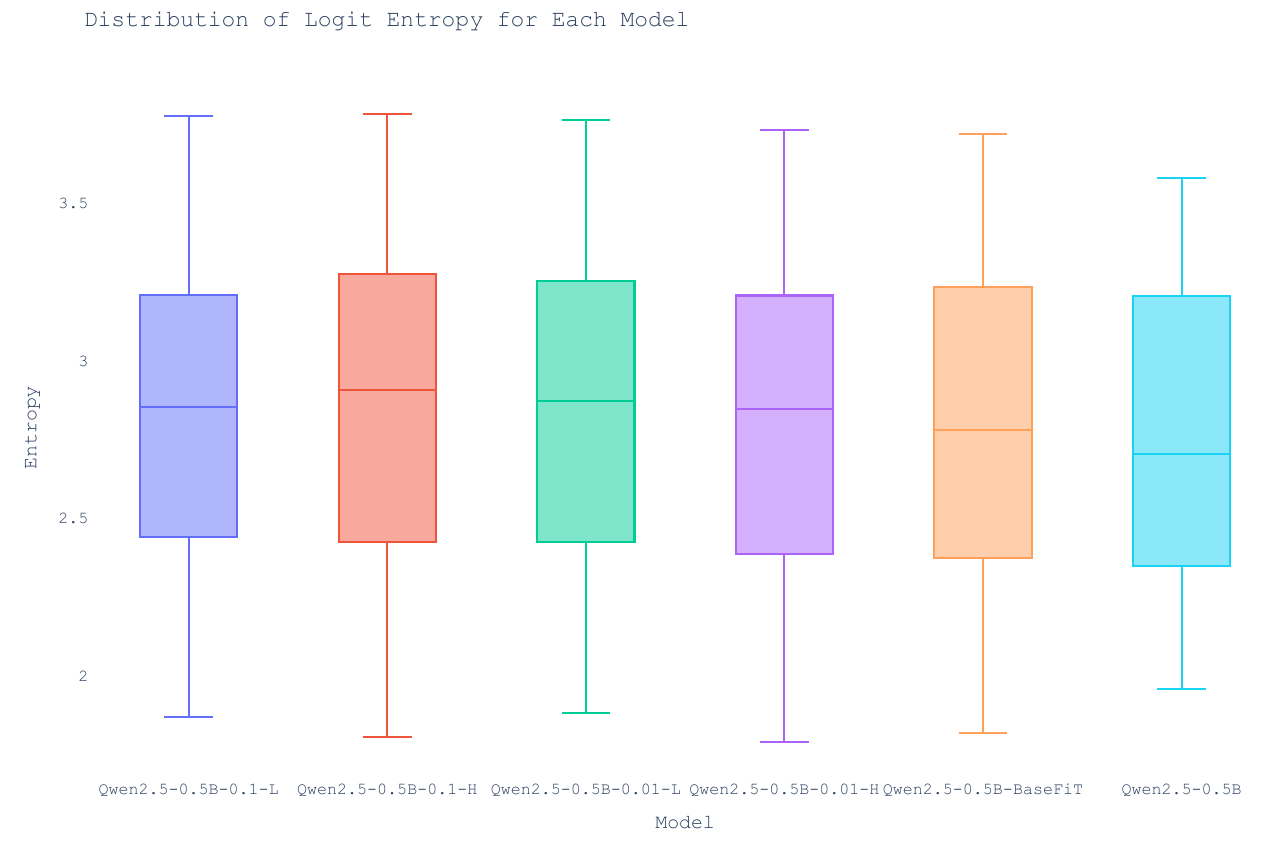}
        \caption{Logit Entropy}
        \label{fig:Qwen2.5-0.5B-LogitEntropy}
    \end{subfigure}\hfill
    \begin{subfigure}[t]{0.48\textwidth}
        \includegraphics[width=\linewidth]{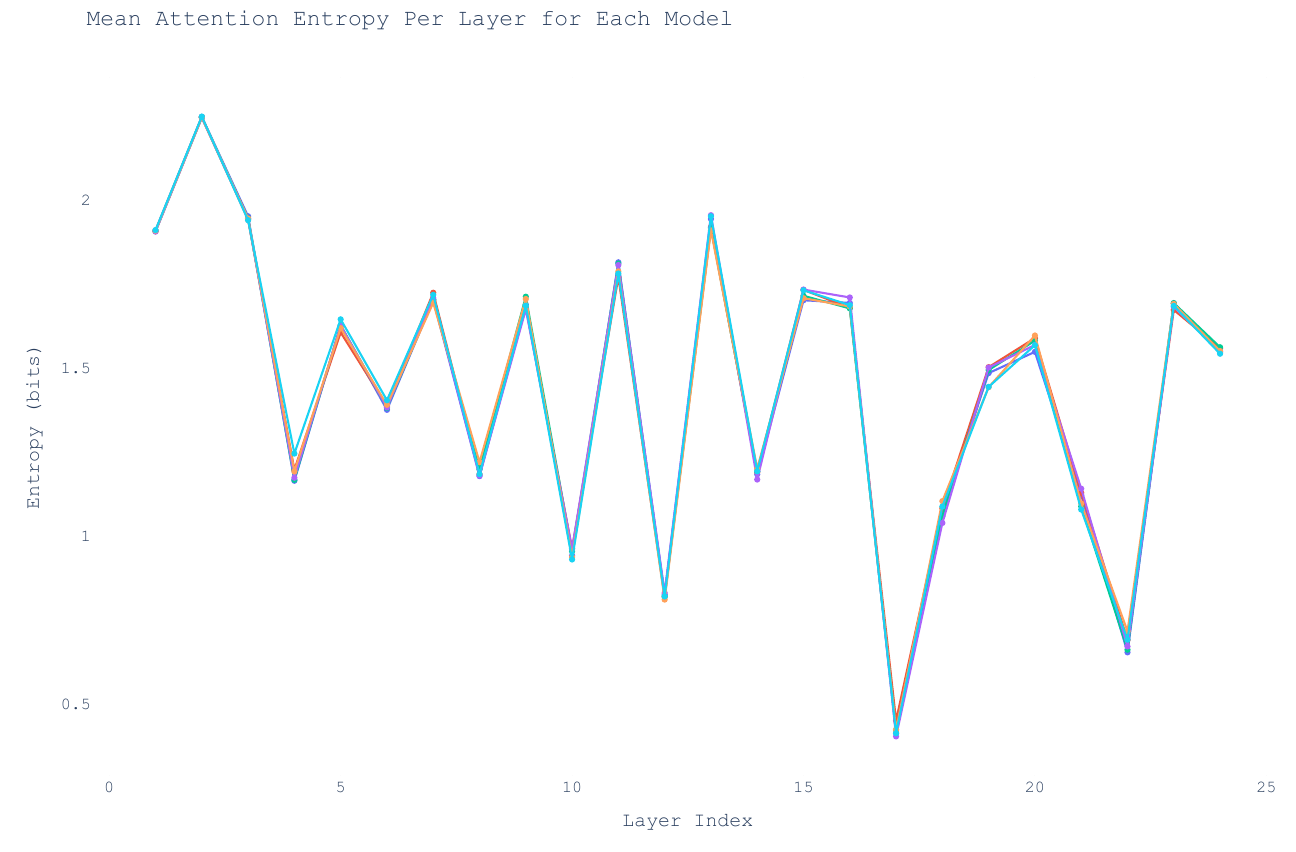}
        \caption{Attention Entropy}
        \label{fig:Qwen2.5-0.5B-AttentionEntropy}
    \end{subfigure}

    \vspace{1em}

    \begin{subfigure}[t]{0.48\textwidth}
        \includegraphics[width=\linewidth]{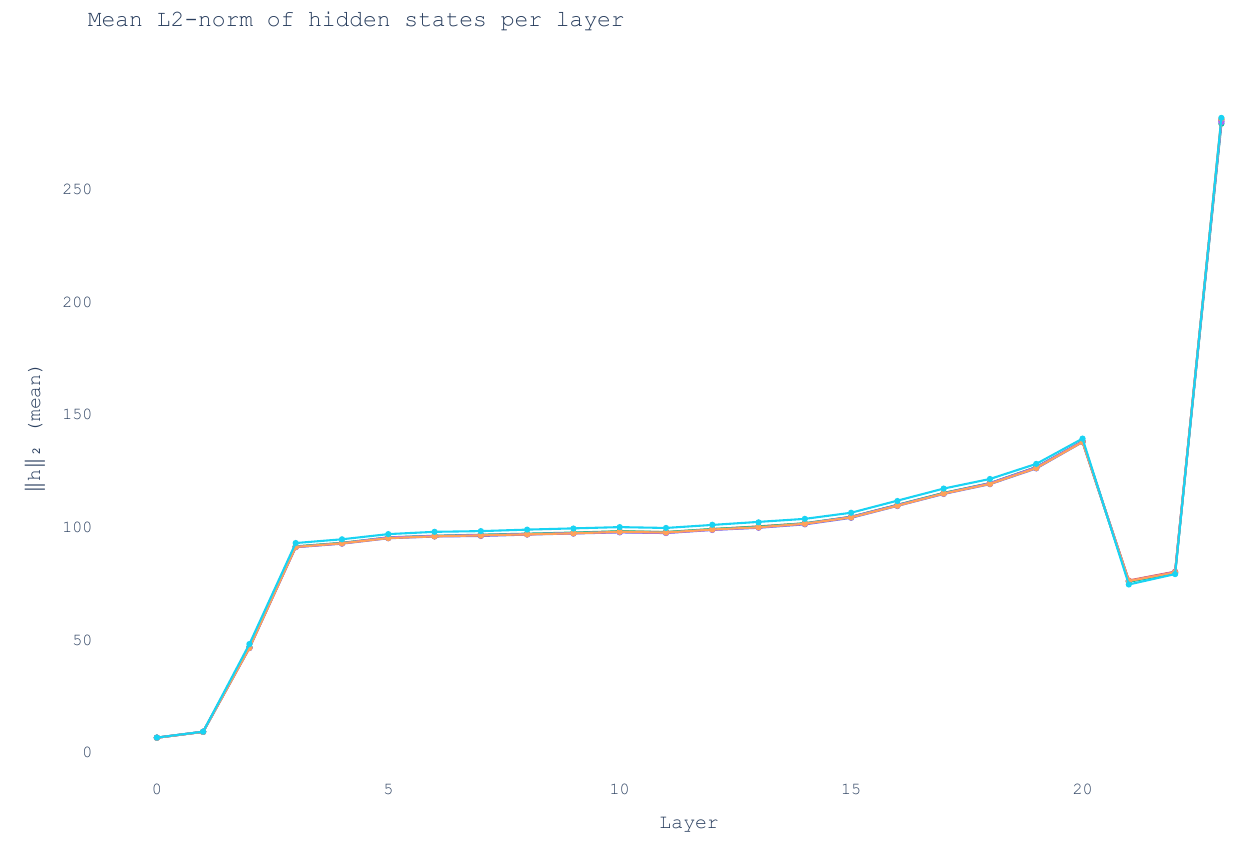}
        \caption{Mean L2 Norm}
        \label{fig:Qwen2.5-0.5B-MeanL2Norm}
    \end{subfigure}\hfill
    \begin{subfigure}[t]{0.48\textwidth}
        \includegraphics[width=\linewidth]{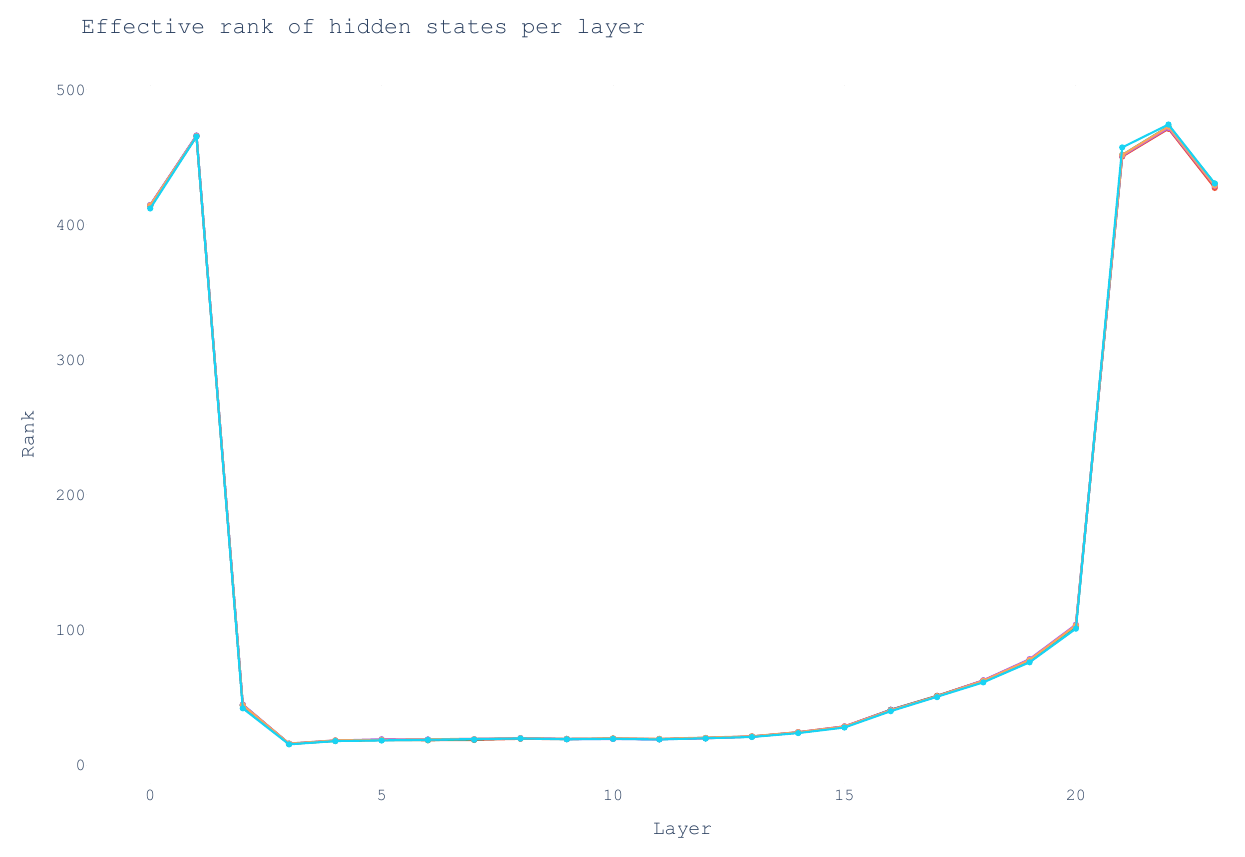}
        \caption{Rank}
        \label{fig:Qwen2.5-0.5B-Rank}
    \end{subfigure}

    \caption{Layerwise metrics for Qwen2.5-0.5B: 
    }
    \label{fig:Qwen2.5-0.5B-layerwise}
\end{figure}

\begin{figure}[!t]
    \centering
    \begin{subfigure}[t]{0.96\textwidth}
        \includegraphics[width=\linewidth]{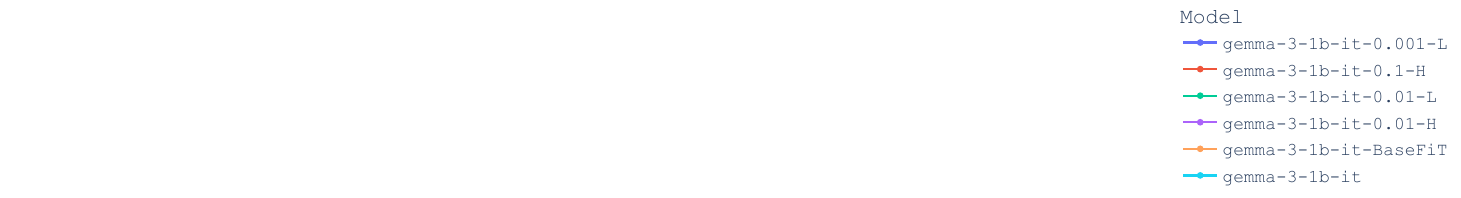}
    \end{subfigure}\hfill
    \begin{subfigure}[t]{0.48\textwidth}
        \includegraphics[width=\linewidth]{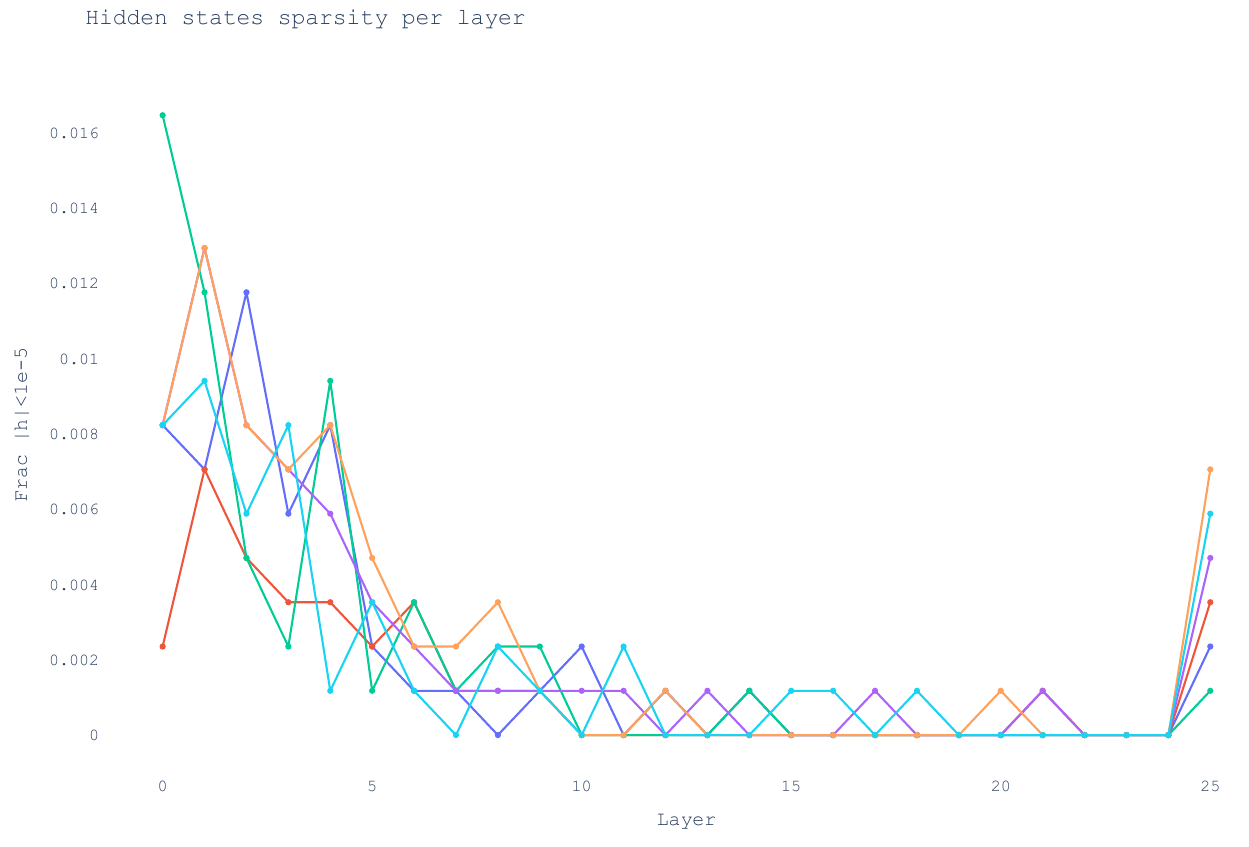}
        \caption{Sparsity}
        \label{fig:Gemma-3-1b-it-Sparsity}
    \end{subfigure}\hfill
    \begin{subfigure}[t]{0.48\textwidth}
        \includegraphics[width=\linewidth]{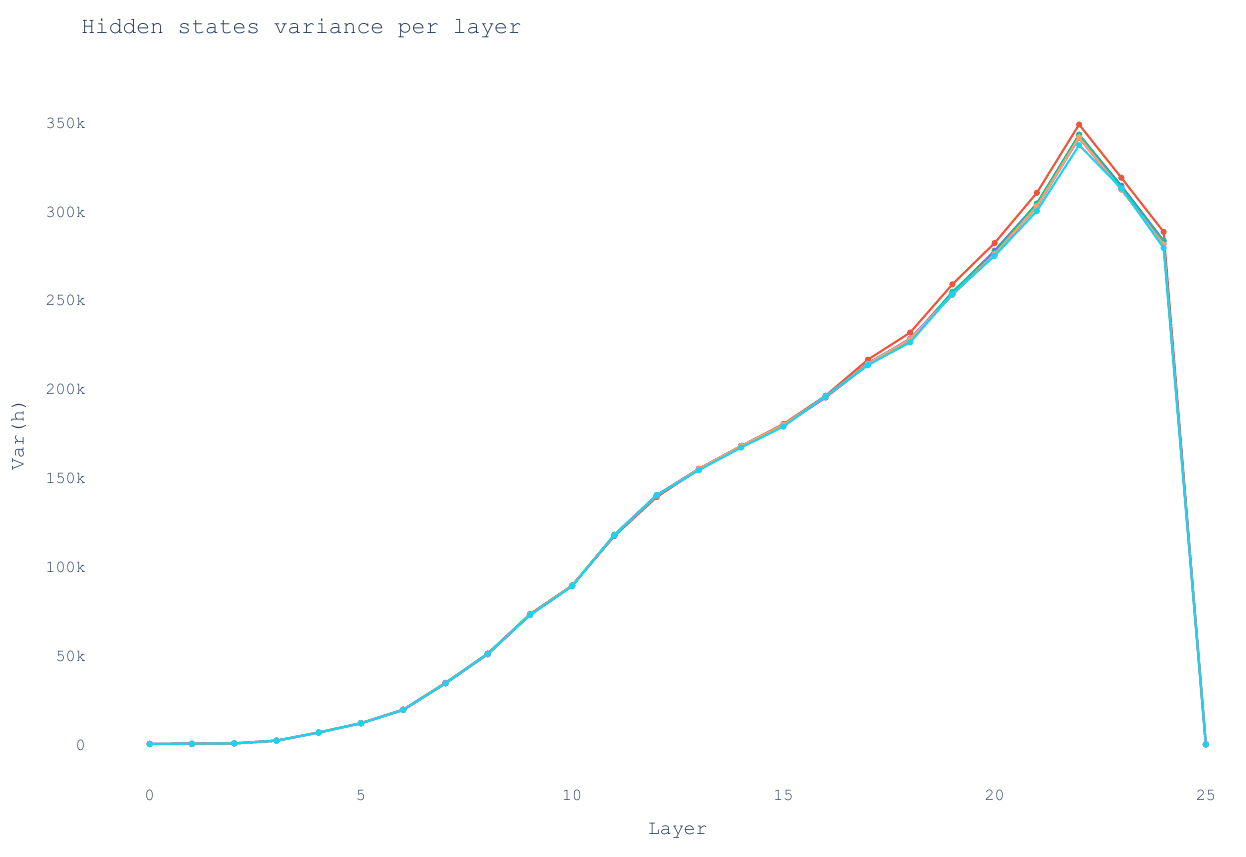}
        \caption{Variance}
        \label{fig:Gemma-3-1b-it-Variance}
    \end{subfigure}

    \vspace{1em}

    \begin{subfigure}[t]{0.48\textwidth}
        \includegraphics[width=\linewidth]{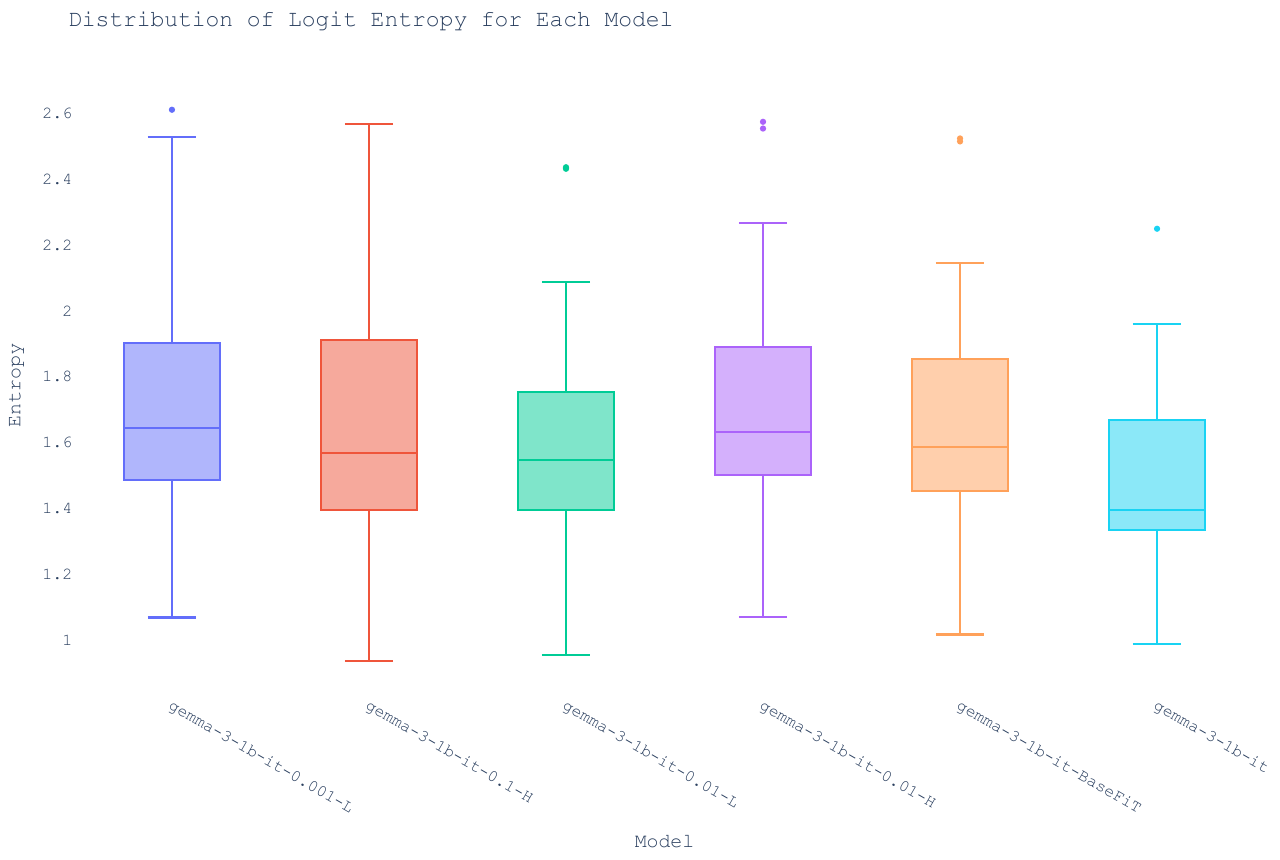}
        \caption{Logit Entropy}
        \label{fig:Gemma-3-1b-it-LogitEntropy}
    \end{subfigure}\hfill
    \begin{subfigure}[t]{0.48\textwidth}
        \includegraphics[width=\linewidth]{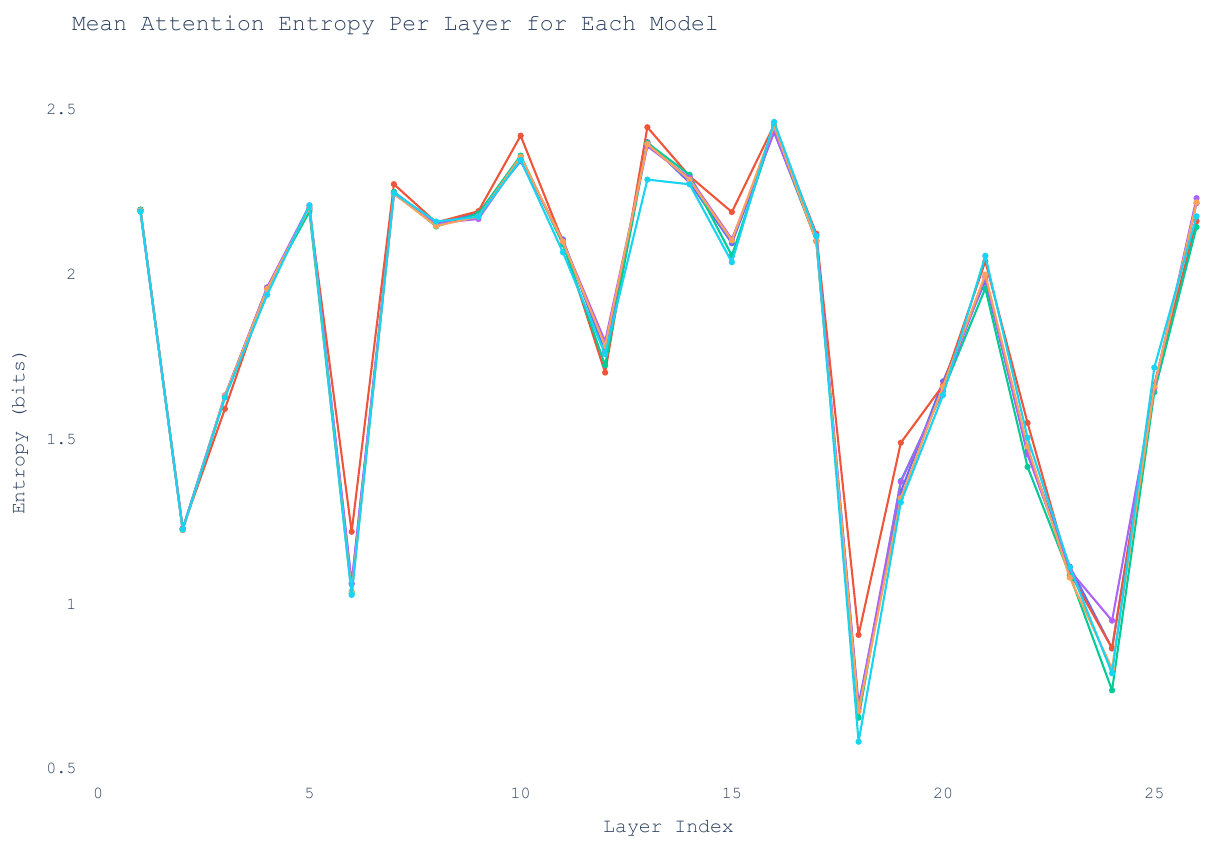}
        \caption{Attention Entropy}
        \label{fig:Gemma-3-1b-it-AttentionEntropy}
    \end{subfigure}

    \vspace{1em}

    \begin{subfigure}[t]{0.48\textwidth}
        \includegraphics[width=\linewidth]{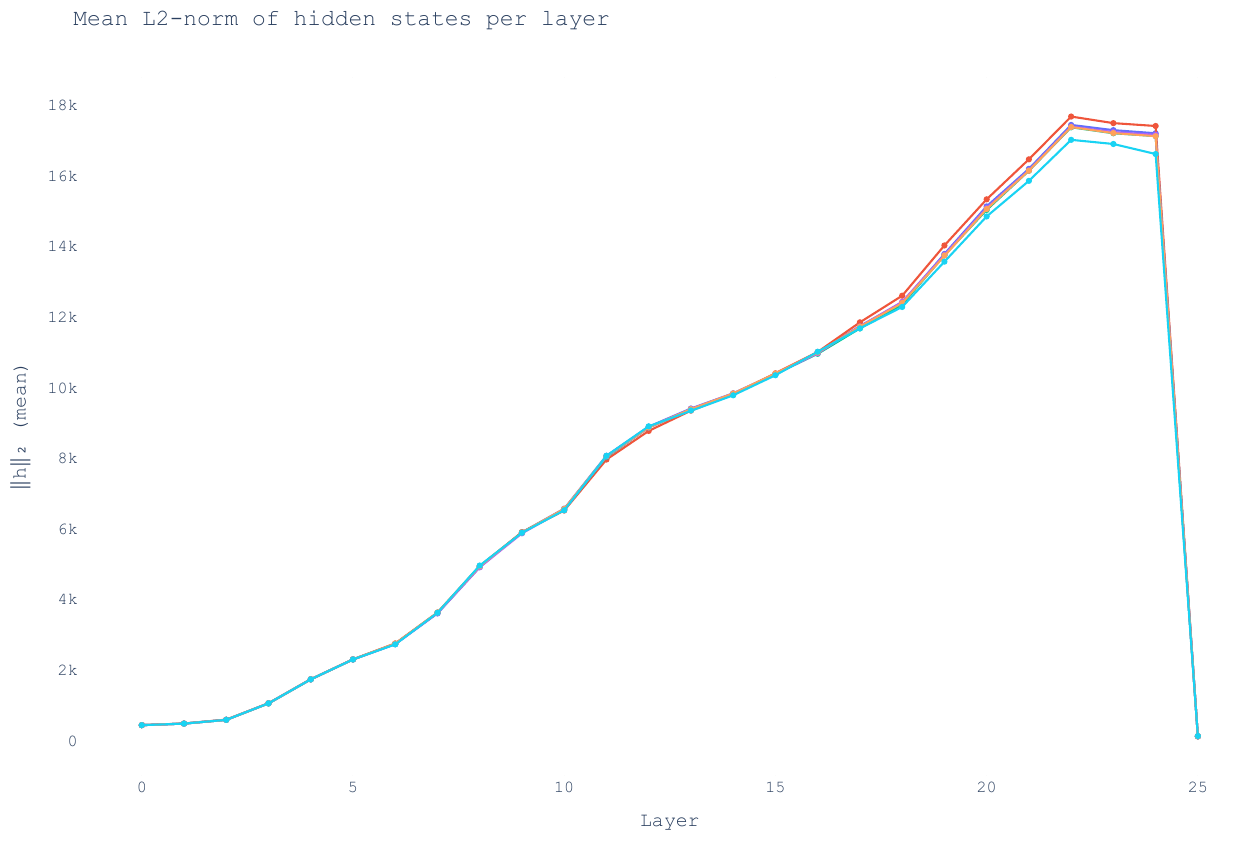}
        \caption{Mean L2 Norm}
        \label{fig:Gemma-3-1b-it-MeanL2Norm}
    \end{subfigure}\hfill
    \begin{subfigure}[t]{0.48\textwidth}
        \includegraphics[width=\linewidth]{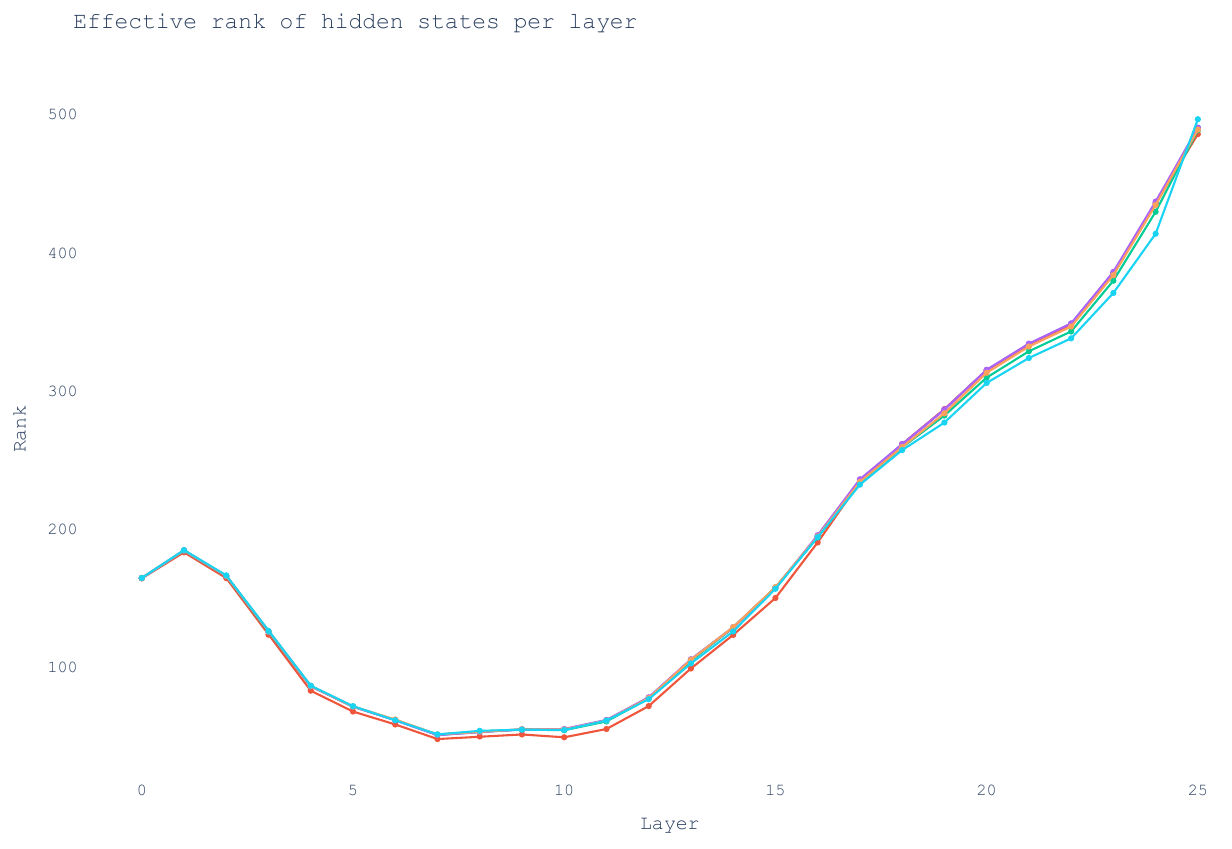}
        \caption{Rank}
        \label{fig:Gemma-3-1b-it-Rank}
    \end{subfigure}

    \caption{Layerwise metrics for Gemma-3-1b-it: 
    }
    \label{fig:Gemma-3-1b-it-layerwise}
\end{figure}

\begin{figure}[!t]
    \centering
    \begin{subfigure}[t]{0.96\textwidth}
        \includegraphics[width=\linewidth]{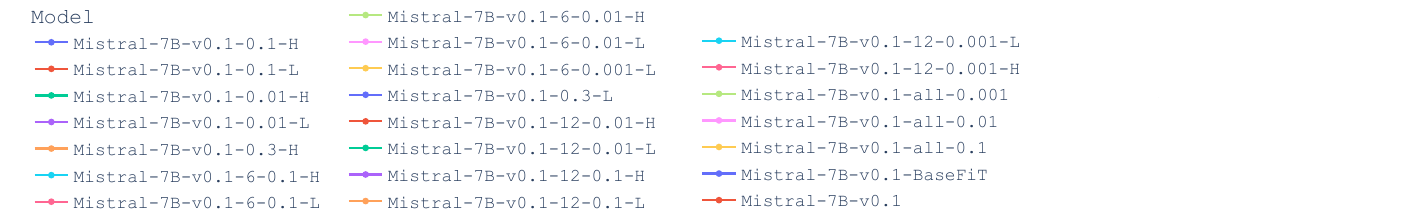}
    \end{subfigure}\hfill
    \begin{subfigure}[t]{0.48\textwidth}
        \includegraphics[width=\linewidth]{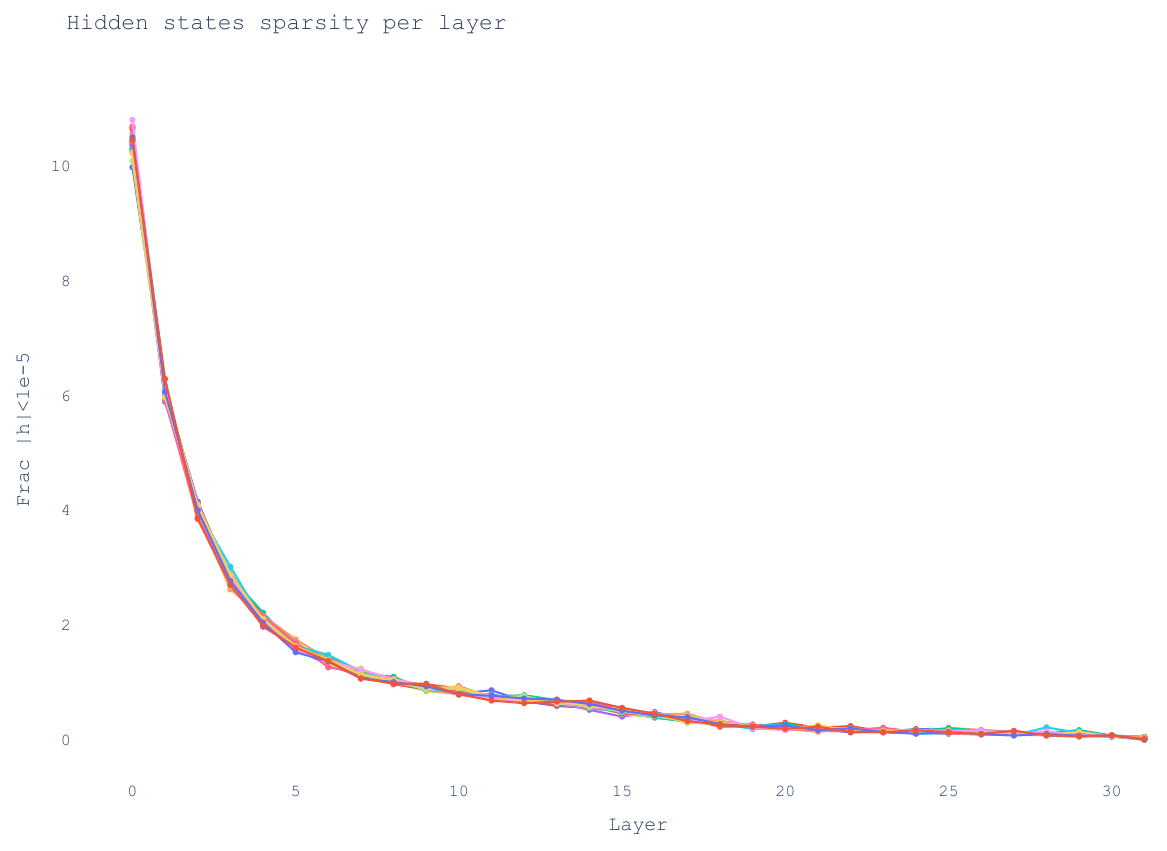}
        \caption{Sparsity}
        \label{fig:Mistral-7B-v0.1-Sparsity}
    \end{subfigure}\hfill
    \begin{subfigure}[t]{0.48\textwidth}
        \includegraphics[width=\linewidth]{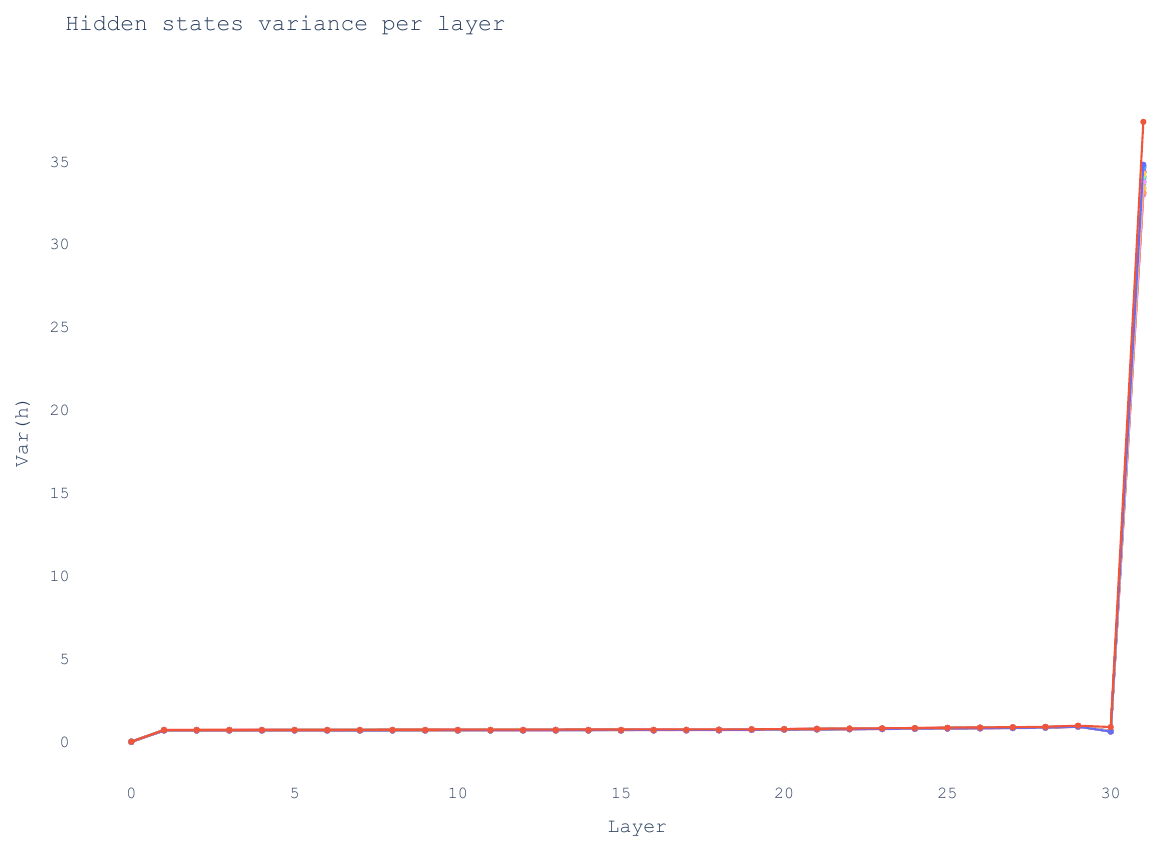}
        \caption{Variance}
        \label{fig:Mistral-7B-v0.1-Variance}
    \end{subfigure}

    \vspace{1em}

    \begin{subfigure}[t]{0.48\textwidth}
        \includegraphics[width=\linewidth]{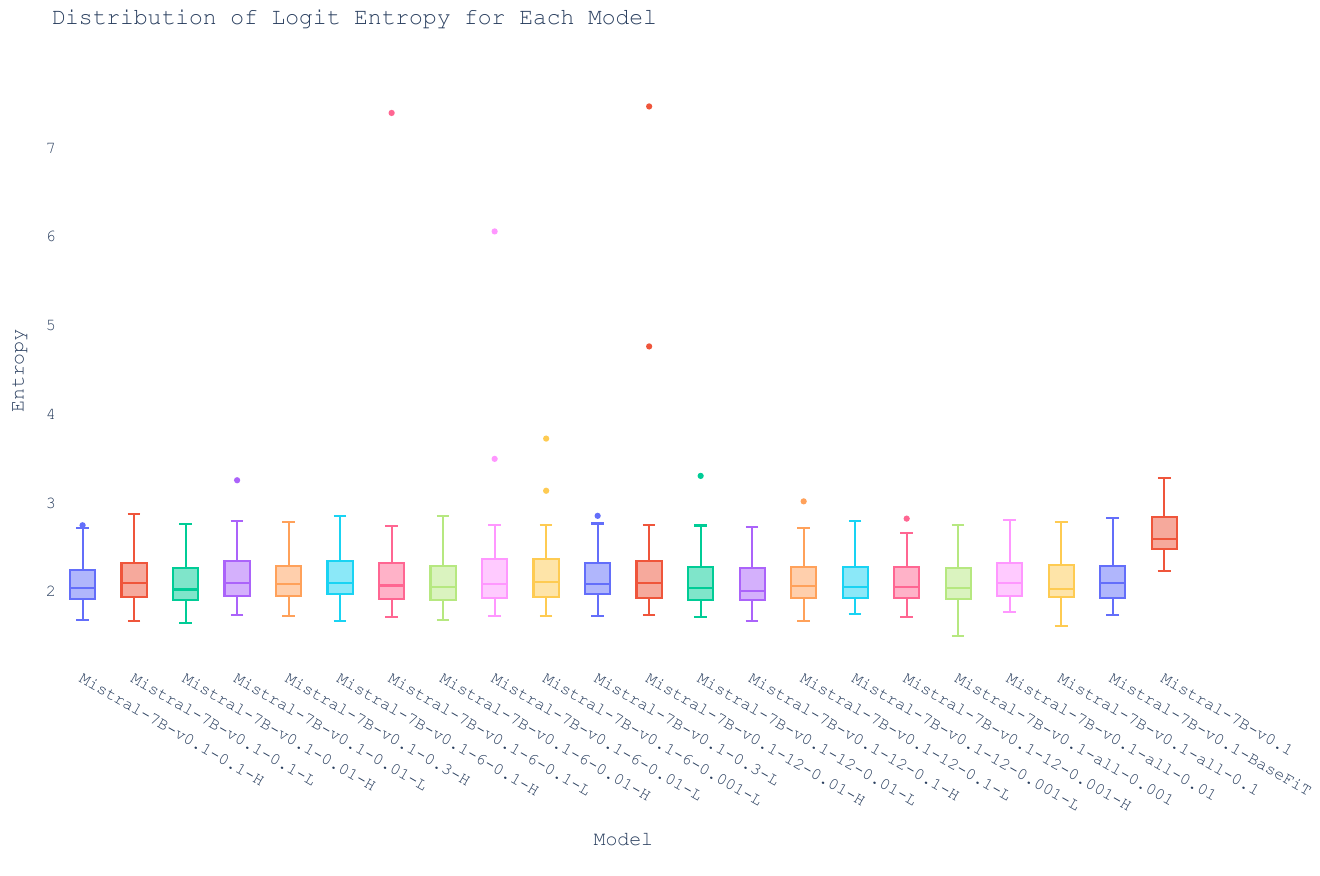}
        \caption{Logit Entropy}
        \label{fig:Mistral-7B-v0.1-LogitEntropy}
    \end{subfigure}\hfill
    \begin{subfigure}[t]{0.48\textwidth}
        \includegraphics[width=\linewidth]{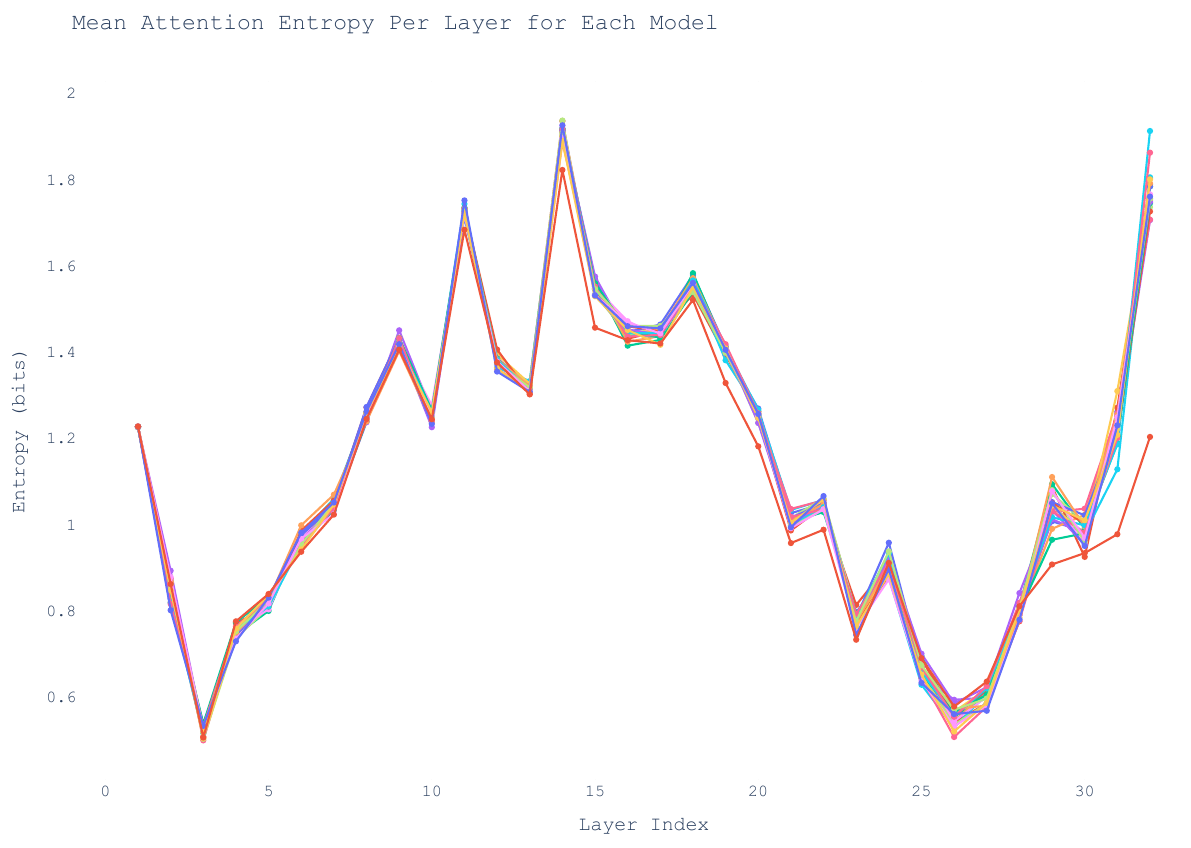}
        \caption{Attention Entropy}
        \label{fig:Mistral-7B-v0.1-AttentionEntropy}
    \end{subfigure}

    \vspace{1em}

    \begin{subfigure}[t]{0.48\textwidth}
        \includegraphics[width=\linewidth]{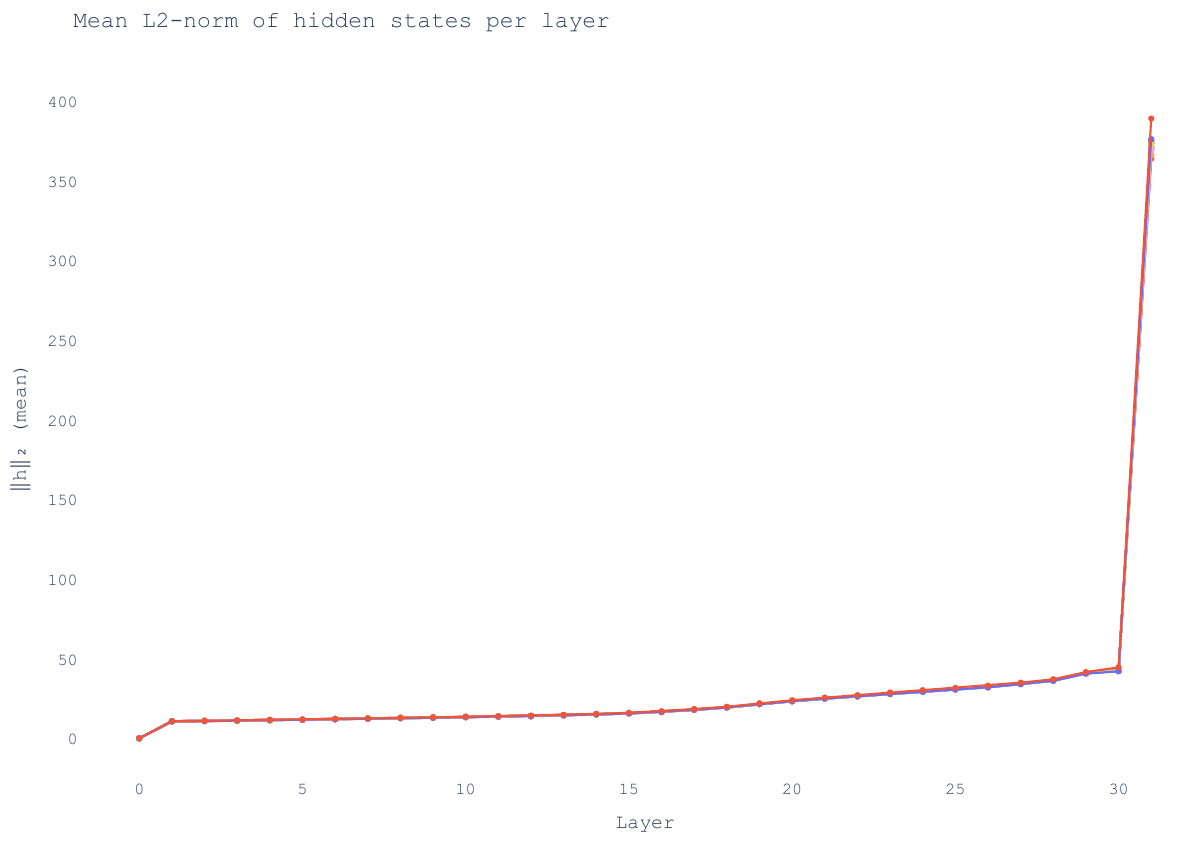}
        \caption{Mean L2 Norm}
        \label{fig:Mistral-7B-v0.1-MeanL2Norm}
    \end{subfigure}\hfill
    \begin{subfigure}[t]{0.48\textwidth}
        \includegraphics[width=\linewidth]{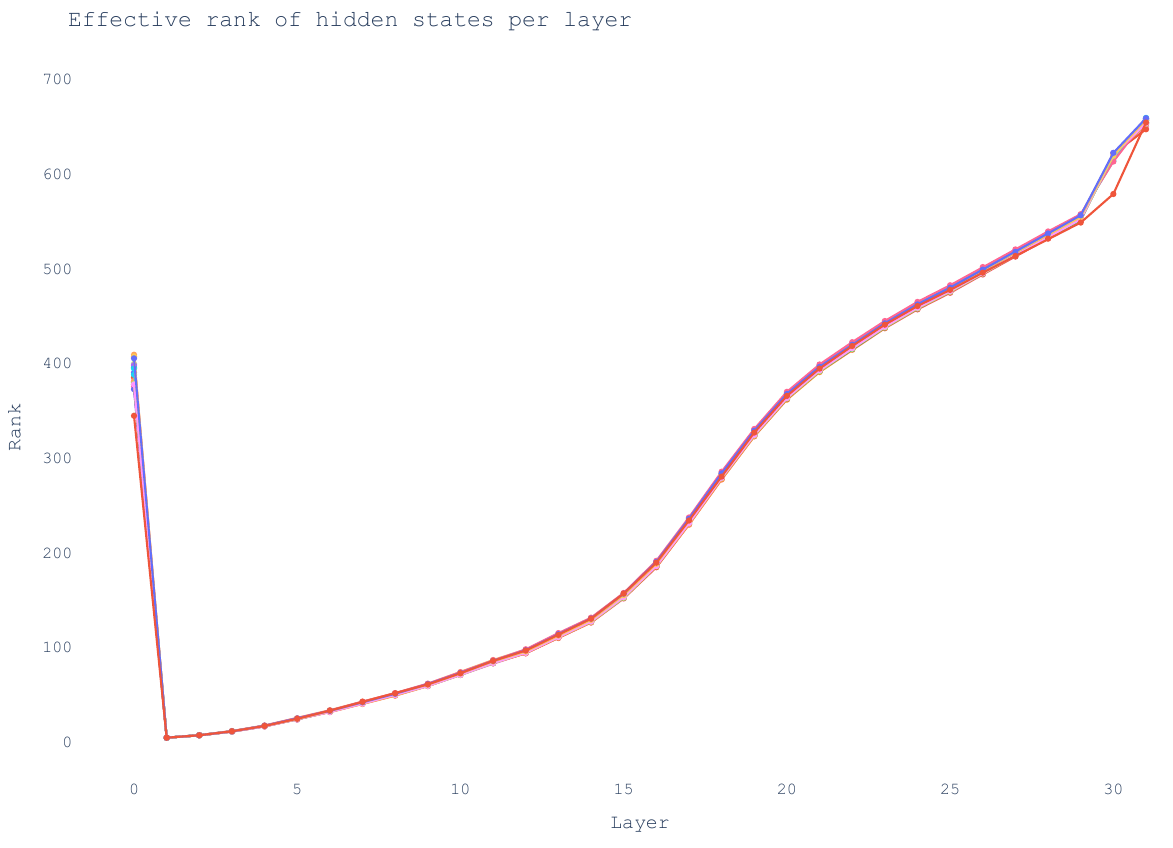}
        \caption{Rank}
        \label{fig:Mistral-7B-v0.1-Rank}
    \end{subfigure}

    \caption{Layerwise metrics for Mistral-7B-v0.1: 
    }
    \label{fig:Mistral-7B-v0.1-layerwise}
\end{figure}

\end{document}